\newtheorem{lemma}{Lemma}
\newtheorem{theorem}{Theorem}
\newtheorem{assumption}{Assumption}
\newcommand{\Identity}{{\rm I\kern-.2em l}}
\newcommand{\Expect}{\mathbb{E}}
\begin{document}

\title{Model Pruning Enables Efficient Federated Learning on Edge Devices}

\author{Yuang~Jiang,
        Shiqiang~Wang,
        V\'ictor~Valls,
        Bong~Jun~Ko,
        Wei-Han~Lee,
        Kin~K.~Leung,
        Leandros~Tassiulas
\thanks{
    Y. Jiang, V. Valls, and L. Tassiulas are with Yale University, New Haven, CT, USA.
    
    S. Wang and W.-H. Lee are with IBM T. J. Watson Research Center, Yorktown Heights, NY, USA.
    
    B. J. Ko was with Stanford Institute for Human-Centered Artificial Intelligence (HAI), Stanford, CA, USA when contributing to this work.
    
    K. K. Leung is with Imperial College London, UK.
    
    Contact authors: Y. Jiang (\mbox{yuang.jiang@yale.edu}) and S. Wang (\mbox{wangshiq@us.ibm.com})
    
    Accepted for publication in IEEE Transactions on Neural Networks and Learning Systems (TNNLS)
}
}

\maketitle

\begin{abstract}
Federated learning (FL) allows model training from local data collected by edge/mobile devices while preserving data privacy, which has wide applicability to image and vision applications. A challenge is that client devices in FL usually have much more limited computation and communication resources compared to servers in a datacenter. To overcome this challenge, we propose \emph{PruneFL} -- a novel FL approach with adaptive and distributed parameter pruning, which adapts the model size during FL to reduce both communication and computation overhead and minimize the overall training time, while maintaining a similar accuracy as the original model. PruneFL includes initial pruning at a selected client and further pruning as part of the FL process. The model size is adapted during this process, which includes maximizing the approximate empirical risk reduction divided by the time of one FL round. Our experiments with various datasets on edge devices (e.g., Raspberry Pi) show that: (i) we significantly reduce the training time compared to conventional FL and various other pruning-based methods; (ii) the pruned model with automatically determined size converges to an accuracy that is very similar to the original model, and it is also a lottery ticket of the original model.
\end{abstract}

\begin{IEEEkeywords}
Efficient training, federated learning, model pruning.
\end{IEEEkeywords}

\IEEEpeerreviewmaketitle

\section{Introduction}\label{sec:intro}

\IEEEPARstart{T}{he} past decade has seen a rapid development of machine learning algorithms and applications, particularly in the area of deep neural networks (DNNs)~\cite{Goodfellow-et-al-2016}. However, a huge volume of training data is usually required to train accurate models for complex tasks such as image classification and computer vision. Due to limits in data privacy regulations and communication bandwidth, it is usually infeasible to transmit and store all training data at a central location. To address this problem, \emph{federated learning} (FL) has emerged as a promising approach of distributed model training from decentralized data~\cite{mcmahan2017communication,li2019federated,park2019wireless,yang2019federated,kairouz2019advances}. In a typical FL system, data is collected by client devices (e.g., cameras) at the network edge; the training process includes local model updates using each client's own data and the fusion of all clients' models typically through a server. In this way, the raw data remains local in clients.

Client devices in FL are usually much more resource-constrained than server machines in a datacenter, in terms of computation power, communication bandwidth, memory and storage size, etc. Training DNNs that can include over millions of parameters (weights) on such resource-limited edge devices can take prohibitively long and consume a large amount of energy.
Therefore, a natural question is: \emph{how can we perform FL efficiently so that a model is trained within a reasonable amount of time and energy?}

Some progress has been made towards this direction recently using model/gradient compression techniques, where instead of training the original model with full parameter vector, either a small model is extracted from the original model for training or a compressed parameter vector (or its gradient) is transmitted in the fusion stage~\cite{konevcny2016federated,caldas2018expanding,han2020adaptive,xu2019elfish}. However, the former approach may reduce the accuracy of the final model in undesirable ways, whereas the latter approach only reduces the communication overhead and does not generate a small model for efficient computation. Furthermore, how to adapt the compressed model size for the most efficient training remains a largely unexplored area, which is a challenging problem due to unpredictable training dynamics and the need of obtaining a good solution in a short time with minimal overhead.

To overcome these problems, we propose a new FL paradigm called \emph{PruneFL}, which includes \emph{adaptive and distributed parameter pruning} as part of the FL procedure.  
We make the following key contributions.

\textbf{Distributed pruning.}  PruneFL includes initial pruning at a selected client followed by further distributed pruning that is intertwined with the standard FL procedure. Our experimental results show that this method outperforms alternative approaches that either  prunes at a single client only or directly involves multiple FL clients for pruning, especially when the clients have heterogeneous data statistics and computational power.

\textbf{Adaptive pruning.} PruneFL continuously ``tracks'' a model that is small enough for efficient transmission and computation with low memory footprint, while maintaining useful connections and their parameters so that the model converges to a similar accuracy as the original model. The importance of model parameters evolves during training, so our method continuously updates which parameters to keep and the corresponding model size. The update follows an objective of minimizing the time of reaching intermediate and final loss values. Each FL round operates on a small pruned model, which is efficient. A small model is also obtained at any time during and after the FL process for efficient inference on edge devices, which is a lottery ticket~\cite{frankle2018the} of the original model as we show experimentally. 

\textbf{Implementation.} We implement FL with model pruning on real edge devices, where we extend a deep learning framework to support efficient sparse matrix computation. Our code is available at: \url{https://github.com/jiangyuang/PruneFL}

\section{Related Work}\label{sec:relatedwork}

\textbf{Neural network pruning.}
To reduce the complexity of neural network models, different ways of parameter pruning were proposed in the literature. Early work considered approximation using second-order Taylor expansion~\cite{lecun1990optimal}. However, the computation of Hessian matrix has high complexity which is infeasible for modern DNNs.  In recent years, magnitude-based pruning has become popular~\cite{han2015learning}, where parameters with small enough magnitudes are removed from the network. A finding that suggests a network that is pruned by magnitude consists of an optimal substructure of the original network, known as ``lottery ticket hypothesis'', was presented in~\cite{frankle2018the,morcos2019one}. It shows that directly training the pruned network can reach a similar accuracy as pruning a pre-trained original network. 

In addition to the above approaches that train until convergence before the next pruning step, there are iterative pruning methods where the model is pruned after every few steps of training~\cite{narang2017exploring,zhu2017prune}. There are also one-shot pruning approaches including SynFlow~\cite{tanaka2020synflow} that prunes the model at model initialization (before training), and SNIP~\cite{lee2018snip} that prunes the model using first training round's gradient information. A dynamic pruning approach that allows the network to grow and shrink during training was proposed in~\cite{Lin2020Dynamic}. Besides these unstructured pruning methods, structured pruning was also studied~\cite{anwar2017structured}, which, however, often requires specific network architectures and does not conform to the lottery ticket hypothesis. The lottery ticket is useful for retraining a pruned model on a different yet similar dataset~\cite{morcos2019one}. The use of pruning for efficient model training was discussed in~\cite{lym2019prunetrain}, where the optimal choice of pruning rate (or final model size) remained unstudied.

These existing pruning techniques consider the centralized setting with full access to training data, which is fundamentally different from our PruneFL that works with decentralized datasets at local clients. Furthermore, the automatic adaptation of model size has not been studied before.

\textbf{Efficient federated learning.}
The first FL method is known as federated averaging (\mbox{FedAvg})~\cite{mcmahan2017communication}, where each ``round'' of training includes multiple local gradient computation steps on each client's local data, followed by a parameter averaging step through a server. This method can be shown to converge in various settings including when the data at different clients are non-identically distributed (non-IID)~\cite{yu2019parallel,Li2020On,wang2020local}. 

To improve the communication efficiency of FL, methods for optimizing the communication frequency were studied~\cite{wang2019MLSys,wang2018edge,karimireddy2019scaffold}. An approach of parameter averaging using structured, sketched, and quantized updates was introduced in~\cite{konevcny2016federated}, which belongs to the broader area of gradient compression/sparsification~\cite{pmlr-v97-karimireddy19a,alistarh2018convergence,shi2019convergence,peng2018linear,du2018efficient,li2020lotteryfl}. These techniques usually consider a fixed degree of sparsity or compression that needs to be configured as a hyperparameter. An online learning approach that determines a near-optimal gradient sparsity was proposed in~\cite{han2020adaptive}, which includes exploration steps that may slow down the training initially. This body of work does not address computation efficiency.

To reduce both communication and computation costs, efficient FL techniques using lossy compression and dropout were developed \cite{caldas2018expanding,xu2019elfish}, where the final model still has the original size and hence providing no benefit for efficient inference after the model is trained. 
Moreover, because the main goal of pruning is to remove less important weights from the model, it is orthogonal to other acceleration methods such as quantization~\cite{quantization_gupta2015deep}, low-rank decomposition~\cite{jaderberg2014lowrank}, etc., and pruning can be applied together with these other methods. In addition, since our approach considers acceleration for both training and inference, methods that accelerate inference only, e.g., knowledge distillation~\cite{hinton2015distilling}, runtime neural pruning~\cite{lin2017runtime},
and DNN partitioning and offloading~\cite{hu2019dnnpartition}, do not serve our purpose. There are other distributed training methods, such as split learning~\cite{vepakomma2018splitlearning}, which are beyond our scope since we focus on FL in this paper.

Furthermore, most existing studies on FL are based on simulation. Only a few recent papers considered implementation on real embedded devices~\cite{wang2018edge,xu2019elfish}, but they do not include parameter pruning.

\textbf{Novelty of our work.} The uniqueness of PruneFL is that we jointly address communication and computation efficiency for both training and inference phases, by extending FedAvg with minimal extra overhead. Our two-stage distributed pruning method is designed to address both data (statistical) and device (system) heterogeneity including non-IID data distribution. 
Our adaptive pruning method is uniquely based on gradient information, which does not require sharing clients' local data, so that existing privacy preservation and secure aggregation~\cite{bonawitz2017practical} methods for FL can be directly applied to the gradient. Thus, our approach does not introduce extra privacy concerns.

\textbf{Roadmap.} The remainder of this paper is organized as follows. Section~\ref{sec:preliminary} provides preliminaries of FL and model pruning. Section~\ref{sec:proposed_approach} presents the proposed PruneFL approach and its analysis. Implementation challenges are discussed in Section~\ref{sec:implementation}. Section~\ref{sec:experiments} presents the experimental setup and results. Section~\ref{sec:conclusion} draws conclusion.

\begin{figure*}
    \centering
    \includegraphics[scale=0.125]{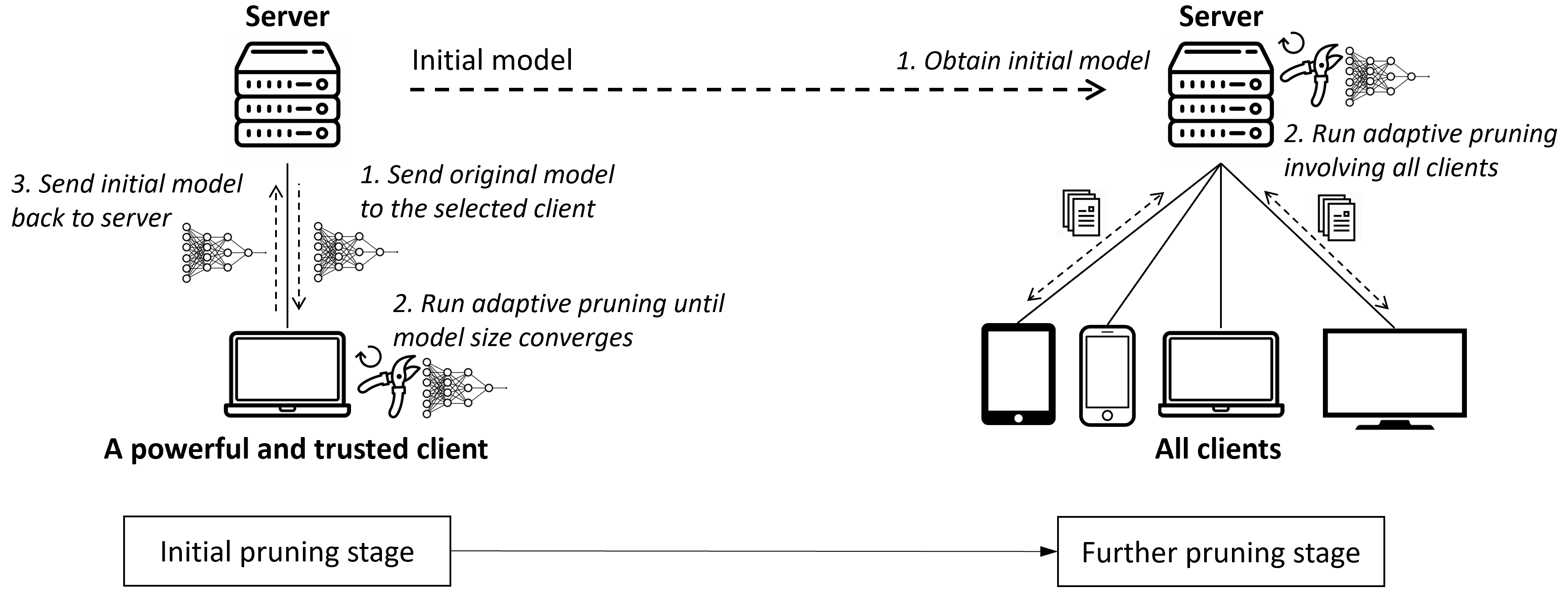}
    \caption{Illustration and flowchart of PruneFL.}
    \vspace{-3mm}
    \label{fig:prunefl_illu}
\end{figure*}

\section{Preliminaries}
\label{sec:preliminary}

\textbf{Federated learning.}
We consider an FL system with $N$ clients. Each client $n \in [N] := \{1,2,...,N\}$ has a local empirical risk $F_n(\mathbf{w}) := \frac{1}{D_n} \sum_{i \in \mathcal{D}_n} f_i(\mathbf{w})$ defined on its local dataset $\mathcal{D}_n$ ($D_n := \left| \mathcal{D}_n \right|$) for model parameter vector 
$\mathbf{w}$, where $f_i(\mathbf{w})$ is the loss function (e.g., cross-entropy, mean square error, etc.) that captures the difference between the model output and the desired output of data sample $i$. The system tries to find a parameter $\mathbf{w}$ that minimizes the global empirical risk:
\begin{equation}
    \min_\mathbf{w} \,\,\, F(\mathbf{w}) := \sum_{n \in [N]} \, p_n F_n(\mathbf{w})
\label{eq:globalEmpiricalRiskFromLocalRisk}
\end{equation}
where $p_n > 0$ are weights such that $\sum_{n \in [N]} p_n = 1$. For example, if $\mathcal{D}_n \cap \mathcal{D}_{n'} = \emptyset$ for $n \neq n'$ and $p_n = {D_n}/{D}$ with $\mathcal{D} := \bigcup_n \mathcal{D}_n$ and $D := |\mathcal{D}|$,  we have $F(\mathbf{w}) = \frac{1}{D} \sum_{i \in \mathcal{D}} f_i(\mathbf{w})$. Other ways of configuring $p_n$ may also be used to account for fairness and other objectives~\cite{Li2020Fair}.

In FL, each client $n$ has a local parameter $\mathbf{w}_n(k)$ in iteration $k$. The aggregation of these local parameters is defined as $\mathbf{w}(k) := \sum_{n \in [N]} p_n \mathbf{w}_n(k)$.
The FL procedure usually involves multiple updates of $\mathbf{w}_n(k)$ using stochastic gradient descent (SGD) on the local empirical risk $F_n(\mathbf{w}_n(k))$ computed by every client $n$, followed by a parameter fusion step that involves the server collecting clients' local parameters $\{\mathbf{w}_n(k) : \forall n\in[N]\}$ and computing the aggregated parameter $\mathbf{w}(k)$. After parameter fusion, the local parameters $\{\mathbf{w}_n(k) : \forall n\in[N]\}$ are all set to be equal to the aggregated parameter $\mathbf{w}(k)$.

In the following, we call this procedure of multiple local SGD iterations followed by a fusion step a \emph{round}. We use $I$ to denote the number of local SGD iterations in each round. The main notations in this paper are listed in Table~\ref{tab:main_notations}.

{\renewcommand{\arraystretch}{1.35}
\begin{table}[t]
\caption{Main notations.}
\label{tab:main_notations}
\centering
\begin{tabular}{c||c}
\hline
Notation & Definition \\ \hline \hline
$\odot$ & element-wise product of two vectors \\\hline
$n, N$ & client index, total number of clients \\\hline
$k, K$ & iteration index, total number of iterations \\\hline
$I$ & number of local iterations\\\hline
$p_n$ & weight for client $n$ ($p_n>0$, $\forall n$, and $\sum_n p_n = 1$)\\\hline
$\mathbf{m}(k)$ & weight mask in iteration $k$ (universal for all clients)\\\hline
$\mathbf{w}_n(k)$ & client $n$'s parameter in iteration $k$ \\\hline
${\mathbf{w}}(k)$ & $\mathbf{w}(k) := \sum_{n=1}^N p_n \mathbf{w}_n(k)$ \\\hline
$\mathbf{w}'_n(k)$, ${\mathbf{w}}'(k)$ & $\mathbf{w}'_n(k) = \mathbf{w}_n(k) \odot \mathbf{m}(k)$, $\mathbf{w}'(k) = \mathbf{w}(k) \odot \mathbf{m}(k)$\\\hline
$\mathbf{g}_n (\mathbf{w})$ & client $n$'s stochastic gradient with parameter $\mathbf{w}$ \\\hline
$\nabla F_n(\mathbf{w})$ & client $n$'s expected gradient with parameter $\mathbf{w}$\\\hline
$\nabla F(\mathbf{w})$ & $\nabla F(\mathbf{w}) = \sum_{n=1}^N p_n F_n(\mathbf{w})$\\\hline
\end{tabular}
\end{table}
}

It is possible that each round only involves a subset of clients, to avoid excessive delay caused by waiting for all the clients~\cite{bonawitz2019towards}. It has been shown that FedAvg converges even with random client participation, although the convergence rate is related to the degree of such randomness~\cite{Li2020On}.

\textbf{Model pruning.}
In the iterative training and pruning approaches  for the centralized machine learning setting, the model is first trained using SGD for a given number of iterations~\cite{han2015learning,narang2017exploring,zhu2017prune}. Then, a certain percentage (referred to as the pruning rate) of weights that have smallest absolute values within each layer is removed (set to zero). This training and pruning process is repeated until a desired model size is reached. The benefit of this approach is that the training and pruning occurs at the same time, so that a trained model with a desired (small) size can be obtained in the end. However, existing pruning techniques require the availability of training data at a central location, which is not applicable to FL.

\section{PruneFL}
\label{sec:proposed_approach}
Our proposed PruneFL approach includes two stages: initial pruning at a selected client and further pruning involving both the server and clients during the FL process. The initial pruning can be done with biased data at a single client that has a relatively high computational capability, and the further pruning stage will ``remove'' the bias and refine the model. We use \textit{adaptive pruning} in both stages. The illustration of the overall procedure is presented in Fig.~\ref{fig:prunefl_illu}. In the following, we introduce the two pruning stages (Section~\ref{sec:distributed-pruning}) and adaptive pruning (Section~\ref{sec:adaptive_pruning}).

\subsection{Two-stage Distributed Pruning}
\label{sec:distributed-pruning}

\textbf{Initial pruning at a selected client.}
Before FL starts, the system selects a single client to prune the model using its local data. This is important for two reasons. First, it allows us to start the FL process with a small model, which can significantly reduce the computation and communication time of each FL round. Second, when clients have heterogeneous computational capabilities, the selected client for initial pruning can be one that is powerful and trusted, so that the time required for initial pruning is short. We apply the adaptive pruning procedure that we describe in Section~\ref{sec:adaptive_pruning}, where we adjust the original model iteratively while training the model on the selected client's local dataset, until the model size remains almost unchanged.

\textbf{Further pruning during FL process.}
The model produced by initial pruning may not be optimal, because it is obtained based on data at a single client. However, it is a good starting point for the FL process involving all clients.
During FL, we perform further adaptive pruning together with the standard FedAvg procedure, where the model can either grow or shrink depending on which way makes the training most efficient. In this stage, data from all participating clients are involved.

\subsection{Adaptive Pruning}\label{sec:adaptive_pruning}
\begin{algorithm}[t]
\caption{Adaptive pruning}
\label{alg:adaptive_pruning}
\For{$k = 0 \ldots, K - 1$}{
    Initialize the set of importance measure on each client: $\mathcal{Z}_n \leftarrow \emptyset, \forall n$;\\
    \For{each client $n$, in parallel}{
    Compute stochastic gradient $\mathbf{g}_n(\mathbf{w}'_n(k)) := \mathbf{g}_n(\mathbf{w}_n(k) \odot \mathbf{m}(k))$;\\
    Update local parameters: $\mathbf{w}_n(k+1) \leftarrow \mathbf{w}'_n(k) - \eta \mathbf{g}_n(\mathbf{w}_n'(k)) \odot \mathbf{m}(k)$;\\
    Add importance measure $\mathbf{z}_n$ to $\mathcal{Z}_n$:
    $\mathbf{z}_n:=\mathbf{g}_n(\mathbf{w}_n'(k)) \odot \mathbf{g}_n(\mathbf{w}_n'(k))$;
    $\mathcal{Z}_n \leftarrow \mathcal{Z}_n \cup \mathbf{z}_n$;
    }
    
    \If{$I \mid k+1$}{
    Each client $n$ sends $\mathbf{w}_n(k+1)$ to the server;\\
    Server aggregates the parameters from each client: $\mathbf{w}(k+1) \leftarrow \sum_{n=1}^N p_n \mathbf{w}_n(k+1)$;\\
    \If{$k+1$ is reconfiguration iteration}{
    Each client sends the averaged importance measure $\overline{\mathbf{z}}_n:= \Big( \sum_{\mathbf{z}_n\in \mathcal{Z}_n} \mathbf{z}_n \Big) / |\mathcal{Z}_n|$ to the server;\\
    Server aggregates the received importance measure: $\mathbf{z} \leftarrow \sum_{n=1}^N p_n \overline{\mathbf{z}}_n$;\\
    Reconfigure using Algorithm~\ref{alg:greedySearchNewLinear}: $\mathbf{w}'(k+1), \mathbf{m}(k + 1) \leftarrow \text{reconfigure}(\mathbf{w}(k+1), \mathbf{z})$;\\
    Reset: $\mathcal{Z}_n \leftarrow \emptyset, \forall n$;\\
    }
    \Else{
    No reconfiguration: $\mathbf{w}'(k+1) \leftarrow \mathbf{w}(k+1)$;
    }
    Server sends new parameters to each client: $\mathbf{w}'_n(k+1) \leftarrow \mathbf{w}'(k+1), \forall n$;
    }
}
\end{algorithm}
For our adaptive pruning method, the notion of pruning broadly includes both removing and adding back parameters. Hence, we also refer to such pruning operations as \emph{reconfiguration}. We reconfigure the model at a given interval of multiple iterations. For initial pruning, the reconfiguration interval can be any number of local iterations at the selected client. For further pruning, reconfiguration is done at the server after receiving parameter updates from clients (i.e., at the boundary between two rounds), and the reconfiguration interval in this case is always an integer multiple of the number of iterations (i.e., $I$) in each round.

\textbf{Definitions.} Let $k$ denote the iteration index, $\mathbf{g}_n(\mathbf{w}(k))$ denote the stochastic gradient of $F_n(\mathbf{w}(k))$ evaluated at $\mathbf{w}(k)$ and computed on the full parameter space on client $n$. 
Also, let $\mathbf{m}_\mathbf{w}(k)$ denote a mask vector that is zero if the corresponding component in $\mathbf{w}(k)$ is pruned and one if not pruned, and $\odot$ denote the element-wise product.

In each reconfiguration step, adaptive pruning finds an optimal set of remaining (i.e., not pruned) model parameters. %
Then, parameters are pruned or added back accordingly, and the resulting model and mask are used for training until the next reconfiguration step. This procedure is illustrated in Algorithm~\ref{alg:adaptive_pruning}, where $a \mid b$ denotes that $a$ divides $b$, i.e., $b$ is an integer multiple of $a$. More details are given in Appendix~\ref{appendix:implementation_details}.

Our goal is to find the subnetwork that learns the ``fastest''. We do so by estimating the empirical risk reduction divided by the time required for completing an FL round, for any given subset of parameters chosen to be pruned. Note that at the beginning of each round (after averaging the clients' parameters), all clients start with the same parameter vector $\mathbf{w}$, i.e., $\mathbf{w}_n(k) = \mathbf{w}(k)$ for all $n$ if a new round starts at iteration $k$ (see Section~\ref{sec:preliminary}). For approximation purpose, we first consider the change of empirical risk after \textit{one SGD iteration} starting with a \textit{common} parameter $\mathbf{w}(k)$, for both initial and further pruning stages. The full FL procedure will be considered later in Theorem~\ref{theorem:convergence}.

When the model is reconfigured at the end of iteration $k$, parameter update in the next iteration will be done on the reconfigured parameter $\mathbf{w}'(k)$, so we have an SGD update step that follows:
\begin{align}
\mathbf{w}(k+1) &= \mathbf{w}'(k) - \eta \sum_{n=1}^N p_n \mathbf{g}_n(\mathbf{w}'(k)) \odot \mathbf{m}_{\mathbf{w}'}(k) \nonumber \\
& = \mathbf{w}'(k) - \eta \mathbf{g}(\mathbf{w}'(k)) \odot \mathbf{m}(k)
\label{eq:SGDUpdate}
\end{align}
where $\eta$ is the learning rate. For simplicity, we define $\mathbf{g}(\mathbf{w}'(k)) := \sum_{n=1}^N p_n \mathbf{g}_n(\mathbf{w}'(k))$, and we omit the subscript $\mathbf{w}'$ of $\mathbf{m}$ in the following when it is clear from the context. 

Let $\mathcal{M}$ denote the index set of components that are not pruned, which corresponds to the indices of all non-zero values of the mask $\mathbf{m}(k)$.

\textbf{Empirical risk reduction.} 
To analyze the empirical risk reduction, we use a first-order approximation, which is a common practice in the literature \cite{lee2018snip,wang2019picking,molchanov2016pruning}.
We have
\begin{align}
    & {F(\mathbf{w}(k+1))} \nonumber\\
    & \quad\approx F(\mathbf{w}'(k)) + \langle\nabla F(\mathbf{w}'(k)) , \mathbf{w}(k+1) - \mathbf{w}'(k)\rangle \label{eq:risk-reduction-derivation1} \\
    & \quad = F(\mathbf{w}'(k)) - \eta \langle\nabla F(\mathbf{w}'(k)) , \mathbf{g}(\mathbf{w}'(k)) \odot \mathbf{m}(k)\rangle \label{eq:risk-reduction-derivation2} \\
    & \quad\approx F(\mathbf{w}(k)) - \eta \Vert \mathbf{g}(\mathbf{w}'(k)) \odot \mathbf{m}(k) \Vert^2 \label{eq:risk-reduction-derivation3}
\end{align}
where $\langle\cdot , \cdot\rangle$ is the inner product, (\ref{eq:risk-reduction-derivation1}) is from Taylor expansion, (\ref{eq:risk-reduction-derivation2}) is because of (\ref{eq:SGDUpdate}), and (\ref{eq:risk-reduction-derivation3}) is obtained by using the stochastic gradient to approximate the actual gradient, i.e., $\mathbf{g}(\mathbf{w}'(k)) \approx \nabla F(\mathbf{w}'(k))$. Then, the approximate decrease of empirical risk after the SGD step (\ref{eq:SGDUpdate}) is:
\begin{align}
F(\mathbf{w}'(k)) - F(\mathbf{w}(k+1)) &\approx \eta \Vert \mathbf{g}(\mathbf{w}'(k)) \odot \mathbf{m}(k) \Vert^2 \nonumber \\ 
& \propto \Vert \mathbf{g}(\mathbf{w}'(k)) \odot \mathbf{m}(k) \Vert^2 \nonumber \\ 
& = \sum_{j \in \mathcal{M}} g_j^2 =:\Delta(\mathcal{M})
\label{eq:riskDecayApproximation}
\end{align}
where $g_j$ is the $j$-th component of $\mathbf{g}(\mathbf{w}'(k))$ and we define the set function $\Delta(\mathcal{M})$ in the last line. The learning rate $\eta$ is omitted since it is independent to the relative importance between parameter components, no matter if it is constant or varying. We use $\Delta(\mathcal{M})$ as the \emph{approximate risk reduction}, where we ignore the proportionality coefficient because our optimization problem is independent of the coefficient. As $\Delta(\mathcal{M})$ is defined as the sum of $g^2_j$ in (\ref{eq:riskDecayApproximation}), we use $g^2_j$ as the \textit{importance measure} for the $j$-th component of the parameter vector.

\emph{Remark.} During the further pruning stage, the stochastic gradient $\mathbf{g}(\mathbf{w}'(k))$ is the aggregated stochastic gradient from clients in FL. Since clients cannot compute $\mathbf{g}(\mathbf{w}'(k))$ before receiving $\mathbf{w}'(k)$ from the server, they compute $\mathbf{g}(\mathbf{w}(k))$ and we use $\mathbf{g}(\mathbf{w}'(k)) \approx \mathbf{g}(\mathbf{w}(k))$, both of which are denoted by $\mathbf{g}(\mathbf{w}'(k))$ with components $\{g_j\}$ in the following. The additional overhead for clients to compute and transmit gradients on the full parameter space in a reconfiguration is small because pruning is done once in many FL rounds (the interval between two reconfigurations is $50$ rounds in our experiments). Further details are given in Appendix~\ref{appendix:implementation_details}.

\textbf{Time of one FL round.}
We define the (approximate) time of one FL round when the model has remaining parameters $\mathcal{M}$ as a set function $T(\mathcal{M}) := c + \sum_{j \in \mathcal{M}} t_j$, where $c \geq 0$ is a fixed constant and $t_j > 0$ is the time corresponding to the $j$-th parameter component. Note that this is a linear function which is sufficient according to our empirical observations (see Appendix~\ref{appendix:validate}). In particular, the quantity $t_j$ has a value that can be dependent on the neural network layer, and $c$ captures a constant system overhead.
From our experiments, we observed that $t_j$ remains the same for all $j$ that belong to the same neural network layer. Therefore, we can estimate the quantities $\{t_j\}$ and $c$ by measuring the time of one FL round for a small subset of different model sizes, before the overall pruning and FL procedure starts.
An extension to the general case with non-linear $T(\mathcal{M})$ is also discussed in Appendix~\ref{appendix:non-linear-T}.

\textbf{Optimization of reconfiguration.} We would like to find the set of remaining parameters $\mathcal{M}$ that maximizes the empirical risk reduction per unit training time. However, $\Delta(\mathcal{M})$ only captures the risk reduction in the \textit{next} SGD step when starting from the reconfigured parameter vector $\mathbf{w}'(k)$, as defined in (\ref{eq:riskDecayApproximation}). It does not capture the change in empirical risk when using $\mathbf{w}'(k)$ instead of the original parameter vector $\mathbf{w}(k)$ before reconfiguration. In other words, in addition to maximizing $\Gamma(\mathcal{M}):= \frac{\Delta(\mathcal{M})}{T(\mathcal{M})}$, we also need to ensure that $F(\mathbf{w}'(k)) \approx F(\mathbf{w}(k))$.

To ensure $F(\mathbf{w}'(k)) \approx F(\mathbf{w}(k))$ after reconfiguration, we define an index set $\overline{\mathcal{P}}$ to denote the parameters that \textit{are not allowed to} be pruned. Usually, $\overline{\mathcal{P}}$ includes parameters whose magnitudes are larger than a certain threshold, because pruning them can cause $F(\mathbf{w}'(k))$ to become much larger than $F(\mathbf{w}(k))$.
Among the remaining parameters that \textit{can} be pruned (or added back if they are already pruned before), denoted by $\mathcal{P}$, we find which of them to prune to maximize $\Gamma(\mathcal{M})$. This yields the following optimization problem:
\begin{equation}
    \max_\mathcal{A \subseteq P} \quad \Gamma \left( \mathcal{A} \cup \overline{\mathcal{P}} \right)
    \label{eq:optimizationProblem}
\end{equation}
where $\mathcal{A}$ is the set of parameters in $\mathcal{P}$ that remain (i.e., are not pruned). The final set of remaining parameters is then $\mathcal{M} = \mathcal{A} \cup \overline{\mathcal{P}}$. Note that $\mathcal{P} \cup \overline{\mathcal{P}}$ is the set of all parameters in the original model.

The algorithm for solving (\ref{eq:optimizationProblem}) is given in
Algorithm \ref{alg:greedySearchNewLinear}, where sorting is in non-increasing order and $\mathcal{S}$ is an ordered set that includes the sorted indices.
In essence, this algorithm sorts the ratios of components in the sums of $\Delta(\mathcal{M})$ and $T(\mathcal{M})$. When the individual ratio ${g_j^2}/{t_j}$ is larger than the current overall ratio $\Gamma$, then adding $j$ to $\mathcal{A}$ increases $\Gamma$. The bottleneck of this algorithm is the sorting operation. Hence, the overall time complexity of this algorithm is $O(|\mathcal{P}|\log |\mathcal{P}|)$.

\begin{algorithm}[t]
\caption{Solving (\ref{eq:optimizationProblem})}
\label{alg:greedySearchNewLinear}
\SetKwInOut{inputdata}{Input}
\SetKwInOut{outputdata}{Output}
\inputdata{importance measure $g^2_j$ and time coefficient $t_j$, for each parameter index $j$}
\outputdata{the optimal subset of parameters $\mathcal{A}$}
$\mathcal{A} \leftarrow \emptyset$;

$\mathcal{S} \leftarrow {\arg\textrm{sort}}_{j \in \mathcal{P}} \,\, \frac{g_j^2}{t_j}$ \tcp*{ordered set}

\For{$j\in \mathcal{S}$}{

\If{$\frac{g_j^2}{t_j} \geq \Gamma \left( \mathcal{A} \cup \overline{\mathcal{P}} \right)$}{
$\mathcal{A} \leftarrow \mathcal{A} \cup \{ j \}$;
}
\Else{\textbf{break};}
}
\textbf{return} $\mathcal{A}$ \tcp*{final result}
\end{algorithm}

\begin{theorem}
We have $\Gamma \left( \mathcal{A} \cup \overline{\mathcal{P}} \right) \geq \Gamma \left( \mathcal{A}' \cup \overline{\mathcal{P}} \right)$, where $\mathcal{A}$ from Algorithm~\ref{alg:greedySearchNewLinear} and $\mathcal{A}'$ is any subset of $\mathcal{P}$ with $\mathcal{A}' \neq \mathcal{A}$.
\label{theorem:linearT}
\end{theorem}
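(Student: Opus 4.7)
The plan is an exchange argument driven by the ratio $r_j := g_j^2/t_j$. Let $\Gamma^* := \Gamma(\mathcal{A} \cup \overline{\mathcal{P}})$ denote the value produced by the algorithm, write $j_1, j_2, \ldots$ for the elements of $\mathcal{P}$ in non-increasing order of $r_j$ (i.e., the order in $\mathcal{S}$), and let $\Gamma_k$ be the running value of $\Gamma$ after the first $k$ candidates have been handled. The core computation is the pivot identity, obtained by direct algebra from $\Gamma = \Delta/T$: for any $\mathcal{M} \supseteq \overline{\mathcal{P}}$ and any $j \notin \mathcal{M}$,
\begin{equation*}
\Gamma(\mathcal{M} \cup \{j\}) - \Gamma(\mathcal{M}) = \frac{t_j\bigl(r_j - \Gamma(\mathcal{M})\bigr)}{T(\mathcal{M}) + t_j}.
\end{equation*}
Hence adding $j$ moves $\Gamma$ strictly toward $r_j$; in particular, $\Gamma(\mathcal{M}\cup\{j\}) \leq r_j$ whenever $r_j \geq \Gamma(\mathcal{M})$, which is exactly the condition under which the algorithm performs an addition.

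Next I would extract a threshold property of the output: every $j \in \mathcal{A}$ satisfies $r_j \geq \Gamma^*$, and every $j \in \mathcal{P}\setminus\mathcal{A}$ satisfies $r_j < \Gamma^*$. The second half is direct from the break condition: at the first index $k^*$ not added, $r_{j_{k^*}} < \Gamma_{k^*-1} = \Gamma^*$, and the sorting of $\mathcal{S}$ gives $r_{j_k} \leq r_{j_{k^*}}$ for all later $k$. The first half uses the pivot bound $\Gamma_m \leq r_{j_m}$ as a running invariant (applied at the step that added $j_m$), combined with the monotone sort, to produce the chain $\Gamma^* = \Gamma_{k^*-1} \leq r_{j_{k^*-1}} \leq r_{j_{k^*-2}} \leq \cdots \leq r_{j_1}$.

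For the optimality step, fix an arbitrary $\mathcal{A}'\subseteq\mathcal{P}$ with $\mathcal{A}' \neq \mathcal{A}$, and set $D_+ := \mathcal{A}'\setminus\mathcal{A}$ and $D_- := \mathcal{A}\setminus\mathcal{A}'$. Using $\Delta(\mathcal{A}\cup\overline{\mathcal{P}}) = \Gamma^* \, T(\mathcal{A}\cup\overline{\mathcal{P}})$ and expanding $\Delta$ and $T$ on $\mathcal{A}'\cup\overline{\mathcal{P}}$ in terms of $D_+$ and $D_-$, the target inequality $\Gamma(\mathcal{A}'\cup\overline{\mathcal{P}}) \leq \Gamma^*$ reduces, after clearing the positive denominator $T(\mathcal{A}'\cup\overline{\mathcal{P}})$, to
\begin{equation*}
\sum_{j \in D_+} t_j (r_j - \Gamma^*) \;\leq\; \sum_{j \in D_-} t_j (r_j - \Gamma^*).
\end{equation*}
By the threshold property and $t_j > 0$, every term on the left is nonpositive and every term on the right is nonnegative, closing the argument.

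The main obstacle is the threshold claim for the selected indices. At the moment $j_k$ was added, the comparison in the algorithm was only against the running $\Gamma_{k-1}$, not the final $\Gamma^*$, so a priori it is conceivable that subsequent additions push $\Gamma^*$ above some earlier $r_{j_k}$. The pivot identity rules this out precisely because it forces $\Gamma_m \leq r_{j_m}$ at every step, which, together with the sorted order of $\mathcal{S}$, pins $\Gamma^*$ below every selected $r_{j_k}$. Once this is in hand, the exchange inequality is routine bookkeeping.
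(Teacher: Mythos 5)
Your proof is correct and follows essentially the same route as the paper's: reduce the target comparison to an exchange inequality over $D_+ = \mathcal{A}'\setminus\mathcal{A}$ and $D_- = \mathcal{A}\setminus\mathcal{A}'$ (the paper packages the denominator-clearing step as Lemma~\ref{lemma:incrementCondition}), then invoke the threshold property that every selected index has ratio $g_j^2/t_j \geq \Gamma^*$ and every unselected one has ratio strictly below $\Gamma^*$. The only difference is that the paper asserts this threshold property as something ``we can easily see,'' whereas you actually derive it via the pivot identity and the running invariant $\Gamma_m \leq r_{j_m}$ combined with the sorted order --- a genuinely useful piece of bookkeeping, since the algorithm's test compares each $r_{j_k}$ only to the running value $\Gamma_{k-1}$ and not to the final $\Gamma^*$.
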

Theorem~\ref{theorem:linearT} shows that the result obtained from our Algorithm~\ref{alg:greedySearchNewLinear} is a global optimal solution to (\ref{eq:optimizationProblem}).

\textbf{Convergence of adaptive pruning.}
As adaptive pruning can both increase and decrease the model size over time, a natural question is whether the model parameter vector will converge to a fixed value in a regular FL procedure with $I\geq 1$ local SGD iterations in each round. We study this problem in the following.

We first make the following minimal set of assumptions that are common in the literature~\cite{yu2019parallel,Li2020On}.

\begin{assumption}\label{assumption:convergence}
\text{}
\begin{enumerate}
    \item[(a)] \textbf{Smoothness}:
    $$
    \left\Vert \nabla F_n(\mathbf{w}_1) - \nabla F_n(\mathbf{w}_2) \right\Vert \leq \beta \left\Vert \mathbf{w}_1 - \mathbf{w}_2 \right\Vert, \forall n, \mathbf{w}_1, \mathbf{w}_2 ,
    $$
    where $\beta$ is a positive constant.
    \item[(b)] \textbf{Lipschitzness}:
    $$
    \left\Vert F(\mathbf{w}_1) - F(\mathbf{w}_2) \right\Vert \leq L \left\Vert \mathbf{w}_1 - \mathbf{w}_2 \right\Vert, \forall \mathbf{w}_1, \mathbf{w}_2 \,\,,
    $$
    where $L$ is a positive constant.
    \item[(c)] \textbf{Unbiasedness}:
    $$
    \Expect \left[ \mathbf{g}_n(\mathbf{w}) \right] = \nabla F_n(\mathbf{w}), \forall n, \mathbf{w}\,\,.
    $$
    \item[(d)] \textbf{Bounded variance}:
    $$
    \Expect \left\Vert \mathbf{g}_n (\mathbf{w}) - \nabla F_n(\mathbf{w}) \right\Vert^2 \leq \sigma^2, \forall n, \mathbf{w}\,\,.
    $$
    \item[(e)] \textbf{Bounded divergence}:
    $$
    \Big\Vert \nabla F(\mathbf{w}) - \nabla F_n(\mathbf{w}) \Big\Vert^2 \leq \epsilon^2 , \forall n,\mathbf{w} \,\,,
    $$
    where $F(\mathbf{w}) := \sum_{n=1}^N p_n F_n (\mathbf{w})$ as defined in (\ref{eq:globalEmpiricalRiskFromLocalRisk}). 
    \item[(f)] \textbf{Time independence in SGD}:
    The stochastic gradients obtained in different iterations are independent from each other.
    \item[(g)] \textbf{Client independence}:
    The stochastic gradients obtained from different clients are always independent from each other, even in the same iteration. 
\end{enumerate}

\end{assumption}

\begin{theorem}
\label{theorem:convergence}

When Assumption \ref{assumption:convergence} holds, and $\eta \leq \frac{1}{2\sqrt{6} I \beta}$, we have
\begin{align}
&\frac{1}{K} \sum_{k=0}^{K-1} \Expect \big\Vert \nabla F({\mathbf{w}}'(k)) \odot \mathbf{m}_{\mathbf{w}'}(k) \big\Vert^2 \nonumber \\ 
&\leq \frac{2(F_0 - F^*)}{\eta K} + \alpha \eta \beta \sigma^2 + 4\beta^2 \big( (1- \alpha ) I\sigma^2 + 3I^2\epsilon^2 \big) \eta^2 \nonumber\\
& \quad\quad + \frac{2 L}{\eta K} \sum_{k=0}^{K-1} \Expect \left\| {\mathbf{w}}(k) - {\mathbf{w}}'(k) \right\|\,\,,
\label{eq:convergence}
\end{align}
where $\alpha:= \sum_{n=1}^N p_n^2$, $F_0 := F(\mathbf{w}(0))$, $F^* := \min_\mathbf{w}F(\mathbf{w})$, and $I$ is the number of iterations per round.
\end{theorem}
Under some conditions, this convergence can be bounded asymptotically by $\mathcal{O} \left(\frac{1}{\sqrt{N K}} \right) + \mathcal{O} \left( \frac{1}{K} \right)$, achieving linear speedup\footnote{The dominant term is $\mathcal{O}\left(\frac{1}{\sqrt{N K}} \right)$ when $K$ is sufficiently large. The notion of linear speedup means that, to reach the same error bound, the total number of rounds $K$ can proportionally decrease as $N$ increases.} with the number of clients $N$~\cite{peng2018linear,yu2019linear} for sufficiently large $K$. See Appendix~\ref{appendix:thm2_proof} for more details.

If we do not reconfigure in an iteration $k$, we have $\mathbf{w}(k) = \mathbf{w}'(k)$.
In the right-hand side (RHS) of (\ref{eq:convergence}), the first two terms go to zero as $K \rightarrow \infty$. The last term is related to how well $\mathbf{w}'$ approximates $\mathbf{w}$ after pruning. To ensure that the sum in the last term grows slower than $\sqrt{K}$, the number of \textit{non-zero} prunable parameters (which belong to $\mathcal{P}$) should decrease over time. Note that we consider all zero parameters to be prunable and they also belong to $\mathcal{P}$, thus the size of $\mathcal{P}$ itself may not decrease over time. 
This convergence result shows that the gradient components corresponding to the remaining (i.e., not pruned) parameters vanishes over time, which suggests that we will get a ``stable'' parameter vector in the end, because when the gradient norm is small, the change of parameters in each iteration is also small. In addition to gradient convergence on the subspace after pruning as suggested in Theorem~\ref{theorem:convergence}, our experiments show that our pruned model also converges to an accuracy close to that of the full-sized model.

\textbf{Tracking a small model.}
By choosing the size of $\overline{\mathcal{P}}$ properly over time, our adaptive pruning algorithm can keep reducing the model size as long as such reduction does not adversely impact further training. Intuitively, the model that we obtain from this process is one that has a small size while maintaining full ``trainablity'' in future iterations. Parameter components for which the corresponding gradient components remain zero (or close to zero) will be pruned.

In cases where a target maximum model size should be reached at convergence (e.g., for efficient inference later), we can also enforce a maximum size constraint in each reconfiguration that starts with the full size and gradually decreases to the target size as training progresses, which allows the model to train quickly in initial rounds while converging to the target size in the end.

\section{Implementation}\label{sec:implementation}

\subsection{Using Sparse Matrices}
Although the benefit of model pruning in terms of computation is constantly mentioned in the literature from a theoretical point of view \cite{han2015learning}, most existing implementations substitute sparse parameters by applying binary masks to dense parameters. Applying masks increases the overhead of computation, instead of reducing it. We implement sparse matrices for model pruning, and we show its efficacy in our experiments. We use dense matrices for full-sized models, and sparse matrices for weights in both convolutional and fully-connected layers in pruned models.

\subsection{Complexity Analysis}\label{sec:complexity_analysis}

\textbf{Storage, memory, and communication.}
We implement two types of storage for sparse matrices: bitmap and value-index tuple. Bitmap uses one extra bit to indicate whether the specific value is zero. For $32$-bit floating point parameter components, bitmap incurs $1/32$ extra storage and communication overhead. Value-index tuple stores the values and both row and column indices of all non-zero entries. In our implementation, we use $16$-bit integers to store row and column indices and $32$-bit floating point numbers to store parameter values. Since each parameter component is associated with a row index and a column index, the storage and communication overhead doubles compared to storing the values only. We dynamically choose between the two ways of storage, and thus, the ratio of the sparse parameter size to the dense parameter size is $\min\big\{2\times d, \frac{1}{32}+d\big\}$, where $d$ is the model's \textit{density} (percentage of non-zero parameters). This ratio is further optimized when the matrix sparsity pattern is fixed (in most FL rounds, see Appendix~\ref{appendix:implementation_details}). In this case, there is no extra cost since only values of the non-zero entries need to be exchanged.

\textbf{Computation.}
Because dense matrix multiplication is extremely optimized, sparse matrices will show advantage in computation time only when the matrix is below a certain density, where this density threshold depends on specific hardware and software implementations. 
In our implementation, we choose either dense or sparse representation depending on which one is more efficient. The complexity (computation time) of the matrix multiplication between a sparse matrix $\mathbf{S}$ and a dense matrix $\mathbf{D}$ is linear to the number of non-zero entries in $\mathbf{S}$ (assuming $\mathbf{D}$ is fixed).

\subsection{Implementation Challenges}\label{sec:implementation-challenge}
As of today, well-known machine learning frameworks have limited support of sparse matrix computation. For instance, in PyTorch version 1.6.0, the persistent storage of a matrix in sparse form takes $5\times$ space compared to its dense form; the computations on sparse matrices are slow; and sparse matrices are not supported for the kernels in convolutional layers, etc. To benefit from using sparse matrices in real systems, we extend the PyTorch library by implementing a more efficient sparse storage, and the support for sparse convolutional kernels. We only partially improve backward passes due to implementation limitations (more details in Appendix~\ref{appendix:gradient_comp}). This problem, however, can be improved in the future by implementing and further optimizing efficient sparse matrix multiplication on low-level software, as well as developing specific hardware for this purpose. Nevertheless, the novelty in our implementation is that we use sparse matrices in both fully-connected and convolutional layers in the pruned model.

\section{Experiments}\label{sec:experiments}

In this section, we present the experimental setup and results.

{\renewcommand{\arraystretch}{1.3}
\begin{table*}[t]
\caption{Evaluation configurations (C.S. stands for client selection; LR stands for learning rate).\vspace{-0.05in}}
\label{tab:hyperparam}
\centering
\footnotesize
\vskip 1mm
\begin{tabular}{ccccc}
\hline
Dataset & FEMNIST & CIFAR-10 & ImageNet-100 & CelebA\\ \hline \hline
\noalign{\vskip 0.5mm}  
SGD params in round $r$ & LR = $0.25$& LR = $0.1\cdot 0.5^{\frac{r}{10000}}$& LR = $0.05\cdot 0.5^{\lfloor \frac{r}{1000} \rfloor\cdot 0.1}$ & LR = $0.2$\\ \hline
\noalign{\vskip 0.5mm}
Fraction of non-zero prunable parameters in round $r$ & $0.3\cdot 0.5^{\frac{r}{10000}}$ & $0.3\cdot 0.5^{\frac{r}{10000}}$& $0.3\cdot 0.5^{\frac{r}{10000}}$ & $0.3\cdot 0.5^{\frac{r}{10000}}$\\ \hline

Number of data samples used in initial pruning & 200 & 200 & 500 & 500\\ \hline

Number of clients (Non-C.S., C.S.) & 10, 193 & 10, 100 & 10, 100 & 10, 934\\ \hline

Mini-batch size, local iterations in each round & 20, 5 & 20, 5 & 20, 5 & 20, 5\\ \hline %

Reconfiguration & every 50 rounds & every 50 rounds & every 50 rounds & every 50 rounds\\ \hline

Total number of FL rounds & 10,000 & 10,000 & 20,000 & 1,000 \\ \hline

Evaluation & \begin{tabular}[c]{@{}c@{}} prototype (Pi 4), \\ simulation (Pi 4)\end{tabular} & simulation (Pi 4)& simulation (Android VM) & simulation (Pi 4)\\ \hline

\end{tabular}
\end{table*}
}

\textbf{Datasets.} We evaluate PruneFL on four image classification tasks: 
\begin{enumerate}
    \item[(a)] Conv-2 model on FEMNIST~\cite{LEAF},
    \item[(b)] VGG-11 model~\cite{simonyan2014very} on CIFAR-10~\cite{krizhevsky2009cifar},
    \item[(c)] ResNet-18 model~\cite{he2016deep} on ImageNet-100~\cite{deng2009imagenet},
    \item[(d)] MobileNetV3-Small model~\cite{howard2019searching} on CelebA~\cite{LEAF},
\end{enumerate}
all of which represent typical FL tasks. Due to practical considerations of edge devices' training time and storage capacity, we select data corresponding to $193$ writers for FEMNIST, and the first $100$ classes of the ImageNet dataset (referred to as ImageNet-100). We adapt some layers in VGG-11, ResNet-18 and MobileNetV3-Small to match with the number of output labels in our datasets.

When using full client participation, because we only have 10 clients in total, for FEMNIST, we partition all the 193 writers' images into 10 clients (the first 9 clients each has 19 writers' images and the last client has 22 writers' images).
For CelebA, we partition all the 9,343 persons' images into 10 clients (the first 9 clients each has 934 persons' images and the last client has 937 persons' images). Note that such partitioning is still non-IID.

\textbf{Model architectures.} The architecture details are presented in Table~\ref{tab:model_arch} in the appendix. VGG-11 ResNet-18, and MobileNetV3-Small are well-known architectures, and we directly acquire Conv-2 from its original work~\cite{LEAF}.

\textbf{Platform.}
To study the performance of our proposed approach, we conduct experiments in (i) a real edge computing prototype, where a personal computer serves as both the server and a client, and the other clients are Raspberry Pi devices, and (ii) a simulated setting with multiple clients and a server, where computation and communication times are obtained from measurements involving either Raspberry Pi devices or Android phones.

Unless otherwise specified, the prototype system includes  nine Raspberry Pi (version 4, with 2~GB RAM, 32~GB SD card) devices as clients and a personal computer without GPU as both a client and the server (totaling 10 clients). Three of the Raspberry Pis use wireless connections and the remaining six use wired connections. The communication speed is stable and is approximately 1.4 MB/s. 
The simulated system uses the same setting as in the prototype. We use time measurements from Raspberry Pis, except for the ImageNet-100 dataset where we replace the computation time by measurements from Android virtual machine (VM). 

We consider FL with full client participation in the main paper and present results with random client selection~\cite{bonawitz2019towards} in Appendix~\ref{appendix:additional_results}. The results are similar. \mbox{FEMNIST} and \mbox{CelebA} data are partitioned into clients in a non-IID manner according to writer/person identity, and CIFAR-10 and ImageNet-100 are partitioned into clients in an IID manner.

\textbf{Baselines.} We compare the test accuracy vs.\ \textit{time} curve of PruneFL with five baselines: (i) conventional FL~\cite{mcmahan2017communication}, (ii) iterative pruning~\cite{han2015learning}, (iii) online learning~\cite{han2020adaptive}, (iv) SNIP~\cite{lee2018snip}, and (v) SynFlow~\cite{tanaka2020synflow}. Because iterative pruning and SNIP cannot automatically determine the model size, we consider an enhanced version of these baselines that obtain the same model size as PruneFL at convergence. Additional baselines are also considered in Section~\ref{sec:time_reduction}.

Since our experiments try to minimize the training time using pruning, there is no direct way of comparing with the baselines that either are not specifically designed for pruning (the online learning baseline) or do not adapt the pruned model size (all other baselines). We compare with the baselines as follows. In every round, the online learning approach produces a model size for the next round, and we adjust the model accordingly while keeping each layer's density the same. To compare with SNIP, after the first round, we let SNIP prune the original model in a one-shot manner to the same density as the final model found by our adaptive pruning method, and keep the architecture afterwards. Similarly, to compare with SynFlow, we let SynFlow prune the model (before training) to the same density as the final model found by our adaptive pruning method, and keep the architecture afterwards. To compare with iterative pruning, we let the model be pruned with a fixed rate for $20$ times (at an equal interval) in the first half of the total number of rounds, such that the remaining number of parameter components equals that of the model found by our adaptive pruning method, and the pruning rate is equal across layers. See Fig.~\ref{fig:model_size} for the illustration of the baseline settings.

\textbf{Pruning configurations.} 
The initial pruning stage is done on the personal computer client. We end the initial pruning stage either when the model size is ``stable'', or when it exceeds certain maximum number of iterations. We consider the model size as ``stable'' when its relative change is below $10\%$ for $5$ consecutive reconfigurations.

For adaptive pruning, to ensure convergence of the last term on the RHS of (\ref{eq:convergence}) in Theorem~\ref{theorem:convergence}, we exponentially decrease the number of \textit{non-zero} prunable parameters in $\mathcal{P}$ over rounds. We note that $\mathcal{P}$ includes both zero and non-zero parameters, hence the size of $\mathcal{P}$ itself may not decrease. 
For a given size of $\mathcal{P}$, the $|\mathcal{P}|$ parameters with the smallest magnitude belong to $\mathcal{P}$ that can be pruned (or added back), and the rest belong to $\overline{\mathcal{P}}$ that cannot be pruned.

Biases (if any) in the DNNs are not pruned. In ResNet-18, BatchNorm layers and downsampling layers are not pruned since the number of parameters in such layers is negligible compared to the size of convolutional and fully-connected layers.

\textbf{Lottery ticket analysis.} To verify whether the final model from adaptive pruning is a lottery ticket~\cite{frankle2018the,morcos2019one}, we reinitialize this converged model using the original random seed, and compare its accuracy vs. \textit{round} curve with (i) conventional FL, (ii) random reinitialization (same architecture as the lottery ticket but initialized with a different random seed), (iii) SNIP, and (iv) SynFlow.

\textbf{Hyperparameters.} The hyperparameters above are chosen empirically only with coarse tuning by experience. We observe that our and other methods are insensitive to these hyperparameters. Hence, we do not perform fine tuning on any parameter.

The detailed evaluation configurations are given in Table~\ref{tab:hyperparam}.

\begin{figure}[t]
    \centering
    \includegraphics[width=0.95\linewidth]{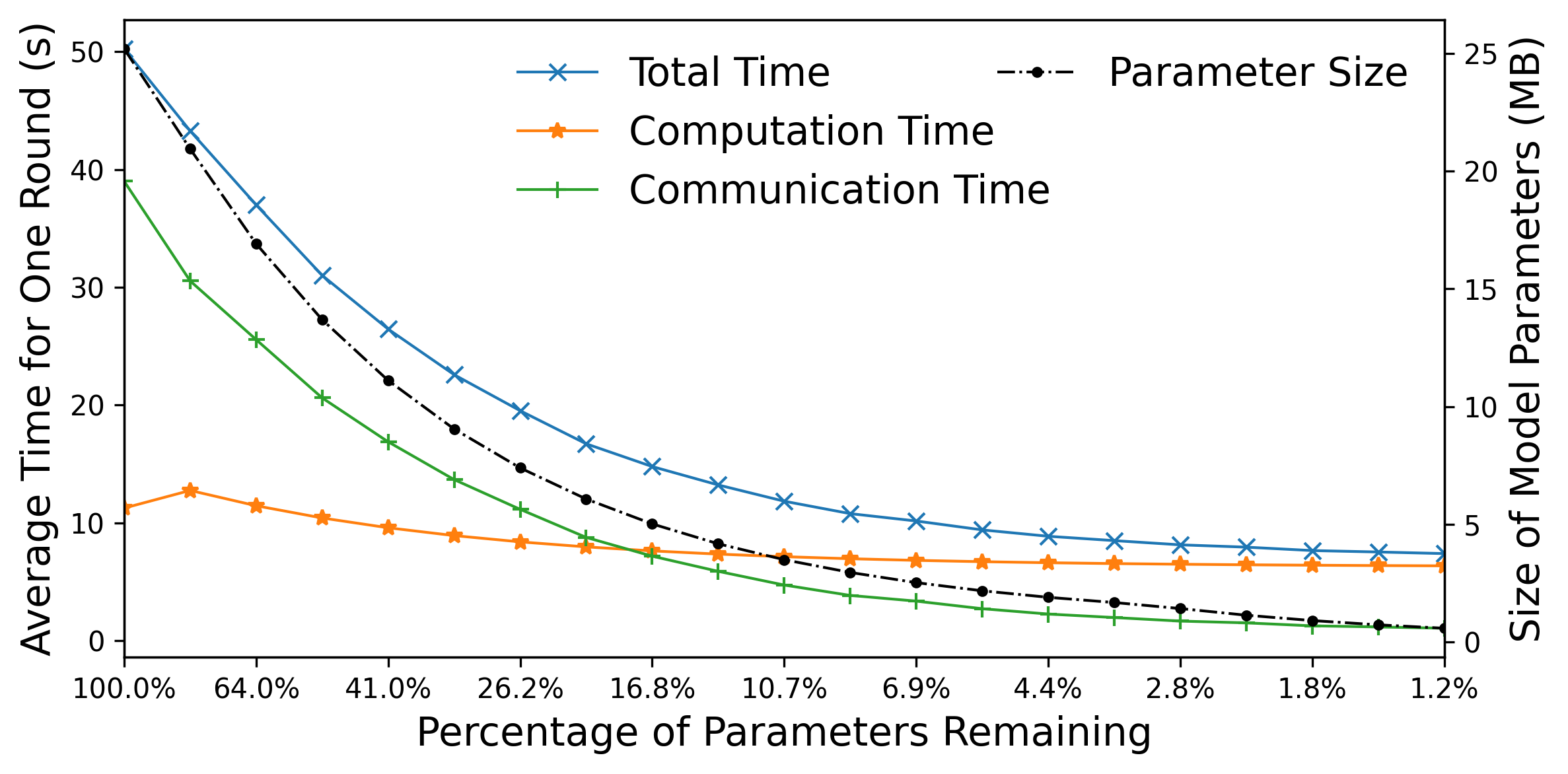}
    \vspace{-3mm}
    \caption{Training time on Raspberry Pi 4 (FEMNIST).}
    \vspace{-3mm}
    \label{fig:femnist_tm}
\end{figure}

\begin{figure}
  \centering
  \includegraphics[width=0.95\linewidth]{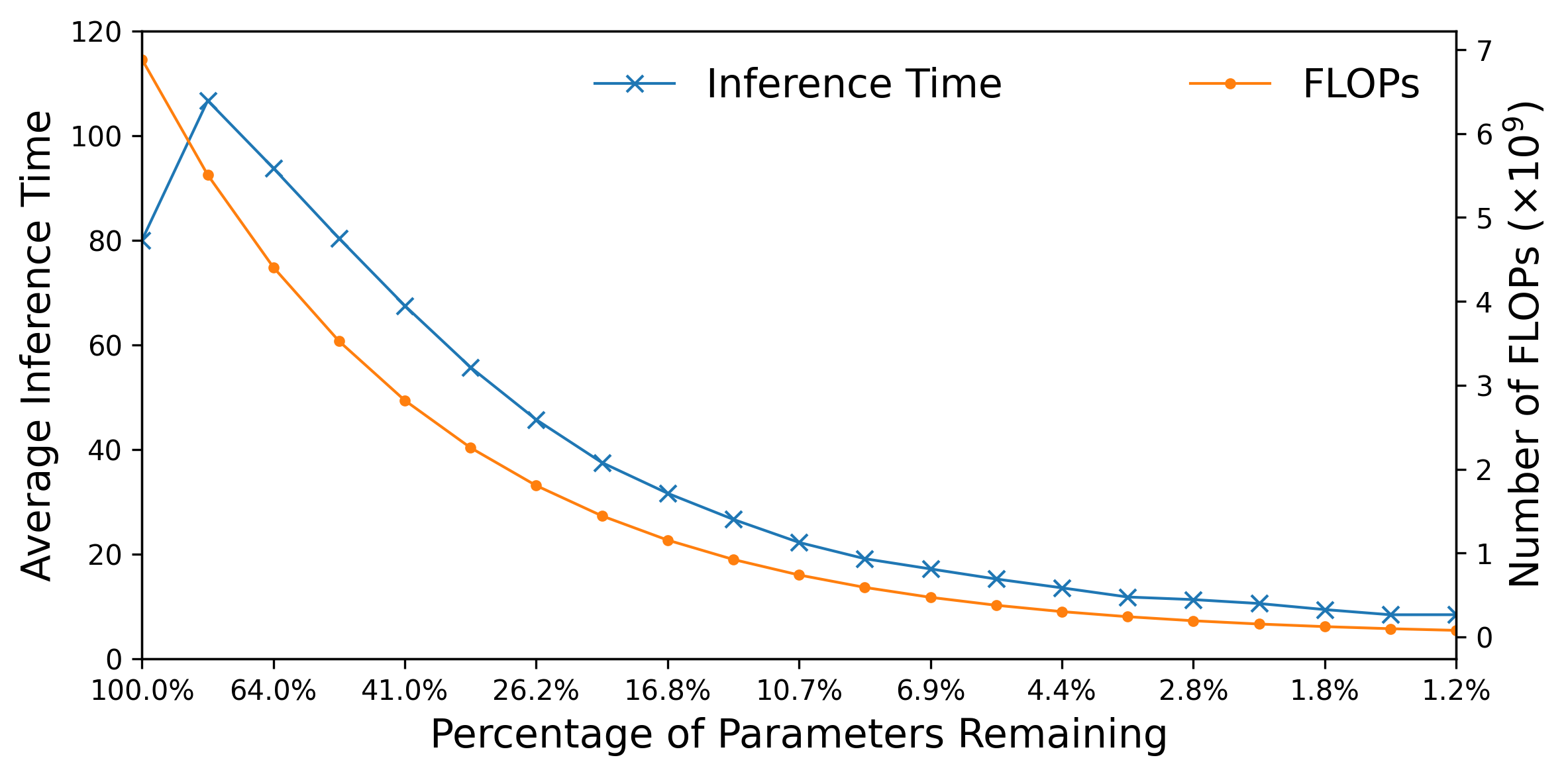}
  \vspace{-3mm}
  \caption{Inference time on Raspberry Pi 4 (FEMNIST).}
  \vspace{-3mm}
  \label{fig:femnist_tm_inf}
\end{figure}

\begin{figure}[t]
    \centering
    \includegraphics[width=0.95\linewidth]{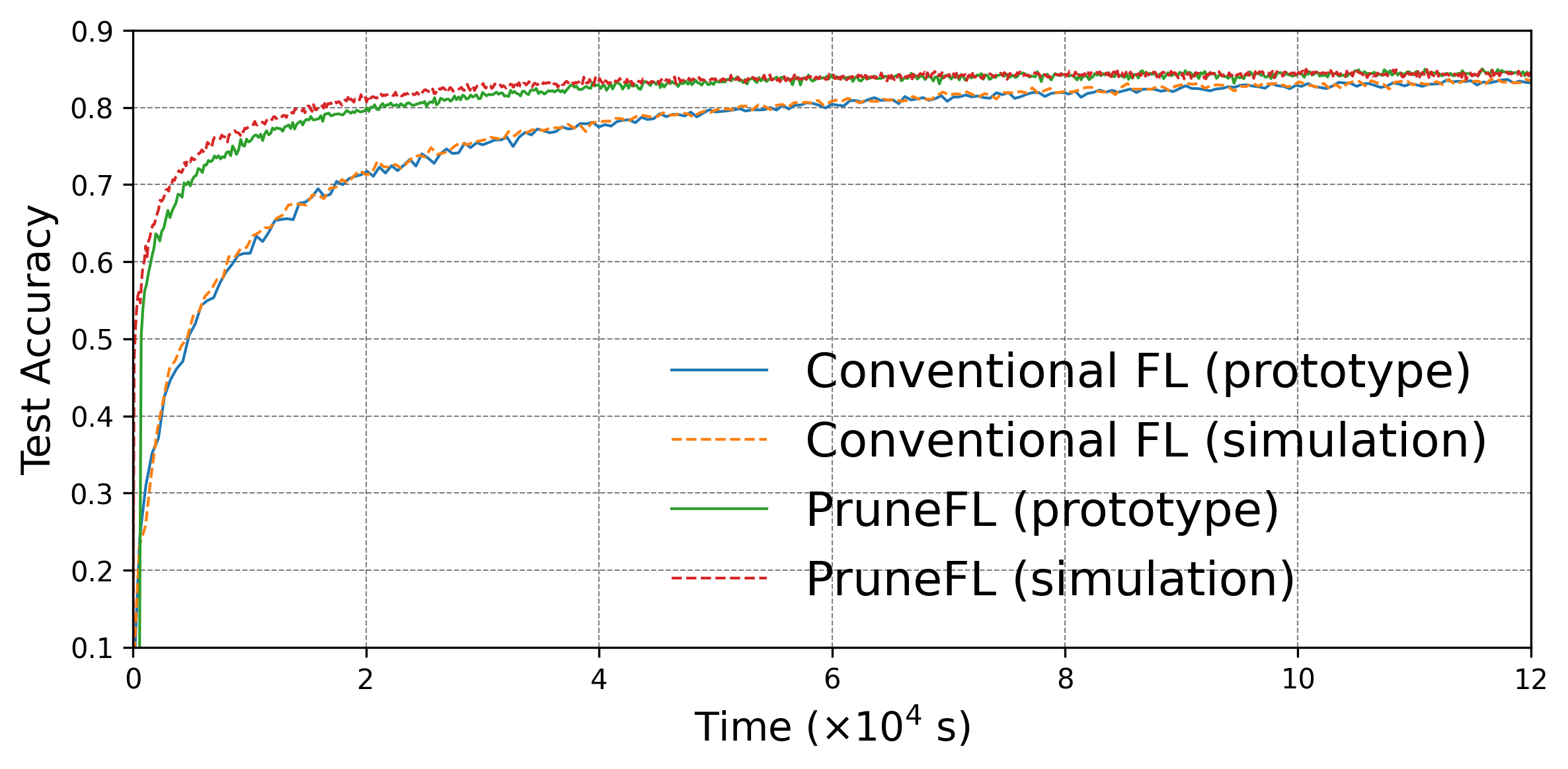}
    \vspace{-3mm}
    \caption{Comparing conventional FL and PruneFL with both prototype and simulation results (FEMNIST).}
    \vspace{-3mm}
    \label{fig:femnist_exp}
\end{figure}

\begin{figure*}[t]
\centering
\subfigure{
\hspace*{2.5mm}\includegraphics[height=4.9mm]{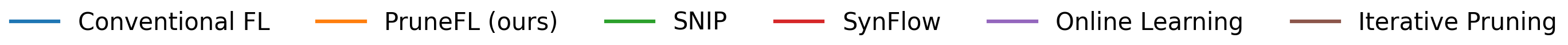}
}
\addtocounter{subfigure}{-1}
\vspace{-4.5mm}
\\
\subfigure[Conv-2 on FEMNIST]{
    \includegraphics[height=3.075cm]{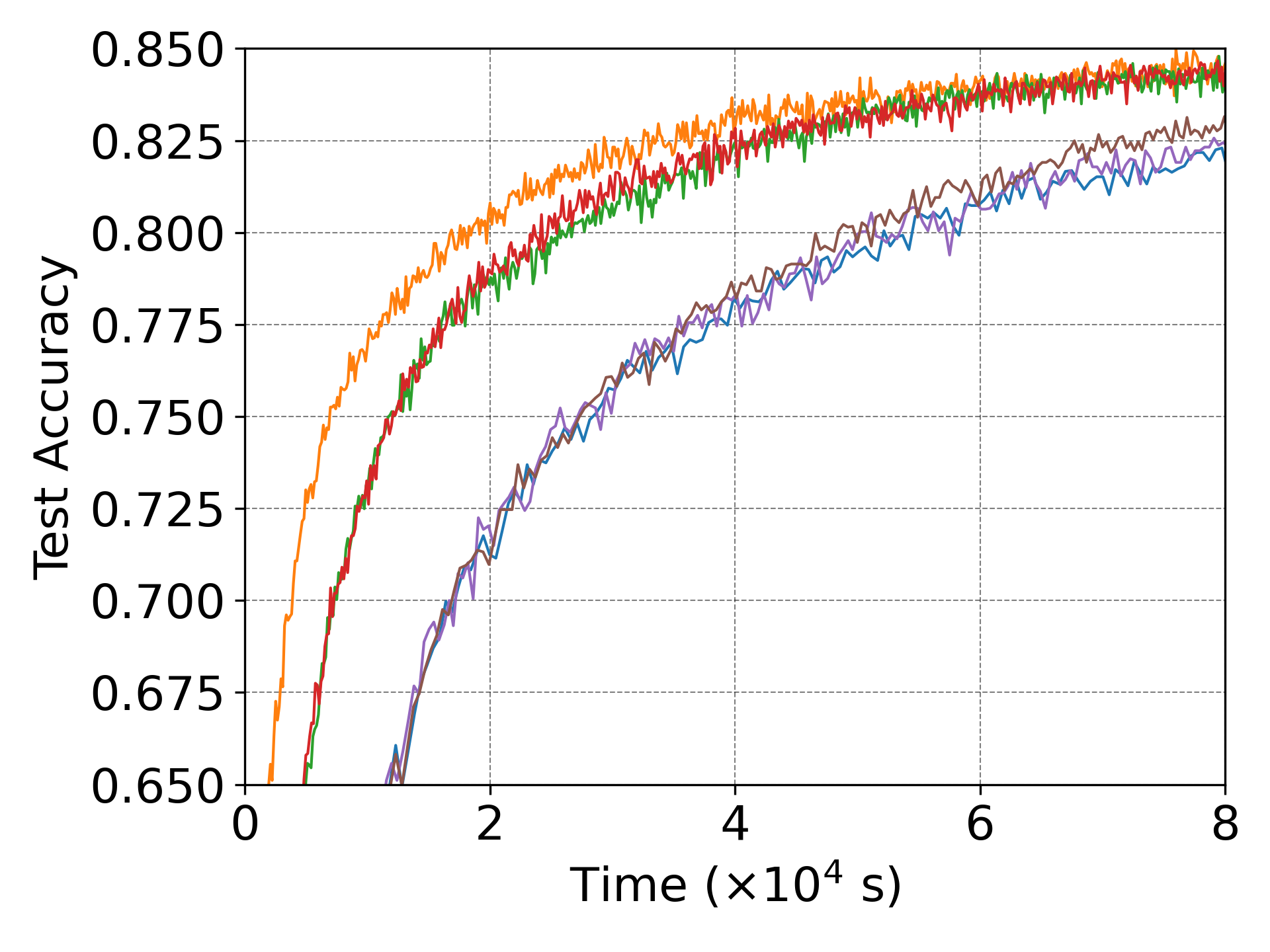}
    \label{fig:femnist_training}
}
\subfigure[VGG-11 on CIFAR-10]{
    \includegraphics[height=3.075cm]{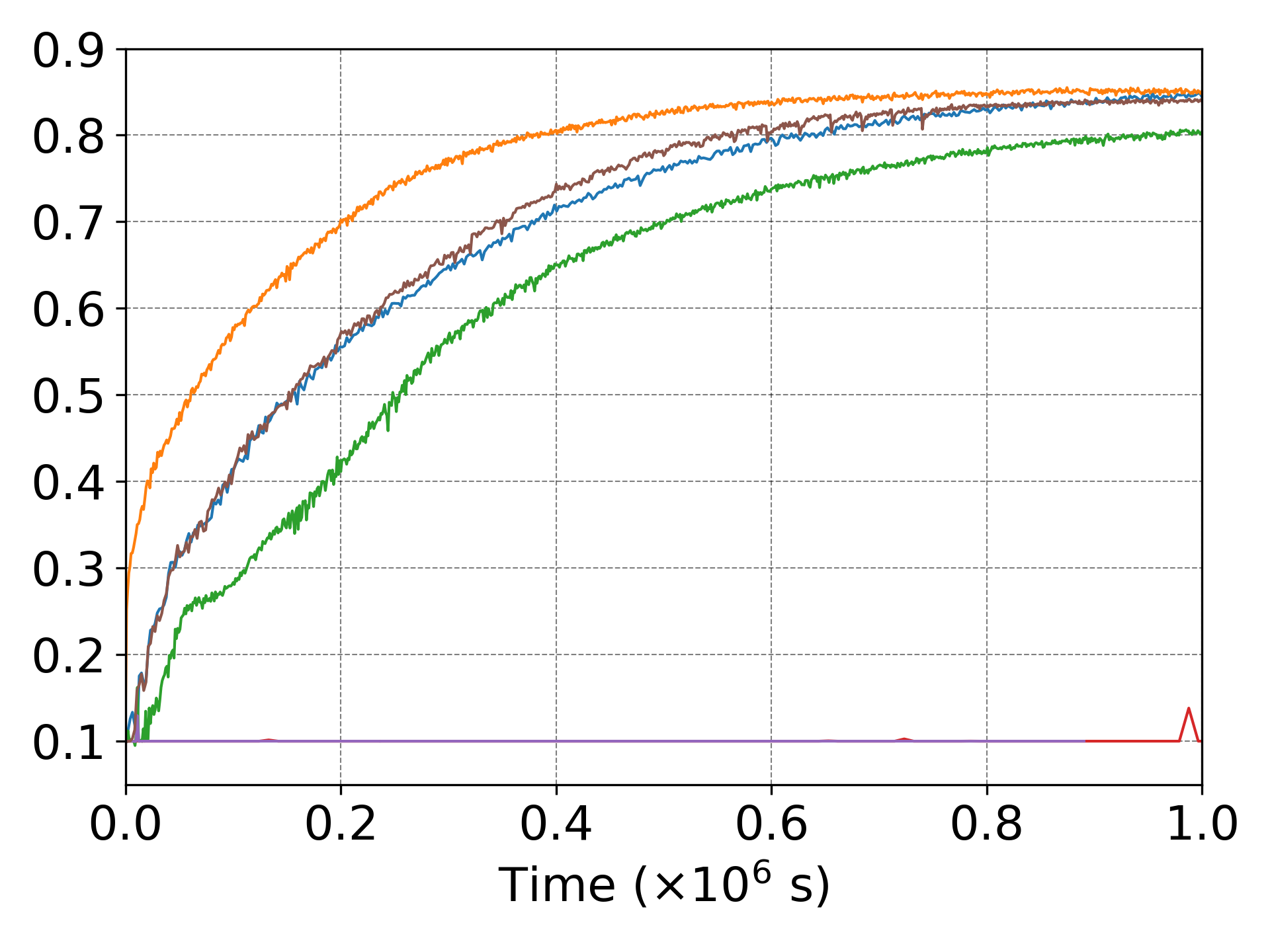}
    \label{fig:cifar10_training}
}
\subfigure[ResNet-18 on ImageNet-100]{
    \includegraphics[height=3.075cm]{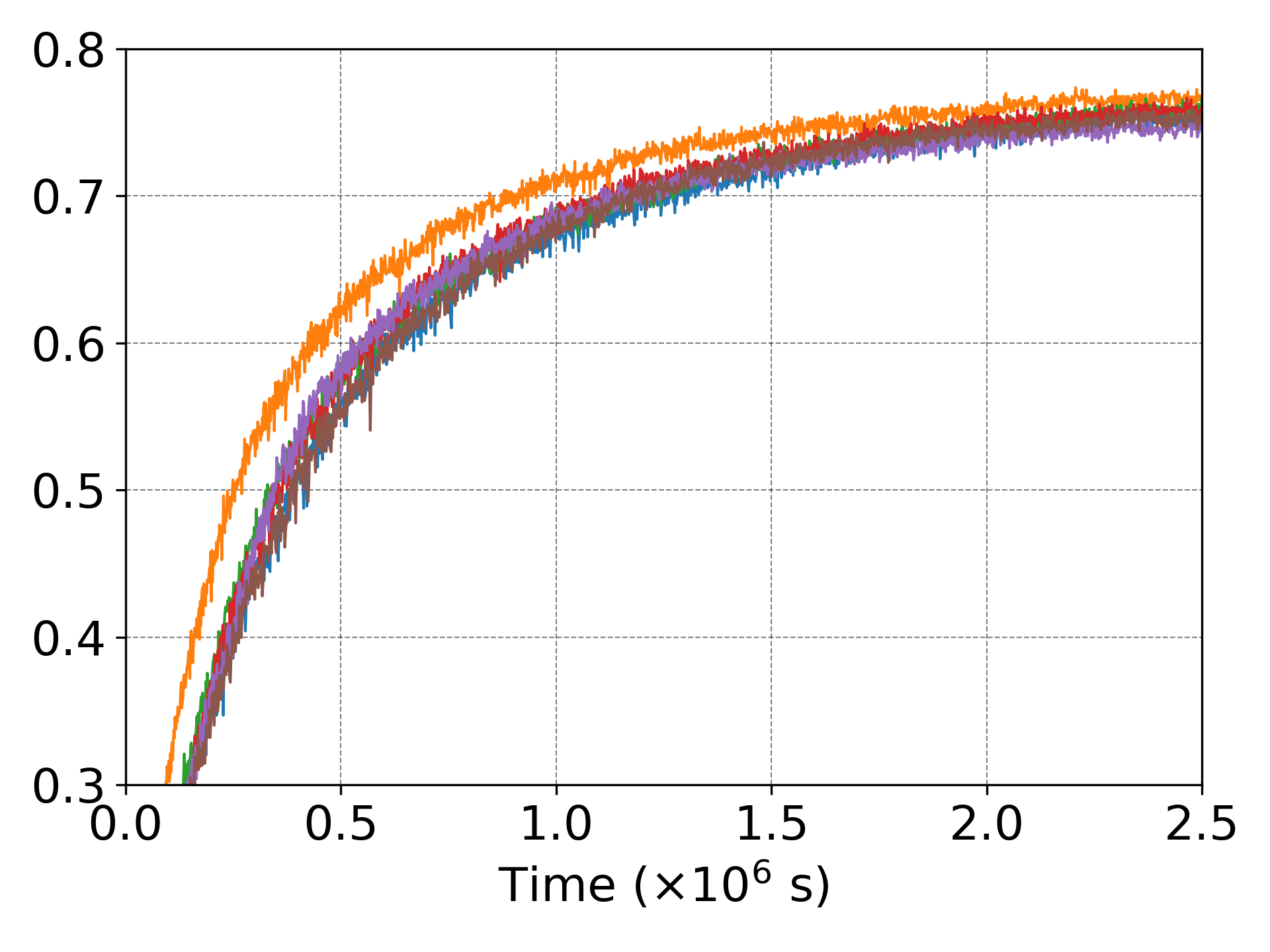}
    \label{fig:imagenet_training}
}
\subfigure[MobileNetV3-Small on CelebA]{
    \includegraphics[height=3.075cm]{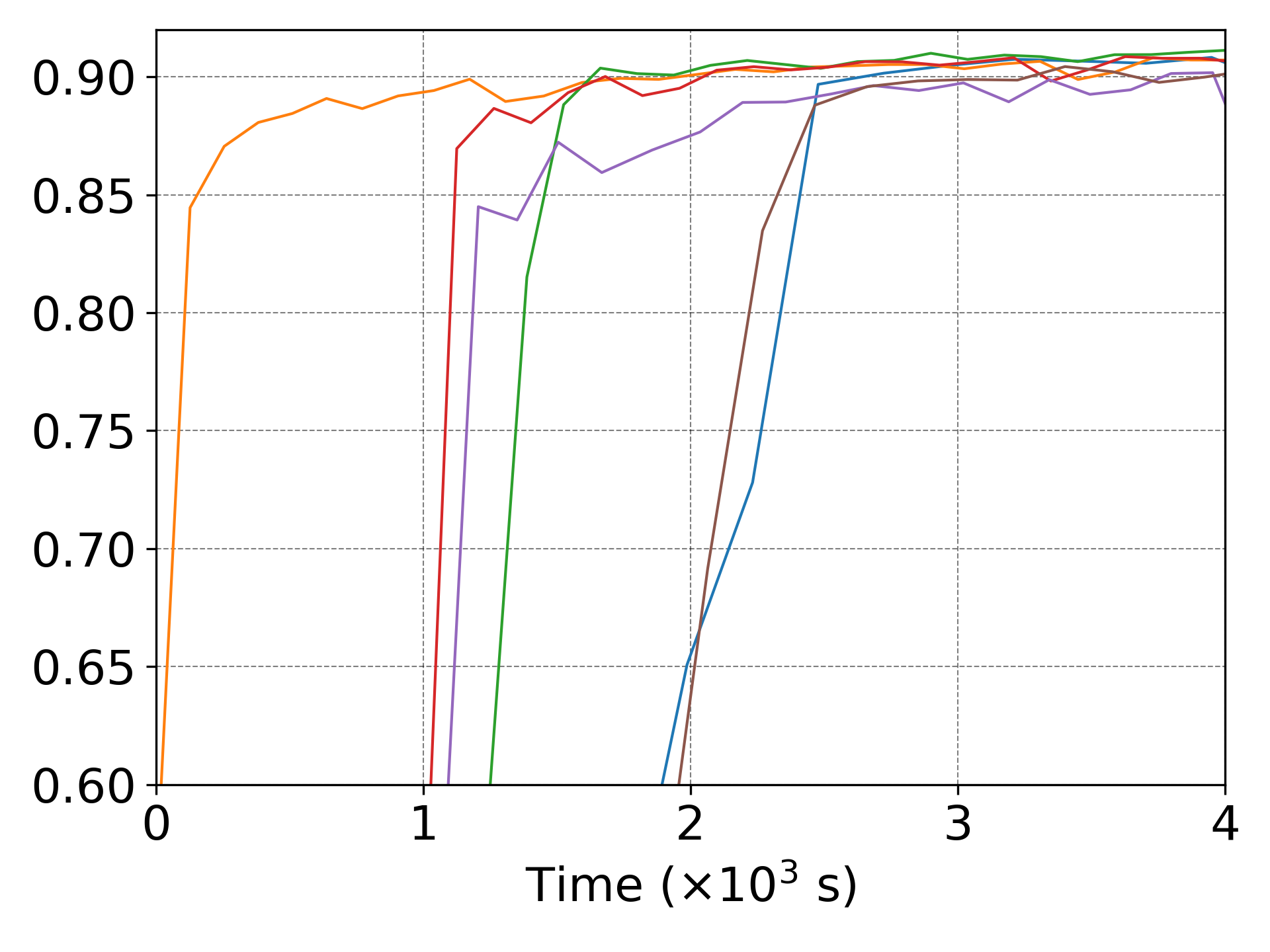}
    \label{fig:celeba_training}
}
\vspace{-2mm}
\caption{Test accuracy vs. time results of four datasets.}
\vspace{-4mm}
\label{fig:training}
\end{figure*}

\subsection{Time Measurement}\label{sec:time_measurement}

We present the time measurements of one FL round on the prototype system to show the effectiveness of model pruning on edge devices. We implement the full-sized Conv-2 model in dense form as well as the pruned models in sparse form at different densities, and measure the average elapsed time of FL on these pruned models involving both the server and clients over 10 rounds.

Fig.~\ref{fig:femnist_tm} shows the average total time, computation time, and communication time in one round as we vary the model density. Note that the model is in dense form at $100\%$ on the $x$-axis and sparse form elsewhere. We also plot the actual size of the parameters that are exchanged between server and clients in this figure for one FL round.

\textbf{Computation time.} We see from Fig.~\ref{fig:femnist_tm} that as the model density decreases, the computation time (for five local iterations) decreases from $11.24$~seconds per round to $6.34$~seconds per round. This reduction in computation time is moderate since our implementation of sparse computation only partially improves backward passes (see Section~\ref{sec:implementation}). Additionally, we plot in Fig.~\ref{fig:femnist_tm_inf} the total inference time and the number of floating-point operations (FLOPs) for $200$ data samples (see Appendix~\ref{appendix:flops_comp} for details of FLOPs computation). The inference time result shows a similar trends as in Fig.~\ref{fig:femnist_tm}, and the number of FLOPs keeps decreasing as we reduce the model size.

\textbf{Communication time.} Our implementation of sparse matrices reduces the storage requirement significantly (see Section~\ref{sec:implementation}). Compared to computation time, the decrease in the communication time is more noticeable. It drops from $35.88$~seconds per round to $1.04$~seconds per round. 

\textbf{Enabling FL on low-power edge device.} In addition, we ran experiments with the LeNet-300-100~\cite{lecun1998gradient} architecture, and we observed that when training the MNIST~\cite{lecun1998gradient} dataset on the full-sized, dense-form LeNet-300-100 model on Raspberry Pi \textit{version 3} (with 1~GB RAM, 32~GB SD card), the system dies during the first mini-batch due to resource exhaustion, while the models in sparse form can be trained. Thus, our approach of using sparse models enables model training on low-power edge devices, which is otherwise impossible on Raspberry Pi 3 in this experiment due to the device's resource limitation.

\subsection{Training Cost Reduction}\label{sec:time_reduction}

In the following, we study PruneFL's cost reduction in terms of both time and FLOPs for training.

\begin{figure}
  \centering
  \includegraphics[width=0.95\linewidth]{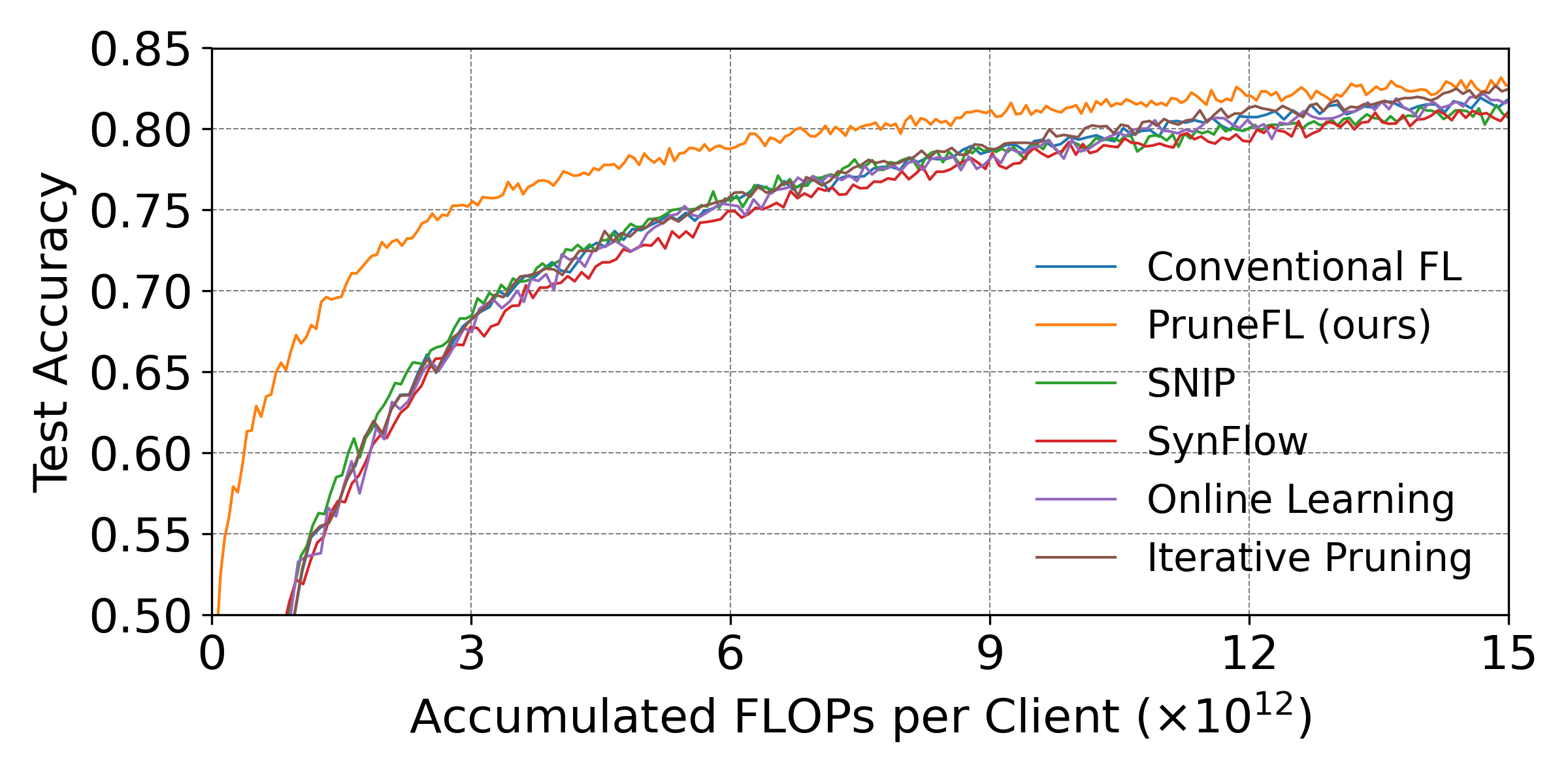}
  \vspace{-3mm}
  \caption{Test accuracy vs. accumulated FLOPs per client (FEMNIST).}
  \vspace{-3mm}
  \label{fig:femnist_flops}
\end{figure}

\begin{figure*}[t]
\centering
\subfigure{
\hspace*{2.5mm}\includegraphics[height=4.9mm]{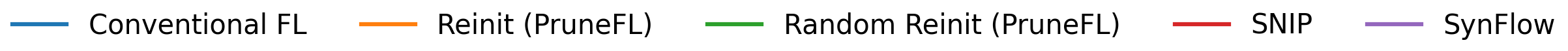}
}
\addtocounter{subfigure}{-1}
\vspace{-4.5mm}
\\
\subfigure[Conv-2 on FEMNIST]{
    \includegraphics[height=3.075cm]{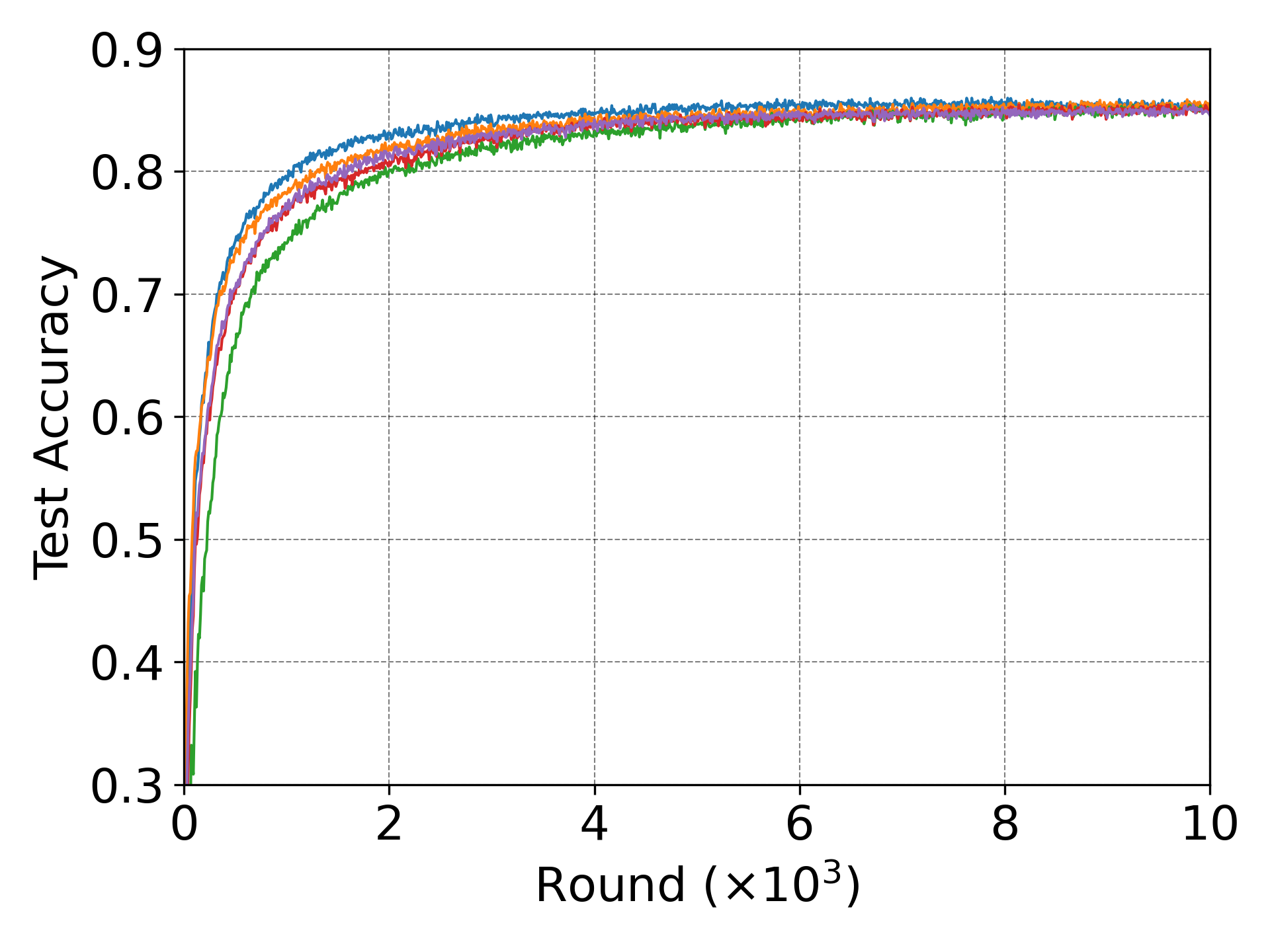}
    \label{fig:femnist_lt}
}
\subfigure[VGG-11 on CIFAR-10]{
    \includegraphics[height=3.075cm]{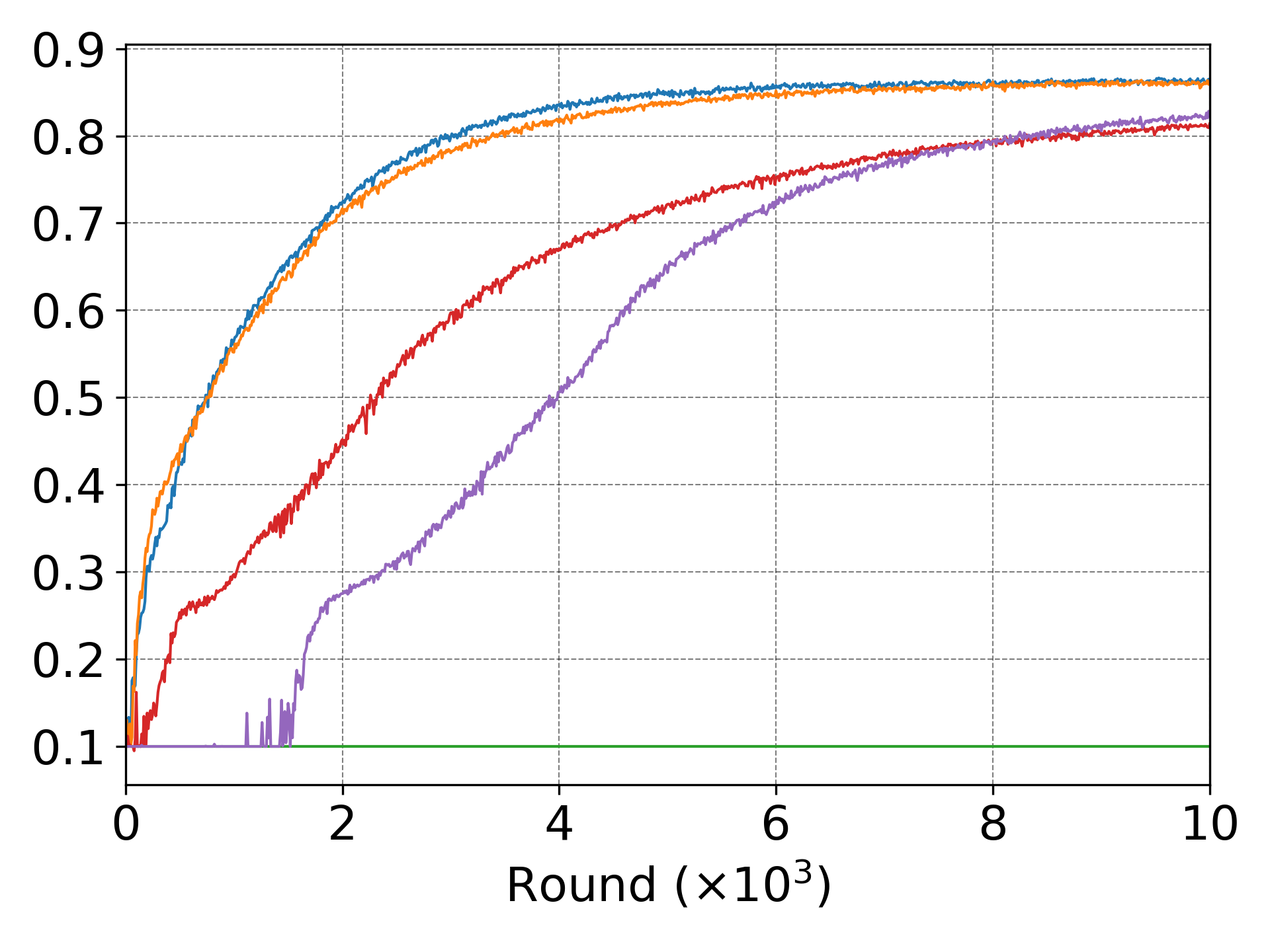}
    \label{fig:cifar10_lt}
}
\subfigure[ResNet-18 on ImageNet-100]{
    \includegraphics[height=3.075cm]{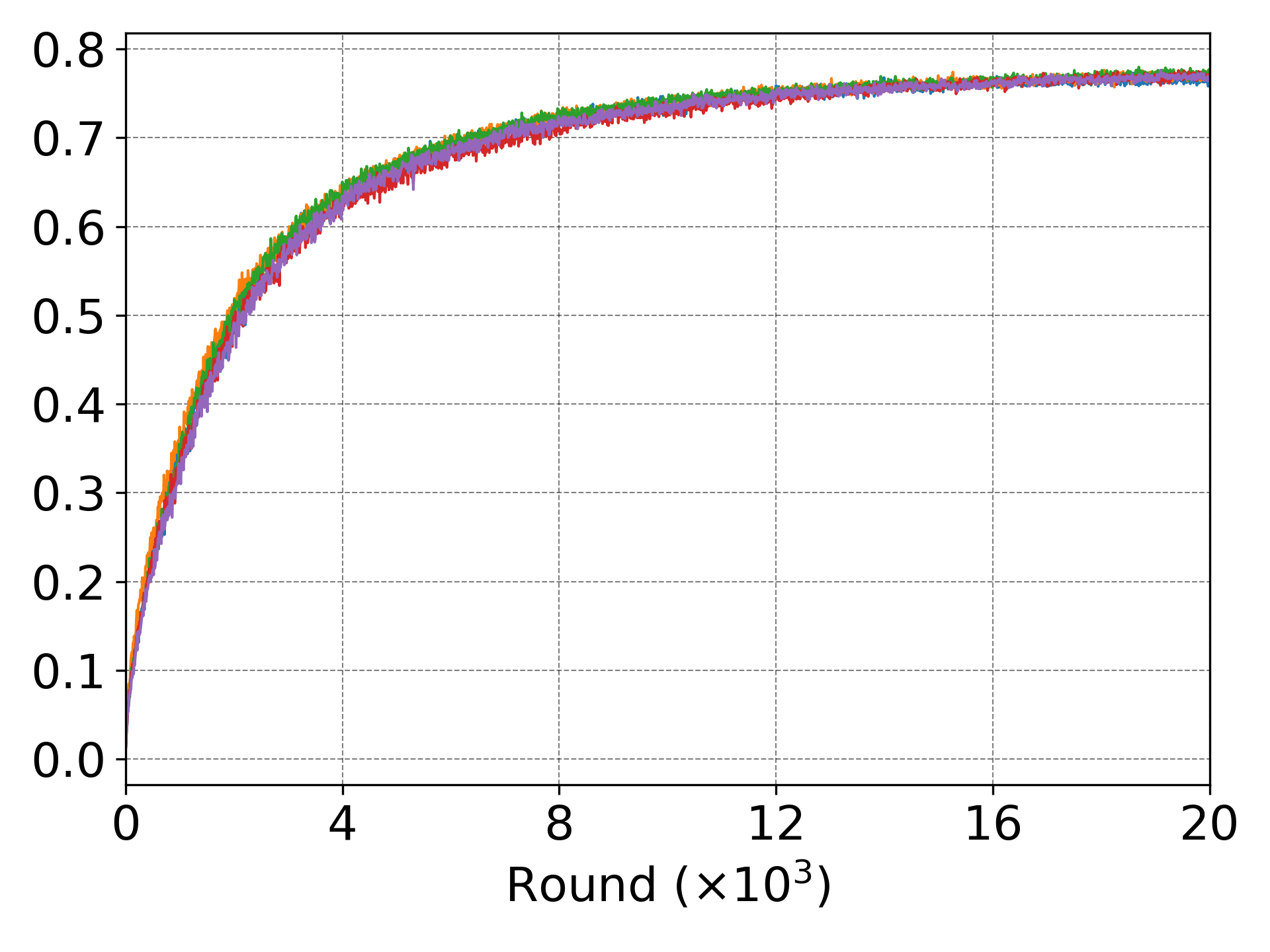}
    \label{fig:imagenet_lt}
}
\subfigure[MobileNetV3-Small on CelebA]{
    \includegraphics[height=3.075cm]{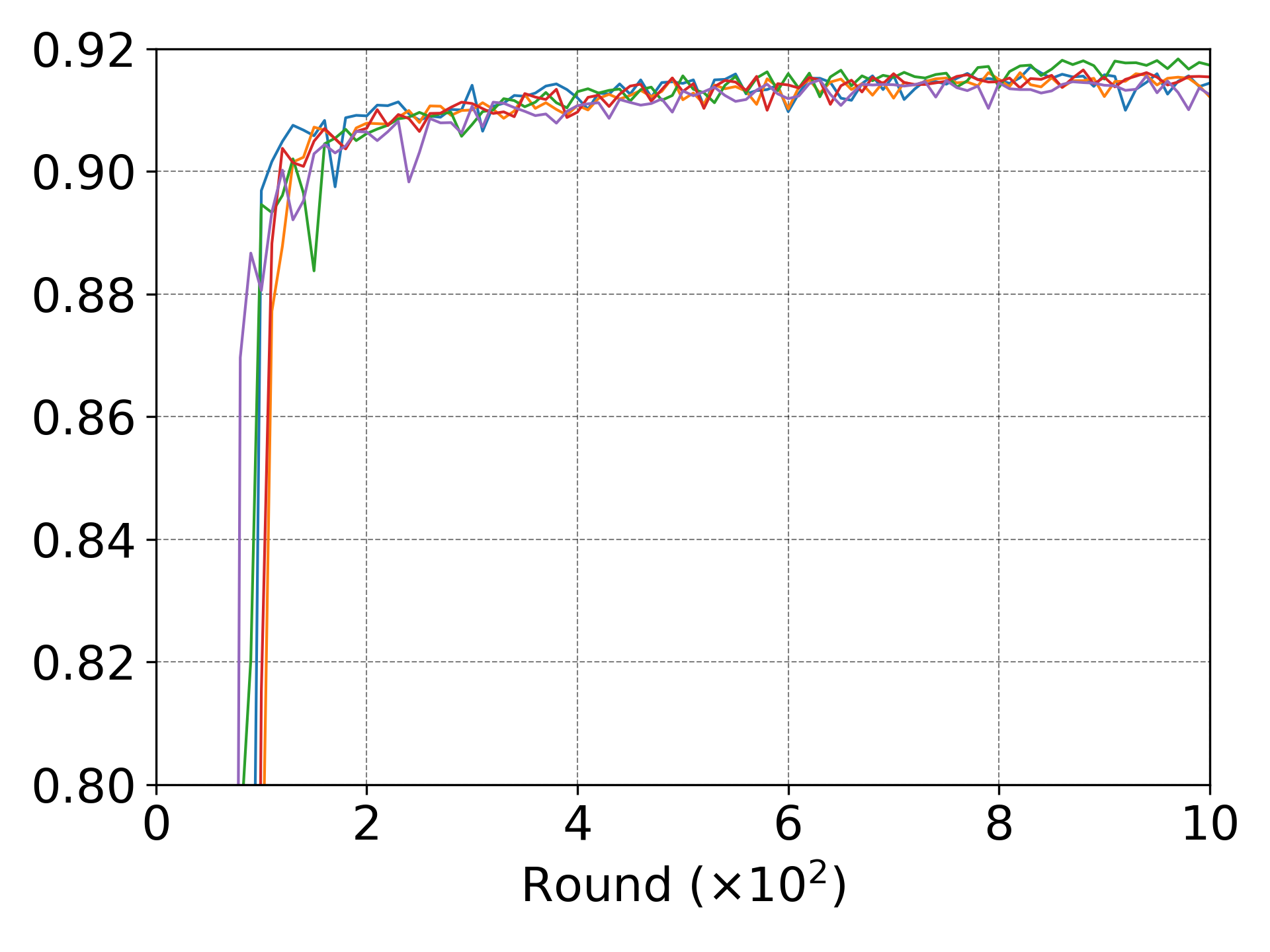}
    \label{fig:celeba_lt}
}
\vspace{-2mm}
\caption{Lottery ticket results of four datasets.}
\vspace{-4mm}
\label{fig:lottery}
\end{figure*}

\begin{figure}[t]
  \centering
  \includegraphics[width=0.95\linewidth]{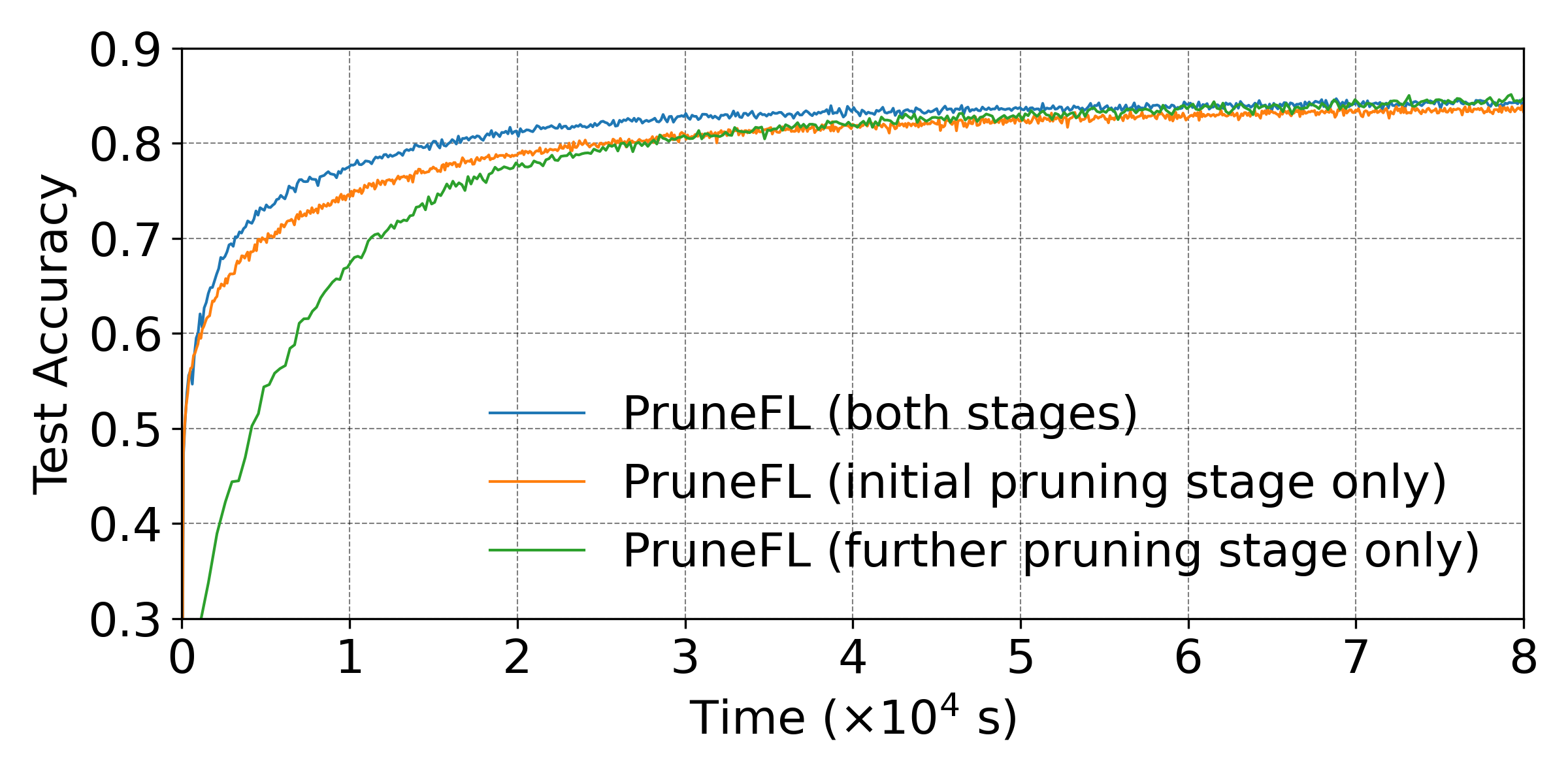}
  \vspace{-3mm}
  \caption{PruneFL with only one pruning stage (FEMNIST).}
  \vspace{-3mm}
  \label{fig:femnist_one_stage}
\end{figure}

\begin{figure*}[t!]
\centering
\subfigure{
\hspace*{2.5mm}\includegraphics[height=4.9mm]{figs/legend1.png}
}
\addtocounter{subfigure}{-1}
\vspace{-4.5mm}
\\
\subfigure[Conv-2 on FEMNIST]{
    \includegraphics[height=3.075cm]{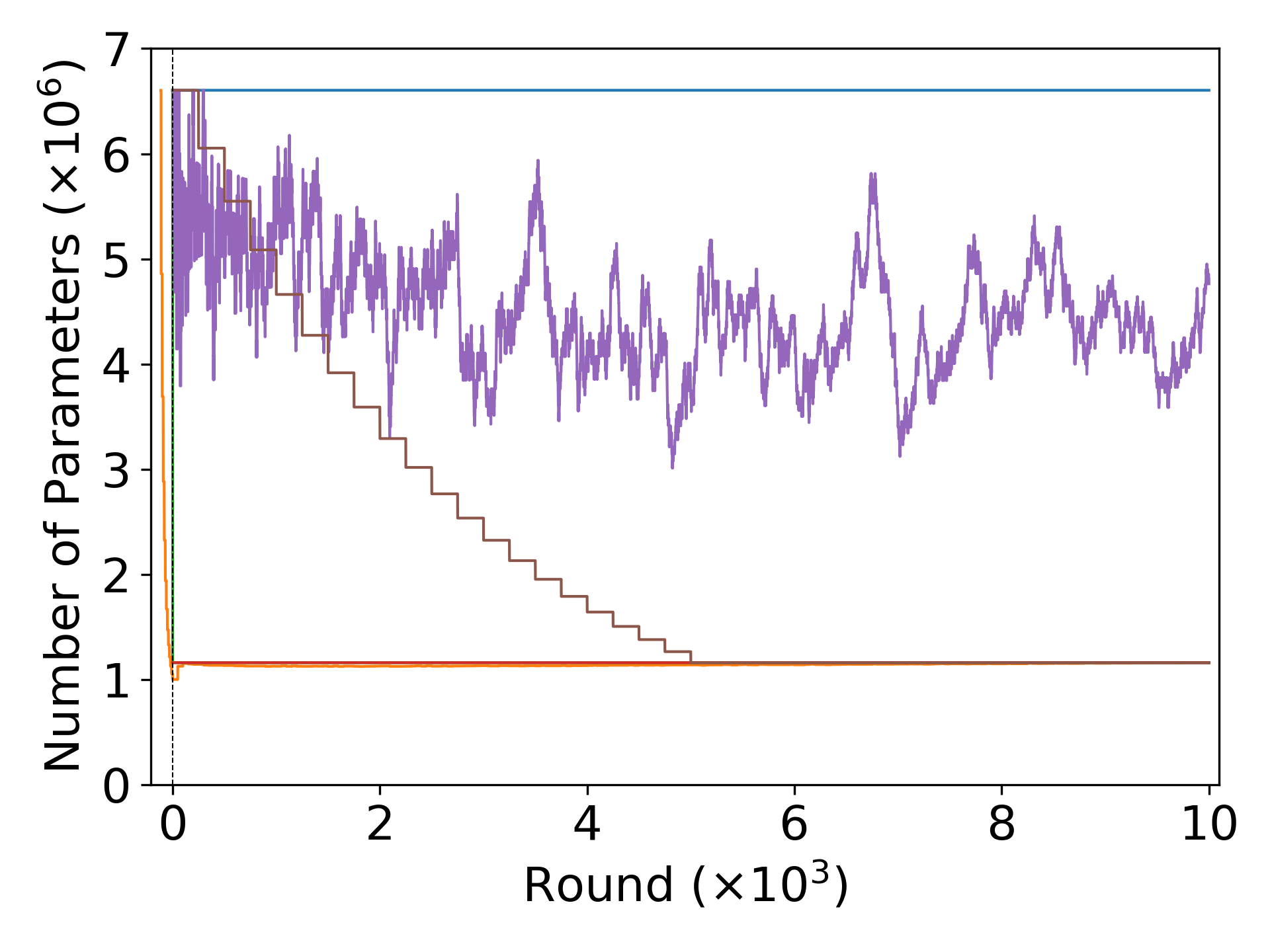}
    \label{fig:femnist_ms}
}
\subfigure[VGG-11 on CIFAR-10]{
    \includegraphics[height=3.075cm]{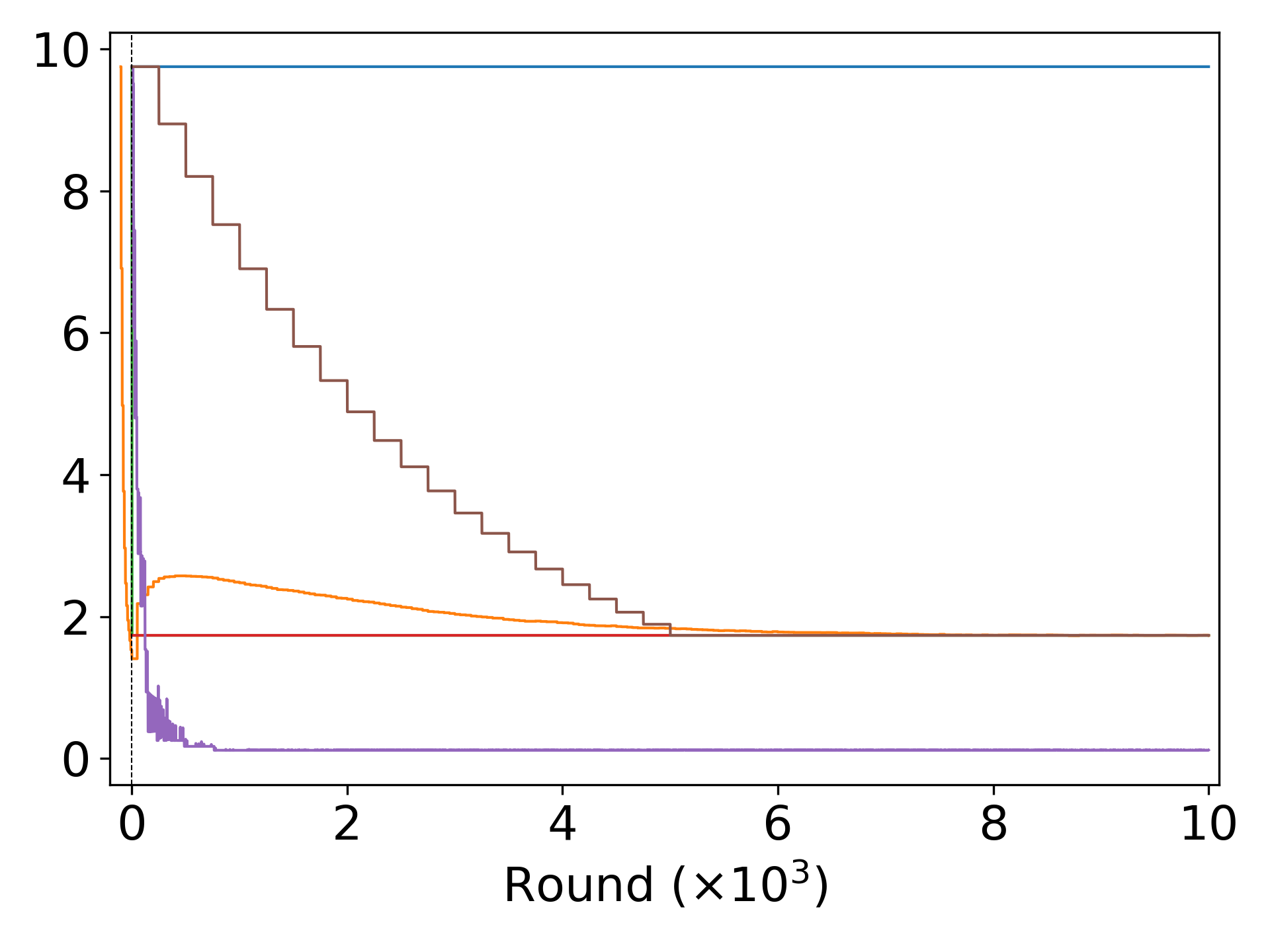}
    \label{fig:cifar10_ms}
}
\subfigure[ResNet-18 on ImageNet-100]{
    \includegraphics[height=3.075cm]{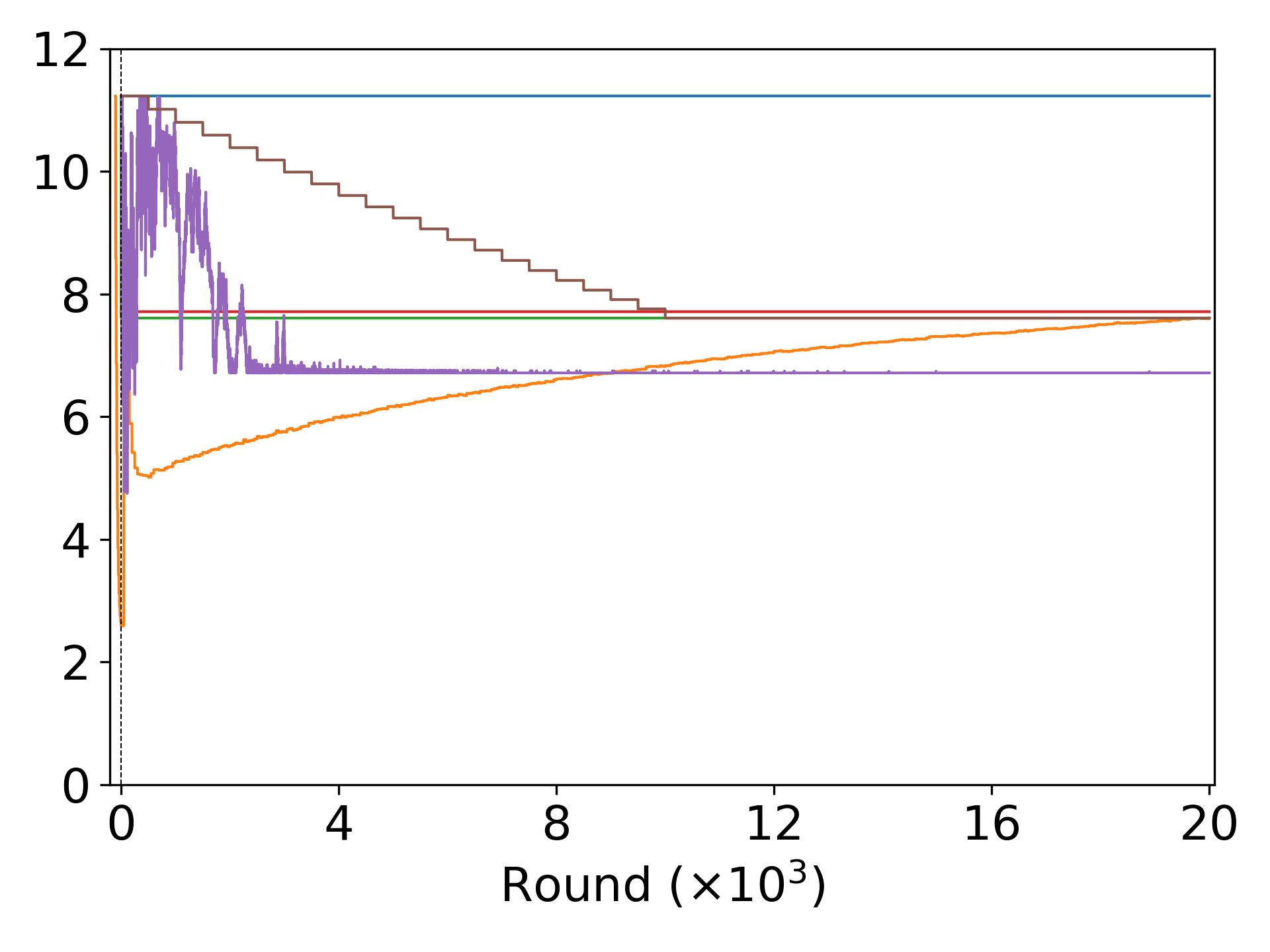}
    \label{fig:imagenet_ms}
}
\subfigure[MobileNetV3-Small on CelebA]{
    \includegraphics[height=3.075cm]{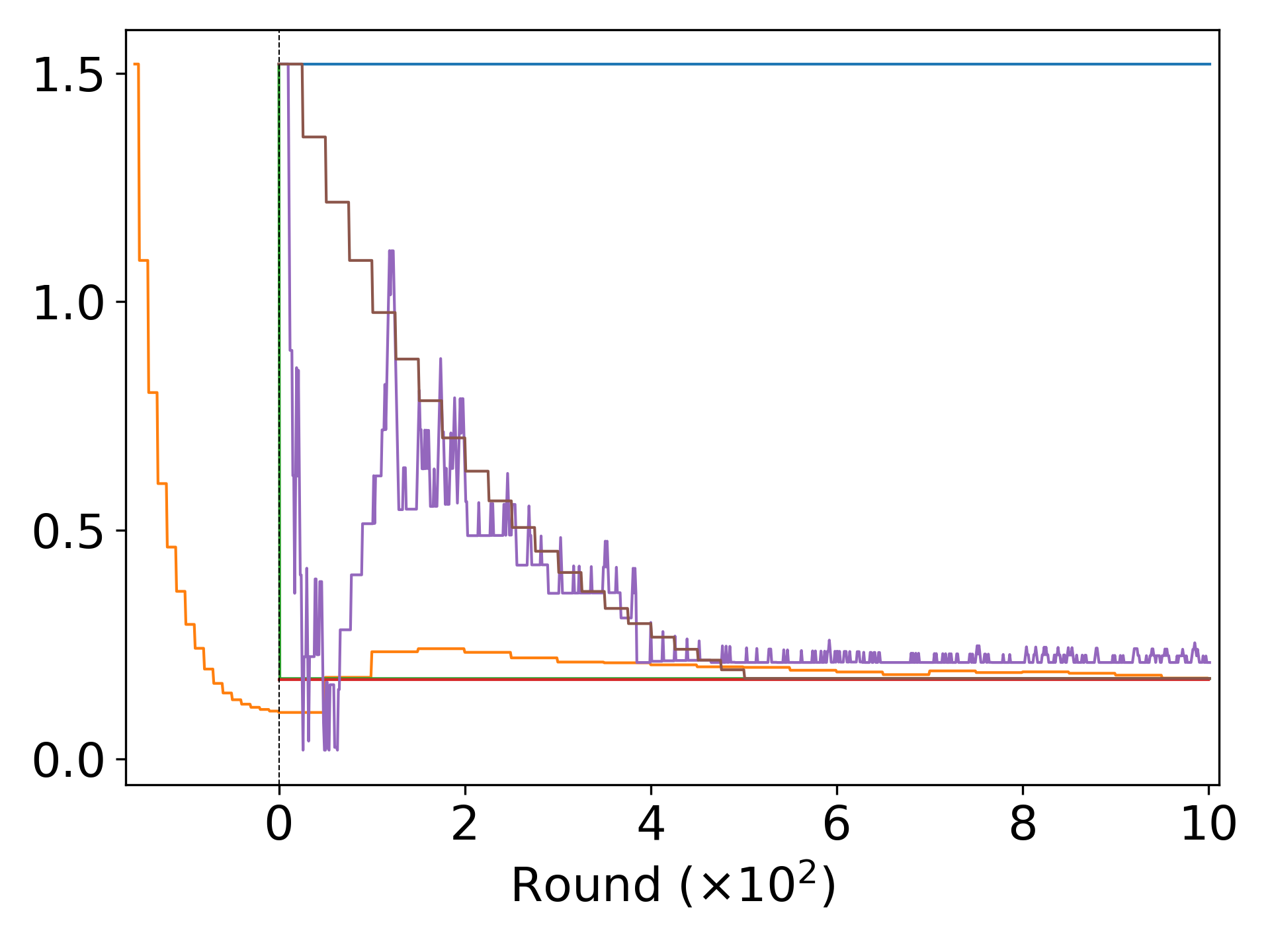}
    \label{fig:celeba_ms}
}
\vspace{-1mm}
\caption{Number of parameters vs. round for four datasets.}
\vspace{-4mm}
\label{fig:model_size}
\end{figure*}

\textbf{Comparing conventional FL and PruneFL.} 
Fig.~\ref{fig:femnist_exp} shows the test accuracy vs. time results on both the prototype and simulated systems, for Conv-2 on FEMNIST. The time for initial pruning of PruneFL is included in this figure, which is negligible (it takes less than $500$ seconds) compared to the further pruning stage. 
We see that PruneFL outperforms conventional FL by a significant margin.
Since the prototype and simulation results match closely, we present the simulation results in subsequent experiments due to their excessive training time on the prototype system.

\textbf{Training time reduction.}
In Fig.~\ref{fig:training}, we compare the test accuracy vs. time results for all datasets, models, and baselines. Clearly, PruneFL demonstrates a consistent advantage on training speed over baselines. Moreover, PruneFL always converges to similar accuracy achieved by conventional FL (see Appendix~\ref{appendix:convergence_acc}). Other methods may have suboptimal performance, e.g., SNIP does not converge to conventional FL's accuracy with CIFAR-10. We also observe that some approaches such as online learning and SynFlow in Fig.~\ref{fig:cifar10_training} always stay at the random guess accuracy. The reason could be that such approaches are unstable and can get stuck in local optimal points at the beginning of training.

\textbf{Training FLOPs reduction.}
Although we observe that the training time on Raspberry Pis is relatively consistent across different models and tasks, there are still factors that can affect the training time (e.g., environment temperature). To further validate our approach's advantage on accelerating training, we present the results on test accuracy vs. accumulated FLOPs per client for FEMNIST in Fig.~\ref{fig:femnist_flops}. We find that this result shares the same characteristics with Fig.~\ref{fig:femnist_training} in terms of acceleration (we present only one set of results here due to the similarity).

\textbf{Time and FLOPs to reach target accuracy.} Table~\ref{tab:target_acc} lists the time and accumulated FLOPs per client that an algorithm first reaches a certain accuracy with FEMNIST. PruneFL takes less than $1/3$ of time compared to conventional FL to reach $80\%$ accuracy, and it also saves more than $33\%$ of time (more than 2 hours) compared to SNIP and SynFlow. The savings of FLOPs are similar.

{\renewcommand{\arraystretch}{1.1}
\begin{table}[t]
\caption{Time and accumulated FLOPs per client to reach target accuracy (FEMNIST).}
\centering
\label{tab:target_acc}
{
    \begin{tabular}{ccc}
    \hline
    Approach       &  \begin{tabular}[c]{@{}c@{}} Time (FLOPs) to reach \\ 70\% accuracy \end{tabular}  &  \begin{tabular}[c]{@{}c@{}} Time (FLOPs) to reach \\ 80\% accuracy\end{tabular}      \\  \hline \hline
    
    Conventional FL & 17,929~s ($3.5$ TFLOPs)  & 52,153~s ($10.5$ TFLOPs)  \\ 
    \textbf{PruneFL (ours)} & \textbf{3,187~s ($1.6$ TFLOPs)} & \textbf{15,009~s ($6.8$ TFLOPs)}  \\
    SNIP & 6,801~s ($3.3$ TFLOPs) & 22,467~s ($11.7$ TFLOPs) \\
    SynFlow & 7,132~s ($3.6$ TFLOPs) & 22,327~s ($12.3$ TFLOPs) \\
    Online & 18,042~s ($3.5$ TFLOPs) & 54,593~s ($10.7$ TFLOPs) \\
    Iterative & 17,495~s ($3.5$ TFLOPs) & 46,521~s ($10.1$ TFLOPs) \\
    \hline
    \end{tabular}
}
\end{table}
}

\textbf{Comparing with additional baselines.} To avoid bottlenecks, our algorithm and implementation ensures that all components in PruneFL, including communication, computation, and reconfiguration, are orchestrated and inexpensive. For this reason, some approaches in the literature that are not specifically designed for the edge computing environment with low-power devices may perform poorly if applied to our system setup, as we illustrate next.

Considering computation time, PruneTrain~\cite{lym2019prunetrain} applies regularization on every input and output channel in every layer. When the same regularization is applied to our system, we find that the computation time (using FEMNIST and Conv-2) takes $17.65$~seconds per round, which is a $57\%$ increase compared to PruneFL. 

Considering communication time, dynamic pruning with feedback (DPF)~\cite{Lin2020Dynamic} maintains a full-sized model, and clients have to upload full-sized gradients to the server (but only download a subset of model parameters) in every round. Thus, assuming unit model size and  model density $d$, the communication cost per round, including both uploading and downloading, is $1+d$. In comparison, clients in PruneFL only upload the full-sized model to the server at a reconfiguration round (every $50$ rounds in our experiments), and always exchange pruned models otherwise. This gives an average cost of $\frac{(1+d) + 2\times 49d}{50}=0.02+1.96d$ including both uploading and downloading. For instance, when the model density is $10\%$, DPF incurs $5.1\times$ communication cost compared to PruneFL. Finally, our reconfiguration algorithm (Algorithm~\ref{alg:greedySearchNewLinear}) runs in quasi-linear time, making it possible to be implemented on edge devices.

\subsection{Finding a Lottery Ticket}\label{sec:finding_lt}
Unlike some existing pruning techniques such as SNIP~\cite{lee2018snip}, dynamic pruning~\cite{Lin2020Dynamic}, and SynFlow~\cite{tanaka2020synflow}, PruneFL finds a lottery ticket (although not necessarily the smallest). In Fig.~\ref{fig:lottery}, FL with the reinitialized pruned model obtained from PruneFL learns comparably fast as FL with the original model, in terms of test accuracy vs. \textit{round}, confirming that they are lottery tickets. In Fig.~\ref{fig:cifar10_lt}, the Random Reinit curve stays at the random guess accuracy. This is not surprising since the lottery ticket, i.e., the final pruned model found by PruneFL, needs to be reinitialized to their original values to learn comparably fast as the full-sized model~\cite{frankle2018the}. When reinitialized with different values, the training of the ``lottery ticket'' can be suboptimal. In this experiment, it is stuck at the beginning of training.

Fig.~\ref{fig:femnist_one_stage} compares the test accuracy vs. round curves of PruneFL with alternative methods that either only includes initial pruning (at a single client) or only includes further pruning (during FL). It shows that the model obtained from the initial pruning stage does not converge to the optimal accuracy, and only performing further pruning without initial pruning results in a slower learning speed. PruneFL with both stages avoids drawbacks from methods that includes only one stage. Furthermore, the model obtained from initial pruning is not a lottery ticket of the original model, while PruneFL with only further pruning or both pruning stages find a lottery ticket. 

Therefore, one can view PruneFL as a two-stage procedure to find a lottery ticket of the given model, which is in line with our claims in Section~\ref{sec:proposed_approach}. The ability that we can find lottery tickets is useful when we need to retrain a pruned model on slightly different but similar datasets~\cite{morcos2019one}.

\subsection{Model Size Adaptation}\label{sec:size_adaptation}

An illustration of the change in model size is shown in Fig.~\ref{fig:model_size}. The small negative part of the $x$-axis shows the initial pruning stage which is unique to PruneFL. Since there is no notion of ``round'' in the initial pruning stage, we consider five local iterations in this stage as one round, which is consistent with our FL setting. Conventional FL always keeps the full model size. The model sizes provided by online learning is unstable. It fluctuates in initial rounds due to its exploration. SNIP and SynFlow prune the initial model to the target size in a one-shot manner at the beginning of training. Iterative pruning gradually reduces the model size until reaching the target. 
We notice that PruneFL also discovers the degree of overparameterization. Empirically, Conv-2, an overparameterized model for FEMNIST, converges to a small density ($13.4\%$), while ResNet-18, an underparameterized model for ImageNet-100, converges to a density of around $67.7\%$.

It is worth mentioning that finding a proper target density for pruning is non-trivial. Usually, foresight pruning methods such as \cite{lee2018snip} and \cite{wang2019picking} prune the model to the (manually selected) density before training. Fig.~\ref{fig:femnist_snip_diff_size} shows two cases where we use SNIP to prune Conv-2 (with FEMNIST) to 30\% and 1\%, the training speed becomes slower, and if the density is too small (1\%), the sparse model cannot converge to the same accuracy as the original model. In comparison, PruneFL automatically determines a proper density.

\begin{figure}
\centering
  \includegraphics[width=0.95\linewidth]{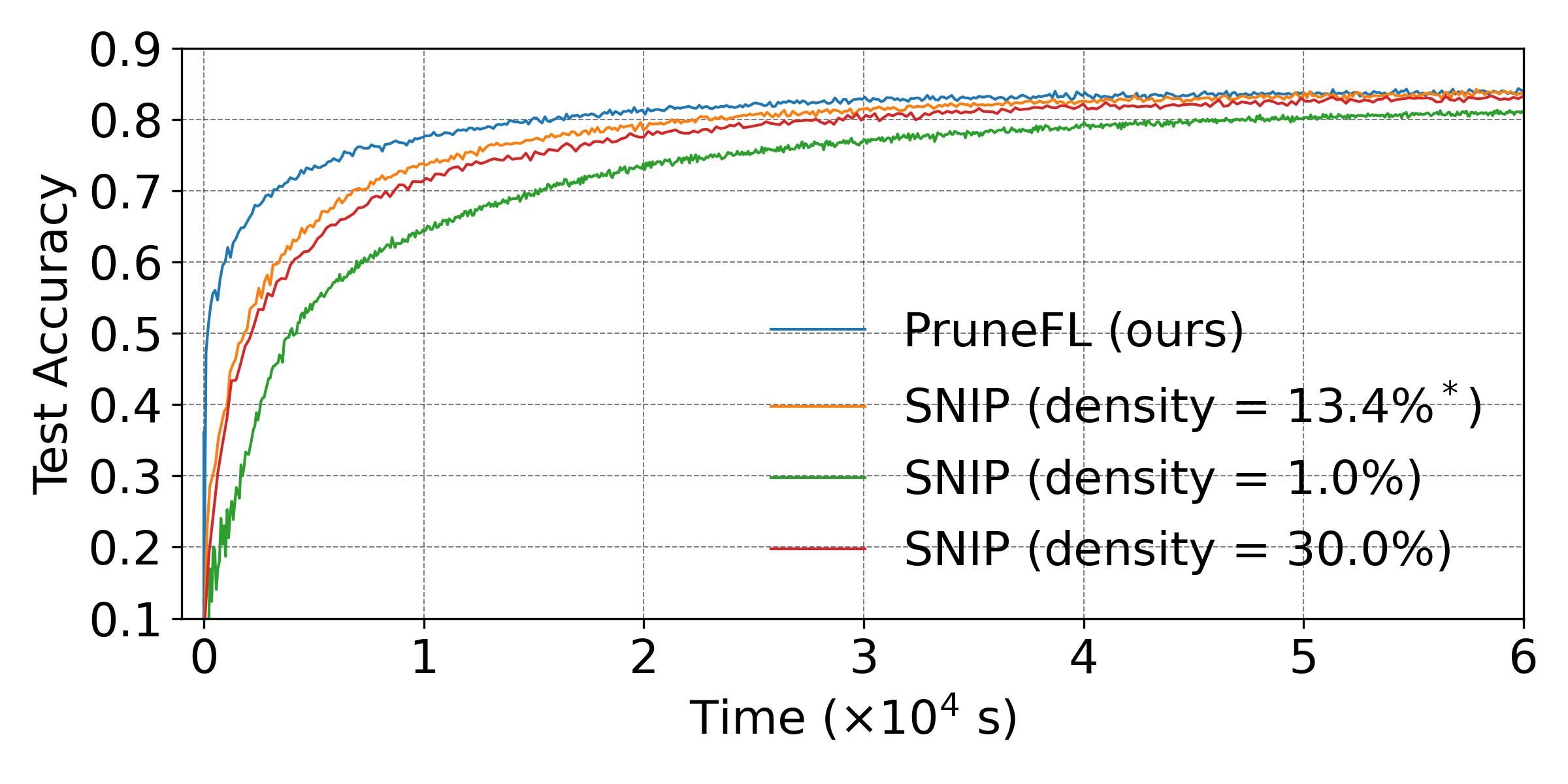}
  \vspace{-3mm}
  \caption{SNIP with different densities (FEMNIST).}
  \vspace{-3mm}
  \label{fig:femnist_snip_diff_size}
\end{figure}

\begin{figure}[t]
  \centering
  \includegraphics[width=\linewidth]{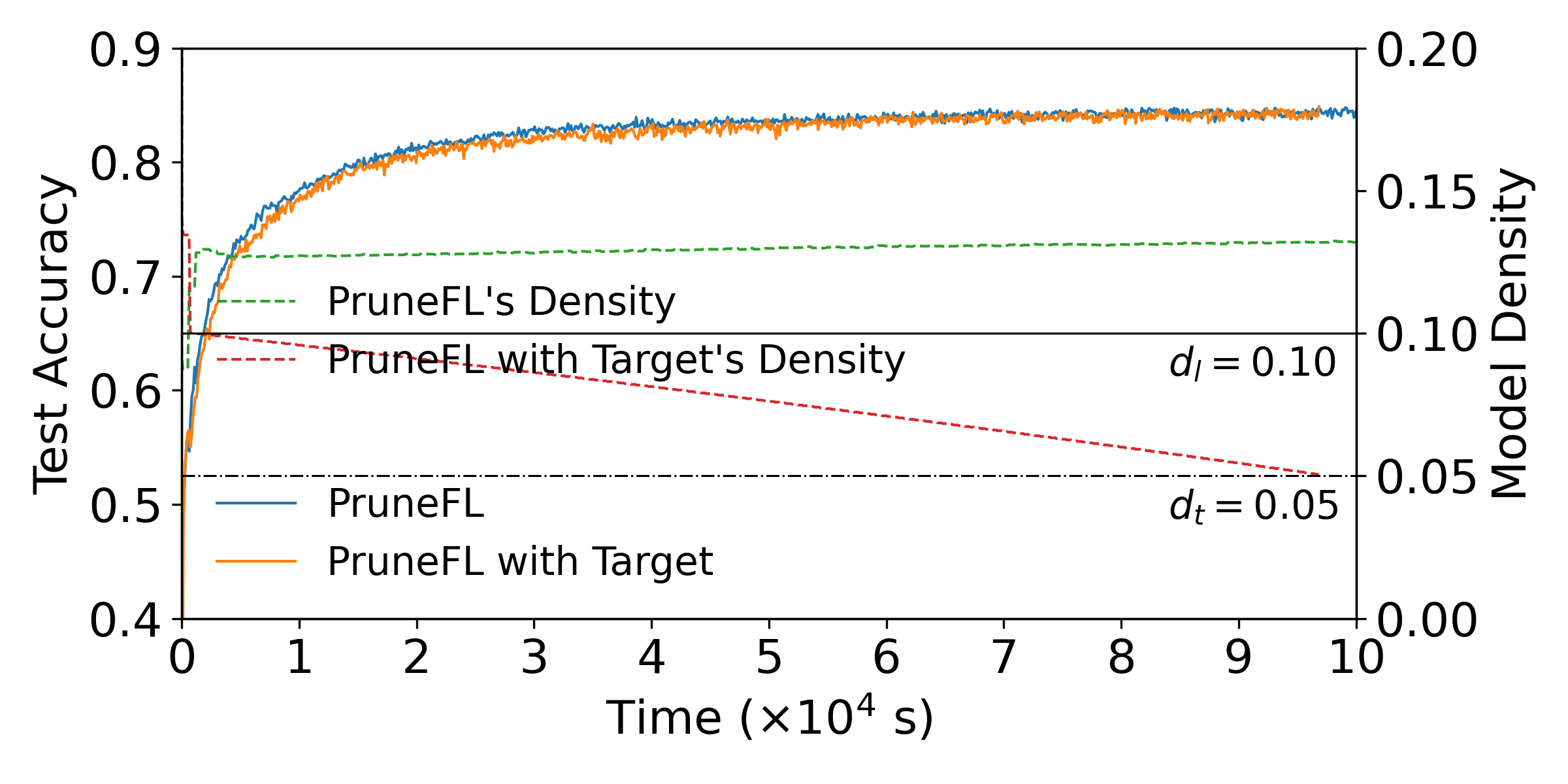}
  \vspace{-8mm}
  \caption{Training with limited and targeted size (FEMNIST).}
  \vspace{-3mm}
  \label{fig:femnist_target}
\end{figure}

\subsection{Training with Limited/Targeted Model Sizes}
There are cases where a hard limit on the maximum model size or a targeted final model size (or both) is desired. For example, if some of the client devices have limited memory or storage so that only a partial model can be loaded, then the model size must be constrained after initial pruning. Targeted model size may be needed in the case where the goal of the FL system is to obtain a model with a certain small size at the end of training.

Next, we present an extended PruneFL with limited and targeted model sizes, and show that with reasonable constraints, PruneFL still achieves good results. We use a heuristic way to limit the model size: we stop Algorithm~\ref{alg:greedySearchNewLinear} early when the number of remaining parameters reaches the maximum allowed size, and we schedule the maximum size of the model to decrease linearly. Assuming $d_l$ is the density limit, $d_t$ is the target model density at the end of the further pruning stage ($d_t \leq d_l$), and PruneFL is run for $r_\text{max}$ rounds after the initial pruning stage, then the maximum density at round $r$ is $d_{\text{max}}(r) = \big(r\cdot d_t + (r_\text{max}-r)\cdot d_l \big)/r_\text{max}$. The results of selecting $d_l = 10\%$ and $d_t = 5\%$ for Conv-2 on FEMNIST are given in Fig.~\ref{fig:femnist_target}. We see that if we do not impose these model size constraints, PruneFL exceeds the density limits $d_l = 10\%$ and $d_t = 5\%$ defined in this example, and obtains a model that is much larger than the target density $d_t = 5\%$ at the end of training. We see that PruneFL with effective size limit and target still achieves fast convergence and similar convergence accuracy, and the model size is always limited below the threshold $d_l = 10\%$ and reaches the target density $d_t = 5\%$ at the end of training.

\subsection{Relative Importance Between Layers}
Similar to SynFlow~\cite{tanaka2020synflow} and SNIP~\cite{lee2018snip}, our algorithm discovers the relative importance between layers automatically. Taking CIFAR-10 on VGG-11 as an example, the densities of convolutional layers and fully-connected layers at convergence are listed in Table~\ref{tab:layer_density} in a sequential manner. We observe that the input and output layers are not pruned to a low density, indicating that they are relatively important in the neural network architectures. This also agrees with the pruning scheme in~\cite{frankle2018the}, where the authors empirically set the pruning rate of the output layer to a small percentage or even to zero. Some large convolutional layers, such as the last two convolutional layers in VGG-11, are identified as redundant, and thus have small densities at convergence.

{\renewcommand{\arraystretch}{1.3}
\begin{table}[t]
\caption{Density of each layer at convergence (no C.S).}
\label{tab:layer_density}
\footnotesize
\centering
\begin{tabular}{ccccc}
\hline
Experiment & VGG-11 on CIFAR-10 \\
\hline \hline
Convolutional & $1.0$, $0.80$, $0.84$, $0.84$, $0.50$, $0.07$, $0.01$, $0.03$ \\
\hline
Fully-connected & $0.14$, $0.15$, $0.7$  \\
\hline
\end{tabular}
\vspace{-2mm}
\end{table}
}

\section{Conclusion}\label{sec:conclusion}
We have proposed PruneFL for FL in edge/mobile computing environments, where the goal is to effectively reduce the size of neural networks so that resource-limited clients can train them within a short time.
Our PruneFL method includes initial and further pruning stages, which improves the performance compared to only having a single stage.
PruneFL also includes a low-complexity adaptive pruning method for efficient FL, which finds a desired model size that can achieve a similar prediction accuracy as the original model but with much less time. Our experiments on Raspberry Pi devices confirm that we improve the cost-efficiency of FL while
obtaining a lottery ticket.
Our method can be applied together with other compression techniques, such as quantization, to further reduce the communication overhead.

\section*{Acknowledgment}

This work was partially supported by the U.S. Office of Naval Research under
Grant N00014-19-1-2566, the U.S. National Science Foundation AI Institute Athena under Grant CNS-2112562, and the U.S. Army Research Laboratory and the U.K. Ministry of Defence under Agreement Number W911NF-16-3-0001. The views and conclusions contained in this document are those of the authors and should not be interpreted as representing the official policies, either expressed or implied, of U.S. Office of Naval Research, the U.S. National Science Foundation, the U.S. Army Research Laboratory, the U.S. Government, the U.K. Ministry of Defence or the U.K. Government. The U.S. and U.K. Governments are authorized to reproduce and distribute reprints for Government purposes notwithstanding any copyright notation hereon.
V. Valls has also received funding from the European Union’s Horizon 2020 research and innovation programme under the Marie Skłodowska-Curie grant agreement No. 795244.

\ifCLASSOPTIONcaptionsoff
  \newpage
\fi


\begin{thebibliography}{10}
\providecommand{\url}[1]{#1}
\csname url@samestyle\endcsname
\providecommand{\newblock}{\relax}
\providecommand{\bibinfo}[2]{#2}
\providecommand{\BIBentrySTDinterwordspacing}{\spaceskip=0pt\relax}
\providecommand{\BIBentryALTinterwordstretchfactor}{4}
\providecommand{\BIBentryALTinterwordspacing}{\spaceskip=\fontdimen2\font plus
\BIBentryALTinterwordstretchfactor\fontdimen3\font minus
  \fontdimen4\font\relax}
\providecommand{\BIBforeignlanguage}[2]{{%
\expandafter\ifx\csname l@#1\endcsname\relax
\typeout{** WARNING: IEEEtran.bst: No hyphenation pattern has been}%
\typeout{** loaded for the language `#1'. Using the pattern for}%
\typeout{** the default language instead.}%
\else
\language=\csname l@#1\endcsname
\fi
#2}}
\providecommand{\BIBdecl}{\relax}
\BIBdecl

\bibitem{Goodfellow-et-al-2016}
I.~Goodfellow, Y.~Bengio, and A.~Courville, \emph{Deep Learning}.\hskip 1em
  plus 0.5em minus 0.4em\relax MIT Press, 2016,
  \url{http://www.deeplearningbook.org}.

\bibitem{mcmahan2017communication}
H.~B. McMahan, E.~Moore, D.~Ramage, S.~Hampson, and B.~A. y~Arcas,
  ``Communication-efficient learning of deep networks from decentralized
  data,'' in \emph{AISTATS}, 2017.

\bibitem{li2019federated}
T.~Li, A.~K. Sahu, A.~Talwalkar, and V.~Smith, ``Federated learning:
  Challenges, methods, and future directions,'' \emph{arXiv preprint
  arXiv:1908.07873}, 2019.

\bibitem{park2019wireless}
J.~Park, S.~Samarakoon, M.~Bennis, and M.~Debbah, ``Wireless network
  intelligence at the edge,'' \emph{Proceedings of the IEEE}, vol. 107, no.~11,
  pp. 2204--2239, 2019.

\bibitem{yang2019federated}
Q.~Yang, Y.~Liu, T.~Chen, and Y.~Tong, ``Federated machine learning: Concept
  and applications,'' \emph{ACM Transactions on Intelligent Systems and
  Technology (TIST)}, vol.~10, no.~2, p.~12, 2019.

\bibitem{kairouz2019advances}
P.~Kairouz, H.~B. McMahan \emph{et~al.}, ``Advances and open problems in
  federated learning,'' \emph{arXiv preprint arXiv:1912.04977}, 2019.

\bibitem{konevcny2016federated}
J.~Kone{\v{c}}n{\`y}, H.~B. McMahan, F.~X. Yu, P.~Richt{\'a}rik, A.~T. Suresh,
  and D.~Bacon, ``Federated learning: Strategies for improving communication
  efficiency,'' \emph{arXiv preprint arXiv:1610.05492}, 2016.

\bibitem{caldas2018expanding}
S.~Caldas, J.~Kone{\v{c}}ny, H.~B. McMahan, and A.~Talwalkar, ``Expanding the
  reach of federated learning by reducing client resource requirements,''
  \emph{arXiv preprint arXiv:1812.07210}, 2018.

\bibitem{han2020adaptive}
P.~Han, S.~Wang, and K.~K. Leung, ``Adaptive gradient sparsification for
  efficient federated learning: An online learning approach,'' in \emph{IEEE
  ICDCS}, 2020.

\bibitem{xu2019elfish}
Z.~Xu, Z.~Yang, J.~Xiong, J.~Yang, and X.~Chen, ``Elfish: Resource-aware
  federated learning on heterogeneous edge devices,'' \emph{arXiv preprint
  arXiv:1912.01684}, 2019.

\bibitem{frankle2018the}
J.~Frankle and M.~Carbin, ``The lottery ticket hypothesis: Finding sparse,
  trainable neural networks,'' in \emph{ICLR}, 2019.

\bibitem{lecun1990optimal}
Y.~LeCun, J.~S. Denker, and S.~A. Solla, ``Optimal brain damage,'' in
  \emph{Advances in neural information processing systems}, 1990, pp. 598--605.

\bibitem{han2015learning}
S.~Han, J.~Pool, J.~Tran, and W.~Dally, ``Learning both weights and connections
  for efficient neural network,'' in \emph{Advances in neural information
  processing systems}, 2015, pp. 1135--1143.

\bibitem{morcos2019one}
A.~Morcos, H.~Yu, M.~Paganini, and Y.~Tian, ``One ticket to win them all:
  generalizing lottery ticket initializations across datasets and optimizers,''
  in \emph{Advances in Neural Information Processing Systems}, 2019, pp.
  4933--4943.

\bibitem{narang2017exploring}
S.~Narang, E.~Elsen, G.~Diamos, and S.~Sengupta, ``Exploring sparsity in
  recurrent neural networks,'' in \emph{International Conference on Learning
  Representations}, 2017.

\bibitem{zhu2017prune}
M.~Zhu and S.~Gupta, ``To prune, or not to prune: exploring the efficacy of
  pruning for model compression,'' \emph{arXiv preprint arXiv:1710.01878},
  2017.

\bibitem{tanaka2020synflow}
H.~Tanaka, D.~Kunin, D.~L. Yamins, and S.~Ganguli, ``Pruning neural networks
  without any data by iteratively conserving synaptic flow,'' \emph{Advances in
  Neural Information Processing Systems}, vol.~33, 2020.

\bibitem{lee2018snip}
\BIBentryALTinterwordspacing
N.~Lee, T.~Ajanthan, and P.~Torr, ``Snip: Single-shot network pruning based on
  connection sensitivity,'' in \emph{International Conference on Learning
  Representations}, 2019. [Online]. Available:
  \url{https://openreview.net/forum?id=B1VZqjAcYX}
\BIBentrySTDinterwordspacing

\bibitem{Lin2020Dynamic}
\BIBentryALTinterwordspacing
T.~Lin, S.~U. Stich, L.~Barba, D.~Dmitriev, and M.~Jaggi, ``Dynamic model
  pruning with feedback,'' in \emph{International Conference on Learning
  Representations}, 2020. [Online]. Available:
  \url{https://openreview.net/forum?id=SJem8lSFwB}
\BIBentrySTDinterwordspacing

\bibitem{anwar2017structured}
S.~Anwar, K.~Hwang, and W.~Sung, ``Structured pruning of deep convolutional
  neural networks,'' \emph{ACM Journal on Emerging Technologies in Computing
  Systems (JETC)}, vol.~13, no.~3, pp. 1--18, 2017.

\bibitem{lym2019prunetrain}
S.~Lym, E.~Choukse, S.~Zangeneh, W.~Wen, S.~Sanghavi, and M.~Erez,
  ``Prunetrain: fast neural network training by dynamic sparse model
  reconfiguration,'' in \emph{Proceedings of the International Conference for
  High Performance Computing, Networking, Storage and Analysis}, 2019, pp.
  1--13.

\bibitem{yu2019parallel}
H.~Yu, S.~Yang, and S.~Zhu, ``Parallel restarted sgd with faster convergence
  and less communication: Demystifying why model averaging works for deep
  learning,'' in \emph{Proceedings of the AAAI Conference on Artificial
  Intelligence}, vol.~33, 2019, pp. 5693--5700.

\bibitem{Li2020On}
\BIBentryALTinterwordspacing
X.~Li, K.~Huang, W.~Yang, S.~Wang, and Z.~Zhang, ``On the convergence of fedavg
  on non-iid data,'' in \emph{International Conference on Learning
  Representations}, 2020. [Online]. Available:
  \url{https://openreview.net/forum?id=HJxNAnVtDS}
\BIBentrySTDinterwordspacing

\bibitem{wang2020local}
J.~Wang, S.~Wang, R.-R. Chen, and M.~Ji, ``Local averaging helps: Hierarchical
  federated learning and convergence analysis,'' \emph{arXiv preprint
  arXiv:2010.12998}, 2020.

\bibitem{wang2019MLSys}
J.~Wang and G.~Joshi, ``Adaptive communication strategies to achieve the best
  error-runtime trade-off in local-update sgd,'' in \emph{Machine Learning and
  Systems (MLSys)}, 2019.

\bibitem{wang2018edge}
S.~{Wang}, T.~{Tuor}, T.~{Salonidis}, K.~K. {Leung}, C.~{Makaya}, T.~{He}, and
  K.~{Chan}, ``Adaptive federated learning in resource constrained edge
  computing systems,'' \emph{IEEE Journal on Selected Areas in Communications},
  vol.~37, no.~6, pp. 1205--1221, June 2019.

\bibitem{karimireddy2019scaffold}
S.~P. Karimireddy, S.~Kale, M.~Mohri, S.~J. Reddi, S.~U. Stich, and A.~T.
  Suresh, ``Scaffold: Stochastic controlled averaging for federated learning,''
  \emph{arXiv preprint arXiv:1910.06378}, 2019.

\bibitem{pmlr-v97-karimireddy19a}
S.~P. Karimireddy, Q.~Rebjock, S.~Stich, and M.~Jaggi, ``Error feedback fixes
  {S}ign{SGD} and other gradient compression schemes,'' in \emph{International
  Conference on Machine Learning}, vol.~97, Jun. 2019, pp. 3252--3261.

\bibitem{alistarh2018convergence}
D.~Alistarh, T.~Hoefler, M.~Johansson, N.~Konstantinov, S.~Khirirat, and
  C.~Renggli, ``The convergence of sparsified gradient methods,'' in
  \emph{Advances in Neural Information Processing Systems}, 2018, pp.
  5973--5983.

\bibitem{shi2019convergence}
S.~Shi, K.~Zhao, Q.~Wang, Z.~Tang, and X.~Chu, ``A convergence analysis of
  distributed sgd with communication-efficient gradient sparsification,'' in
  \emph{Proceedings of the Twenty-Eighth International Joint Conference on
  Artificial Intelligence, IJCAI-19}, 2019, pp. 3411--3417.

\bibitem{peng2018linear}
P.~Jiang and G.~Agrawal, ``A linear speedup analysis of distributed deep
  learning with sparse and quantized communication,'' in \emph{Advances in
  Neural Information Processing Systems 31}, S.~Bengio, H.~Wallach,
  H.~Larochelle, K.~Grauman, N.~Cesa-Bianchi, and R.~Garnett, Eds., 2018, pp.
  2525--2536.

\bibitem{du2018efficient}
W.~Du, X.~Zeng, M.~Yan, and M.~Zhang, ``Efficient federated learning via
  variational dropout,'' 2018.

\bibitem{li2020lotteryfl}
A.~Li, J.~Sun, B.~Wang, L.~Duan, S.~Li, Y.~Chen, and H.~Li, ``Lotteryfl:
  Personalized and communication-efficient federated learning with lottery
  ticket hypothesis on non-iid datasets,'' \emph{arXiv preprint
  arXiv:2008.03371}, 2020.

\bibitem{quantization_gupta2015deep}
S.~Gupta, A.~Agrawal, K.~Gopalakrishnan, and P.~Narayanan, ``Deep learning with
  limited numerical precision,'' in \emph{International Conference on Machine
  Learning}, 2015, pp. 1737--1746.

\bibitem{jaderberg2014lowrank}
M.~Jaderberg, A.~Vedaldi, and A.~Zisserman, ``Speeding up convolutional neural
  networks with low rank expansions,'' in \emph{Proceedings of the British
  Machine Vision Conference. BMVA Press}, 2014.

\bibitem{hinton2015distilling}
G.~Hinton, O.~Vinyals, and J.~Dean, ``Distilling the knowledge in a neural
  network,'' \emph{arXiv preprint arXiv:1503.02531}, 2015.

\bibitem{lin2017runtime}
J.~Lin, Y.~Rao, J.~Lu, and J.~Zhou, ``Runtime neural pruning,'' in
  \emph{Proceedings of the 31st International Conference on Neural Information
  Processing Systems}, 2017, pp. 2178--2188.

\bibitem{hu2019dnnpartition}
C.~Hu, W.~Bao, D.~Wang, and F.~Liu, ``Dynamic adaptive dnn surgery for
  inference acceleration on the edge,'' in \emph{IEEE INFOCOM 2019-IEEE
  Conference on Computer Communications}.\hskip 1em plus 0.5em minus
  0.4em\relax IEEE, 2019, pp. 1423--1431.

\bibitem{vepakomma2018splitlearning}
P.~Vepakomma, O.~Gupta, T.~Swedish, and R.~Raskar, ``Split learning for health:
  Distributed deep learning without sharing raw patient data,'' \emph{arXiv
  preprint arXiv:1812.00564}, 2018.

\bibitem{bonawitz2017practical}
K.~Bonawitz, V.~Ivanov, B.~Kreuter, A.~Marcedone, H.~B. McMahan, S.~Patel,
  D.~Ramage, A.~Segal, and K.~Seth, ``Practical secure aggregation for
  privacy-preserving machine learning,'' in \emph{proceedings of the 2017 ACM
  SIGSAC Conference on Computer and Communications Security}, 2017, pp.
  1175--1191.

\bibitem{Li2020Fair}
\BIBentryALTinterwordspacing
T.~Li, M.~Sanjabi, A.~Beirami, and V.~Smith, ``Fair resource allocation in
  federated learning,'' in \emph{International Conference on Learning
  Representations}, 2020. [Online]. Available:
  \url{https://openreview.net/forum?id=ByexElSYDr}
\BIBentrySTDinterwordspacing

\bibitem{bonawitz2019towards}
K.~Bonawitz, H.~Eichner, W.~Grieskamp, D.~Huba, A.~Ingerman, V.~Ivanov,
  C.~Kiddon, J.~Konecny, S.~Mazzocchi, H.~B. McMahan \emph{et~al.}, ``Towards
  federated learning at scale: System design,'' \emph{arXiv preprint
  arXiv:1902.01046}, 2019.

\bibitem{wang2019picking}
C.~Wang, G.~Zhang, and R.~Grosse, ``Picking winning tickets before training by
  preserving gradient flow,'' in \emph{International Conference on Learning
  Representations}, 2019.

\bibitem{molchanov2016pruning}
P.~Molchanov, S.~Tyree, T.~Karras, T.~Aila, and J.~Kautz, ``Pruning
  convolutional neural networks for resource efficient inference,'' \emph{arXiv
  preprint arXiv:1611.06440}, 2016.

\bibitem{yu2019linear}
H.~Yu, R.~Jin, and S.~Yang, ``On the linear speedup analysis of communication
  efficient momentum sgd for distributed non-convex optimization,'' in
  \emph{International Conference on Machine Learning}.\hskip 1em plus 0.5em
  minus 0.4em\relax PMLR, 2019, pp. 7184--7193.

\bibitem{LEAF}
\BIBentryALTinterwordspacing
S.~Caldas, P.~Wu, T.~Li, J.~Konecn{\'{y}}, H.~B. McMahan, V.~Smith, and
  A.~Talwalkar, ``{LEAF:} {A} benchmark for federated settings,'' \emph{CoRR},
  vol. abs/1812.01097, 2018. [Online]. Available:
  \url{http://arxiv.org/abs/1812.01097}
\BIBentrySTDinterwordspacing

\bibitem{simonyan2014very}
K.~Simonyan and A.~Zisserman, ``Very deep convolutional networks for
  large-scale image recognition,'' \emph{arXiv preprint arXiv:1409.1556}, 2014.

\bibitem{krizhevsky2009cifar}
A.~Krizhevsky, G.~Hinton \emph{et~al.}, ``Learning multiple layers of features
  from tiny images,'' Citeseer, Tech. Rep., 2009.

\bibitem{he2016deep}
K.~He, X.~Zhang, S.~Ren, and J.~Sun, ``Deep residual learning for image
  recognition,'' in \emph{Proceedings of the IEEE conference on computer vision
  and pattern recognition}, 2016, pp. 770--778.

\bibitem{deng2009imagenet}
J.~Deng, W.~Dong, R.~Socher, L.-J. Li, K.~Li, and L.~Fei-Fei, ``Imagenet: A
  large-scale hierarchical image database,'' in \emph{2009 IEEE conference on
  computer vision and pattern recognition}.\hskip 1em plus 0.5em minus
  0.4em\relax Ieee, 2009, pp. 248--255.

\bibitem{howard2019searching}
A.~Howard, M.~Sandler, G.~Chu, L.-C. Chen, B.~Chen, M.~Tan, W.~Wang, Y.~Zhu,
  R.~Pang, V.~Vasudevan \emph{et~al.}, ``Searching for mobilenetv3,'' in
  \emph{Proceedings of the IEEE/CVF International Conference on Computer
  Vision}, 2019, pp. 1314--1324.

\bibitem{lecun1998gradient}
Y.~LeCun, L.~Bottou, Y.~Bengio, P.~Haffner \emph{et~al.}, ``Gradient-based
  learning applied to document recognition,'' \emph{Proceedings of the IEEE},
  vol.~86, no.~11, pp. 2278--2324, 1998.

\bibitem{tang2018flops}
R.~Tang, A.~Adhikari, and J.~Lin, ``Flops as a direct optimization objective
  for learning sparse neural networks,'' \emph{arXiv preprint
  arXiv:1811.03060}, 2018.

\end{thebibliography}


\clearpage

\onecolumn
\numberwithin{equation}{subsection}
\counterwithin{figure}{subsection}
\counterwithin{algocf}{subsection}
\counterwithin{table}{subsection}

\appendix

\subsection{Extension to Non-linear \texorpdfstring{$T(\mathcal{M})$}{T(M)}}
\label{appendix:non-linear-T}

For the case where $T(\mathcal{M})$ is non-linear, but a general monotone and positive set function instead, we can still find a local optimal solution to (\ref{eq:optimizationProblem}) using Algorithm~\ref{alg:greedySearchNew}. We can see that the complexity of Algorithm~\ref{alg:greedySearchNew} is $O(|\mathcal{P}|^2)$.

\begin{algorithm}[h]
\caption{Solving (\ref{eq:optimizationProblem}), general $T(\cdot)$}
\label{alg:greedySearchNew}

$\mathcal{A} \leftarrow \emptyset$;

$j^* \leftarrow$ None;

\Repeat{$\frac{g_j^2}{t_j(\mathcal{A} \cup \overline{\mathcal{P}})} < \Gamma \left( \mathcal{A} \cup \overline{\mathcal{P}} \right)$}{

\If{$j^*$ is not None}{
$\mathcal{A} \leftarrow \mathcal{A} \cup \{ j^* \}$;
}

$j^* \leftarrow {\arg\max}_{j \in \mathcal{P} \setminus \mathcal{A}} \,\,\, \frac{g_j^2}{t_j(\mathcal{A} \cup \overline{\mathcal{P}})}$;
}

\textbf{return} $\mathcal{A}$ \tcp*{final result}

\end{algorithm}

\begin{theorem}
For \emph{general} $T(\mathcal{M})$, we have $\Gamma \left( \mathcal{A} \cup \overline{\mathcal{P}} \right) \geq \Gamma \left( \mathcal{A}' \cup \overline{\mathcal{P}} \right)$, where $\mathcal{A}$ is given by Algorithm~\ref{alg:greedySearchNew} and $\mathcal{A}' = \mathcal{A} \cup \{j\}$ for any $j \in \mathcal{P} \setminus \mathcal{A}$. 
\label{theorem:generalT}
\end{theorem}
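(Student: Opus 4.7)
The plan is to reduce the claim directly to the stopping criterion of Algorithm~\ref{alg:greedySearchNew} via the elementary mediant (``freshman's'') inequality. Write $\mathcal{M} := \mathcal{A} \cup \overline{\mathcal{P}}$ for the set returned by the algorithm and fix any $j \in \mathcal{P} \setminus \mathcal{A}$. Because $\Delta(\cdot)$ is additive (a sum of $g_i^2$ terms, as defined in~(\ref{eq:riskDecayApproximation})), $\Delta(\mathcal{M} \cup \{j\}) = \Delta(\mathcal{M}) + g_j^2$, while by the definition of the marginal cost, $T(\mathcal{M} \cup \{j\}) = T(\mathcal{M}) + t_j(\mathcal{M})$. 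Monotonicity and positivity of $T$ give $T(\mathcal{M}), t_j(\mathcal{M}) > 0$, so all the ratios below are well defined.

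First I would apply the mediant inequality: for positive $a,b,c,d$, one has $\tfrac{a+c}{b+d} \le \tfrac{a}{b}$ if and only if $\tfrac{c}{d} \le \tfrac{a}{b}$. Setting $a=\Delta(\mathcal{M})$, $b=T(\mathcal{M})$, $c=g_j^2$, $d=t_j(\mathcal{M})$ converts the goal $\Gamma(\mathcal{M} \cup \{j\}) \le \Gamma(\mathcal{M})$ into the single scalar condition $\tfrac{g_j^2}{t_j(\mathcal{M})} \le \Gamma(\mathcal{M})$. Next I would invoke the stopping rule of Algorithm~\ref{alg:greedySearchNew}: the \texttt{repeat} loop exits precisely when $\tfrac{g_{j^*}^2}{t_{j^*}(\mathcal{M})} < \Gamma(\mathcal{M})$, where $j^* = \arg\max_{i \in \mathcal{P}\setminus\mathcal{A}} \tfrac{g_i^2}{t_i(\mathcal{M})}$. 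Since $j^*$ attains the maximum, every $j \in \mathcal{P} \setminus \mathcal{A}$ inherits the strict inequality
\[
\frac{g_j^2}{t_j(\mathcal{M})} \;\le\; \frac{g_{j^*}^2}{t_{j^*}(\mathcal{M})} \;<\; \Gamma(\mathcal{M}).
\]
Plugging this into the equivalence from the mediant step yields $\Gamma(\mathcal{M} \cup \{j\}) \le \Gamma(\mathcal{M})$, which is exactly the theorem.

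Almost nothing in this argument is delicate. The one minor nuisance to nail down is that Algorithm~\ref{alg:greedySearchNew} genuinely \emph{terminates}, so that there is a well-defined final $\mathcal{A}$ and a stopping condition to cite. This follows because $\mathcal{P}$ is finite and each pass of the loop either exits via the \texttt{until} condition or strictly enlarges $\mathcal{A}$; the boundary case $\mathcal{P}\setminus\mathcal{A}=\emptyset$ is vacuous since there is then no $j$ for which the theorem's hypothesis can hold. The main thing to emphasize in the write-up is the conceptual point that this is only a \emph{local} (single-element) optimality statement, in contrast with the global optimality in Theorem~\ref{theorem:linearT}: in the nonlinear regime, swaps of two or more elements could in principle alter $T$ in ways not captured by a one-element marginal test, which is why the greedy guarantee weakens.
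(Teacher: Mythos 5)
Your proposal is correct and follows essentially the same route as the paper: the mediant equivalence you invoke is exactly the paper's Lemma~\ref{lemma:incrementCondition} specialized to a single-element increment (with $\delta_\Delta = g_j^2$ and $\delta_T = t_j(\mathcal{M}) > 0$), and the rest is the same appeal to the algorithm's stopping condition. No substantive difference.
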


Theorem~\ref{theorem:generalT} shows that for general $T(\mathcal{M})$, adding another component to $\mathcal{A}$ cannot improve the solution to (\ref{eq:optimizationProblem}).

\textit{Remark.} Theorem~\ref{theorem:generalT} gives a weaker result for general $T(\cdot)$ compared to the global optimality result in Theorem~\ref{theorem:linearT} for linear $T(\cdot)$, because when $\mathcal{A}$ and $\mathcal{A}'$ differ by more than one element, it is non-straightforward to express the change in cost for general $T(\cdot)$. Furthermore, there may exist multiple local optimal solutions for general $T(\cdot)$.

\subsection{Proofs}

\subsubsection{Proof of Theorems~\ref{theorem:linearT} and \ref{theorem:generalT}}

Recall that $\Gamma(\mathcal{M}) := \frac{\Delta(\mathcal{M})}{T(\mathcal{M})}$, where $\Delta(\mathcal{M})$ and $T(\mathcal{M})$ are both monotone and positive functions, i.e., for any $\mathcal{M} \subseteq \mathcal{M}'$, we have $0 \leq \Delta(\mathcal{M}) \leq \Delta(\mathcal{M}')$ and $0 \leq T(\mathcal{M}) \leq T(\mathcal{M}')$.

\begin{lemma} \label{lemma:incrementCondition}
For any $\mathcal{M}$ and $\mathcal{M}'$, let $\delta_\Delta(\mathcal{M}, \mathcal{M}') := \Delta(\mathcal{M}') - \Delta(\mathcal{M})$ and $\delta_T(\mathcal{M}, \mathcal{M}') := T(\mathcal{M}') - T(\mathcal{M})$. We have $\Gamma(\mathcal{M}') \leq \Gamma(\mathcal{M})$ if and only if $\delta_\Delta(\mathcal{M}, \mathcal{M}') \leq  \Gamma(\mathcal{M}) \cdot \delta_T(\mathcal{M}, \mathcal{M}')$.
\end{lemma}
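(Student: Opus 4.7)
The plan is to prove this iff statement by direct algebraic manipulation, using only the positivity of $T$ guaranteed by the hypothesis that $T$ is a positive set function. Since both $T(\mathcal{M}) > 0$ and $T(\mathcal{M}') > 0$, multiplying and dividing by these quantities preserves the direction of any inequality, so cross-multiplication is a reversible operation throughout the argument.

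First I would rewrite the inequality $\Gamma(\mathcal{M}') \leq \Gamma(\mathcal{M})$ as $\Delta(\mathcal{M}')/T(\mathcal{M}') \leq \Delta(\mathcal{M})/T(\mathcal{M})$ by the definition of $\Gamma$. Cross-multiplying by the positive denominators yields the equivalent inequality $\Delta(\mathcal{M}') \cdot T(\mathcal{M}) \leq \Delta(\mathcal{M}) \cdot T(\mathcal{M}')$. Next, I would substitute $\Delta(\mathcal{M}') = \Delta(\mathcal{M}) + \delta_\Delta(\mathcal{M},\mathcal{M}')$ and $T(\mathcal{M}') = T(\mathcal{M}) + \delta_T(\mathcal{M},\mathcal{M}')$. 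After cancelling the common term $\Delta(\mathcal{M}) \cdot T(\mathcal{M})$ on both sides, the inequality reduces to $\delta_\Delta(\mathcal{M},\mathcal{M}') \cdot T(\mathcal{M}) \leq \Delta(\mathcal{M}) \cdot \delta_T(\mathcal{M},\mathcal{M}')$. Dividing both sides by the positive quantity $T(\mathcal{M})$ and recognizing $\Delta(\mathcal{M})/T(\mathcal{M}) = \Gamma(\mathcal{M})$ gives the claimed characterization. Each step above is an equivalence, so the iff is established.

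There is essentially no obstacle: the argument is just a two-line cross-multiplication, and the only hypothesis one needs is the strict positivity of $T$, which is stated immediately above the lemma. I would also briefly note that the statement does \emph{not} require any monotonicity assumption or any containment relation between $\mathcal{M}$ and $\mathcal{M}'$ — $\delta_\Delta$ and $\delta_T$ can have either sign, and the equivalence still holds. This is worth flagging because the subsequent uses of the lemma in the proofs of Theorems~\ref{theorem:linearT} and~\ref{theorem:generalT} will apply it with $\mathcal{M} \subseteq \mathcal{M}'$ (so both deltas are nonnegative), but the lemma itself is stated and proved in full generality.
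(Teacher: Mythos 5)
Your proof is correct and follows essentially the same route as the paper's: both arguments clear the positive denominators, substitute $\Delta(\mathcal{M}') = \Delta(\mathcal{M}) + \delta_\Delta$ and $T(\mathcal{M}') = T(\mathcal{M}) + \delta_T$, and cancel the common term, with every step an equivalence. The only cosmetic difference is that you cross-multiply by both $T(\mathcal{M})$ and $T(\mathcal{M}')$ and divide by $T(\mathcal{M})$ at the end, whereas the paper multiplies only by $T(\mathcal{M}')$ and uses $\Gamma(\mathcal{M})\cdot T(\mathcal{M}) = \Delta(\mathcal{M})$ directly.
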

\begin{proof}
\begin{align*}
& \Gamma(\mathcal{M}') := \frac{\Delta(\mathcal{M}')}{T(\mathcal{M}')} \leq \Gamma(\mathcal{M}) \\
& \iff \Delta(\mathcal{M}') \leq \Gamma(\mathcal{M}) \cdot T(\mathcal{M}') \\
& \iff \Delta(\mathcal{M}) + \delta_\Delta(\mathcal{M}, \mathcal{M}')
\leq
\Gamma(\mathcal{M}) \cdot T(\mathcal{M}) + \Gamma(\mathcal{M}) \cdot \delta_T(\mathcal{M}, \mathcal{M}') \tag{by definition of $\delta_\Delta(\cdot, \cdot)$ and $\delta_T(\cdot, \cdot)$} \\
& \iff \Delta(\mathcal{M}) + \delta_\Delta(\mathcal{M}, \mathcal{M}')
\leq
\Delta(\mathcal{M}) + \Gamma(\mathcal{M}) \cdot \delta_T(\mathcal{M}, \mathcal{M}') \tag{by definition of $\Gamma(\mathcal{M})$} \\
& \iff \delta_\Delta(\mathcal{M}, \mathcal{M}') \leq  \Gamma(\mathcal{M}) \cdot \delta_T(\mathcal{M}, \mathcal{M}') \,\,.
\end{align*}
\end{proof}

We are now ready to prove Theorems~\ref{theorem:linearT} and \ref{theorem:generalT}.

\begin{proof}[Proof of Theorem~\ref{theorem:linearT}]
By definition, we have $\Delta(\mathcal{M}) := \sum_{j \in \mathcal{M}} g_j^2$ and $T(\mathcal{M}) := c + \sum_{j \in \mathcal{M}} t_j$ for any $\mathcal{M}$.

In the following, we let $\mathcal{M} := \mathcal{A} \cup \overline{\mathcal{P}}$ and $\mathcal{M}' := \mathcal{A}' \cup \overline{\mathcal{P}}$. We have
\begin{align}
    \delta_\Delta(\mathcal{M}, \mathcal{M}') & = \Delta(\mathcal{M}') - \Delta(\mathcal{M}) \nonumber\\ &= \sum_{j \in \mathcal{M}' \setminus \mathcal{M}} g_j^2 - \sum_{j \in \mathcal{M} \setminus \mathcal{M}'} g_j^2 \label{eq:proofLinearTStep1}\\
    \delta_T(\mathcal{M}, \mathcal{M}') & = T(\mathcal{M}') - T(\mathcal{M}) \nonumber\\ &= \sum_{j \in \mathcal{M}' \setminus \mathcal{M}} t_j - \sum_{j \in \mathcal{M} \setminus \mathcal{M}'} t_j \,\,. \label{eq:proofLinearTStep2}
\end{align}

For $\mathcal{A}$ obtained from Algorithm~\ref{alg:greedySearchNewLinear}, we can easily see that $\frac{g_j^2}{t_j} < \Gamma(\mathcal{M})$ for any $j \in \mathcal{M}' \setminus \mathcal{M}$ and $\frac{g_{j'}^2}{t_{j'}} \geq \Gamma(\mathcal{M})$ for any $j' \in \mathcal{M} \setminus \mathcal{M}'$. Hence,
\begin{align}
    \sum_{j \in \mathcal{M}' \setminus \mathcal{M}} g_j^2 & < \Gamma(\mathcal{M}) \cdot \sum_{j \in \mathcal{M}' \setminus \mathcal{M}} t_j \\
    \sum_{j \in \mathcal{M} \setminus \mathcal{M}'} g_j^2 & \geq \Gamma(\mathcal{M}) \cdot \sum_{j \in \mathcal{M} \setminus \mathcal{M}'} t_j \,\,.
\end{align}
Combining with (\ref{eq:proofLinearTStep1}) and (\ref{eq:proofLinearTStep2}), we have
$$
\delta_\Delta(\mathcal{M}, \mathcal{M}') \leq \Gamma(\mathcal{M}) \cdot \delta_T(\mathcal{M}, \mathcal{M}')\,\,.
$$
Then, the result follows from Lemma~\ref{lemma:incrementCondition}.
\end{proof}

\begin{proof}[Proof of Theorem~\ref{theorem:generalT}]
Let $\mathcal{M} := \mathcal{A} \cup \overline{\mathcal{P}}$ and $\mathcal{M}' := \mathcal{A}' \cup \overline{\mathcal{P}}$ in this proof.
As $\mathcal{A}' := \mathcal{A} \cup \{j\}$ for some $j \notin \mathcal{A}$ by definition in this theorem, we note that $\delta_\Delta(\mathcal{M}, \mathcal{M}') = g_j^2$ and $\delta_T(\mathcal{M}, \mathcal{M}') = t_j(\mathcal{M})$.

For $\mathcal{A}$ obtained from Algorithm~\ref{alg:greedySearchNew}, it is easy to see that
$\frac{g_j^2}{t_j(\mathcal{M})} < \Gamma \left( \mathcal{M} \right)$
for $j \notin \mathcal{A}$. Hence, $\delta_\Delta(\mathcal{M}, \mathcal{M}') < \Gamma \left( \mathcal{M} \right) \cdot \delta_T(\mathcal{M}, \mathcal{M}')$ and the result follows from Lemma~\ref{lemma:incrementCondition}.
\end{proof}
\vspace{0.2in}

\subsubsection{Proof of Theorem~\ref{theorem:convergence}} \label{appendix:thm2_proof}
The analysis of this section is an extension of Theorem 1 in~\cite{wang2020local} (proof given in Section A.1). Note that Assumption~\ref{assumption:convergence}(a)-(e) still hold if we apply the same mask to all gradients or function values in the LHS, since applying masks is equivalent to replacing a subset of the entries in the parameter with zeros. For the same reason, Assumption~\ref{assumption:convergence}(f) and \ref{assumption:convergence}(g) also hold if the gradients are masked. For convenience, we define three expedient notations in addition to the notations in Table~\ref{tab:main_notations} for pruned values: $\mathbf{g}'_n (\mathbf{w}) := \mathbf{g}_n(\mathbf{w}) \odot \mathbf{m}(k)$, $\nabla' F_n(\mathbf{w}) := \nabla F_n(\mathbf{w}) \odot \mathbf{m}(k)$, and $\nabla' F(\mathbf{w}) := \nabla F(\mathbf{w}) \odot \mathbf{m}(k)$.

We first present a special form of Jensen's inequality, which has the original form below:
$$
\phi \bigg( \frac{\sum_i a_i x_i}{\sum_i a_i} \bigg) \leq \frac{\sum_i a_i \phi(x_i) }{\sum_i a_i}\,\,,
$$
where $\phi(\cdot)$ is a convex function, $a_i$'s are positive weights.
\begin{lemma} Assume $\phi(\cdot)$ is a convex function, $p_n$'s are positive weights that sum to $1$, we have 
$$
\Bigg\| \sum_n p_n \mathbf{x}_n \Bigg\|^2 \leq \sum_n p_n \|\mathbf{x}_n\|^2 \,\,.
$$
\end{lemma}
\begin{proof}
$$
\Bigg\| \sum_n p_n \mathbf{x}_n \Bigg\|^2
=   \left\Vert \begin{bmatrix}
           \sum_n p_n x_{n,1} \\
           \sum_n p_n x_{n,2} \\
           \vdots
    \end{bmatrix}
    \right\Vert^2
= \sum_i \Big( \sum_n p_n x_{n,i} \Big)^2
\leq \sum_i \sum_n p_n x^2_{n,i}
= \sum_n p_n \sum_i x^2_{n,i}
= \sum_n p_n \|\mathbf{x}_n\|^2 \,\,,
$$
where $x_{n,i}$ denotes the $i$-th component of $\mathbf{x}_n$.
\end{proof}

The local updating rule is 
\begin{align*}
    \mathbf{w}'_n(k+1) &= \mathbf{w}_n(k) \odot \mathbf{m}(k) - \eta \mathbf{g}_n \big(\mathbf{w}_n(k) \odot \mathbf{m}(k) \big) \odot \mathbf{m}(k)\\
    &= \mathbf{w}'_n(k) - \eta \mathbf{g}'_n(\mathbf{w}_n'(k))\,\,,\\
\end{align*}
Consequently, the updating rule for the averaged parameters is
\begin{align*}
    \mathbf{w}(k+1) & =  \sum_{n=1}^N p_n \mathbf{w}'_n(k+1)= {\mathbf{w}}'(k) - \eta \sum_{n=1}^N p_n \mathbf{g}'_n(\mathbf{w}_n'(k))\,\,.
\end{align*}

Note that ${\mathbf{w}}(k)$, ${\mathbf{w}}'(k)$ are observable only in iterations where the clients send their local parameters to the server for aggregation (see Algorithm~\ref{alg:adaptive_pruning}), and $\mathbf{g}'(\mathbf{w}_n'(k))$ is dependent on $\mathbf{m}(k)$. We know that
\begin{align}
    &\Expect \left[ F({\mathbf{w}}(k+1)) | \{\mathbf{w}_n(k)\}, \mathbf{m}(k) \right] \nonumber\\
    &=\Expect \bigg[F \Big( {\mathbf{w}}'(k)-\eta  \sum_{n=1}^N p_n\mathbf{g}'_n(\mathbf{w}_n'(k)) \Big) \Big| \{\mathbf{w}_n(k)\}, \mathbf{m}(k) \bigg] \nonumber\\
    & \stackrel{(a)}{\leq} F ({\mathbf{w}}'(k)) - \eta \Expect \left[ \left\langle \nabla F({\mathbf{w}}'(k)) ,  \sum_{n=1}^N p_n\mathbf{g}'_n(\mathbf{w}_n'(k)) \right\rangle \Bigg| \{\mathbf{w}_n(k)\}, \mathbf{m}(k) \right] + \frac{\eta^2 \beta}{2} \Expect \left[ \left\|  \sum_{n=1}^N p_n\mathbf{g}'_n(\mathbf{w}_n'(k)) \right\|^2 \Bigg| \{\mathbf{w}_n(k)\}, \mathbf{m}(k) \right] \nonumber \\
    & \stackrel{(b)}{\leq} F ({\mathbf{w}}(k)) + L\left\| {\mathbf{w}}(k) - {\mathbf{w}}'(k) \right\| - \eta \Expect \Big[ \Big\langle \nabla F({\mathbf{w}}'(k)) ,  \sum_{n=1}^N p_n\mathbf{g}'_n(\mathbf{w}_n'(k)) \Big\rangle \Big| \{\mathbf{w}_n(k)\}, \mathbf{m}(k) \Big] \nonumber \\
    & \quad\quad\quad\quad + \frac{\eta^2 \beta}{2} \Expect \left [ \left\|  \sum_{n=1}^N p_n\mathbf{g}'_n(\mathbf{w}_n'(k)) \right\|^2 \Bigg| \{\mathbf{w}_n(k)\}, \mathbf{m}(k) \right] \nonumber \\
    & = F ({\mathbf{w}}(k)) + L\left\| {\mathbf{w}}(k) - {\mathbf{w}}'(k) \right\| - \eta \Expect \Big[ \Big\langle \nabla' F({\mathbf{w}}'(k)) ,  \sum_{n=1}^N p_n\mathbf{g}'_n(\mathbf{w}_n'(k)) \Big\rangle \Big| \{\mathbf{w}_n(k)\}, \mathbf{m}(k) \Big] \nonumber \\
    & \quad\quad\quad\quad + \frac{\eta^2 \beta}{2} \Expect \left [ \left\|  \sum_{n=1}^N p_n\mathbf{g}'_n(\mathbf{w}_n'(k)) \right\|^2 \Bigg| \{\mathbf{w}_n(k)\}, \mathbf{m}(k) \right] \label{eq:thm2_expansion1}
\end{align}
where (a) is due to Assumption~\ref{assumption:convergence}(a) ($\beta$-smoothness), (b) is due to Assumption~\ref{assumption:convergence}(b) ($L$-Lipschitzness), and (\ref{eq:thm2_expansion1}) is because
\begin{align*}
\Big\langle \nabla F({\mathbf{w}}'(k)) ,  \sum_{n=1}^N p_n\mathbf{g}'_n(\mathbf{w}_n'(k)) \Big\rangle &= \Big\langle \nabla F({\mathbf{w}}'(k)) ,  \sum_{n=1}^N p_n\mathbf{g}_n(\mathbf{w}_n'(k))\odot \mathbf{m}(k) \Big\rangle \\
& = \Big\langle \nabla F({\mathbf{w}}'(k)) \odot \mathbf{m}(k) ,  \sum_{n=1}^N p_n\mathbf{g}_n(\mathbf{w}_n'(k)) \odot \mathbf{m}(k) \Big\rangle \\
& = \Big\langle \nabla' F({\mathbf{w}}'(k)) ,  \sum_{n=1}^N p_n\mathbf{g}'_n(\mathbf{w}_n'(k)) \Big\rangle\,\,.
\end{align*}
The third term in~(\ref{eq:thm2_expansion1}) can be rewritten as

\begin{align}
    &- \eta \Expect \Big[ \Big\langle \nabla' F({\mathbf{w}}'(k)) ,  \sum_{n=1}^N p_n\mathbf{g}'_n(\mathbf{w}_n'(k)) \Big\rangle \Big| \{\mathbf{w}_n(k)\}, \mathbf{m}(k) \Big] \nonumber\\
    &\stackrel{(a)}{=} - \eta \Big\langle \nabla' F({\mathbf{w}}'(k)) ,  \sum_{n=1}^N p_n\nabla'F_n(\mathbf{w}_n'(k)) \Big\rangle \nonumber\\
    & = \frac{\eta}{2} \left(  \bigg \| \nabla' F({\mathbf{w}}'(k)) -  \sum_{n=1}^N p_n\nabla' F_n(\mathbf{w}_n'(k)) \bigg \|^2 -  \big\Vert \nabla' F({\mathbf{w}}'(k)) \big\Vert^2 -  \bigg\Vert  \sum_{n=1}^N p_n\nabla' F_n(\mathbf{w}_n'(k)) \bigg\Vert^2 \right) \,\,, \label{eq:thm2_term3_expansion}
\end{align}
where (a) uses Assumption~\ref{assumption:convergence}(c) (unbiasedness). Our fourth term in~(\ref{eq:thm2_expansion1}) is bounded by 
\begin{align}
    &\Expect \Bigg[ \bigg\|  \sum_{n=1}^N p_n\mathbf{g}'_n(\mathbf{w}_n'(k)) \bigg\|^2 \bigg| \{\mathbf{w}_n(k)\}, \mathbf{m}(k) \Bigg] \nonumber \\
    & \stackrel{(a)}{=} \Expect \Bigg[ \bigg\|  \sum_{n=1}^N p_n \Big(\mathbf{g}'_n(\mathbf{w}_n'(k)) - \nabla'{F_n}(\mathbf{w}_n'(k)) \Big) \bigg\|^2 \bigg| \{\mathbf{w}_n(k)\}, \mathbf{m}(k) \Bigg] + \Bigg( \Expect \bigg[  \bigg\|  \sum_{n=1}^N p_n\nabla'{F_n}(\mathbf{w}_n'(k)) \bigg\| \Big| \{\mathbf{w}_n(k)\}, \mathbf{m}(k) \bigg] \Bigg)^2 \nonumber \\
    & \stackrel{(b)}{=} \Expect \Bigg[  \sum_{n=1}^N \bigg\| p_n \Big(\mathbf{g}'_n(\mathbf{w}_n'(k)) - \nabla'{F_n}(\mathbf{w}_n'(k)) \Big) \bigg\|^2 \bigg| \{\mathbf{w}_n(k)\}, \mathbf{m}(k) \Bigg] + \Bigg( \Expect \bigg[  \bigg\|  \sum_{n=1}^N p_n\nabla'{F_n}(\mathbf{w}_n'(k)) \bigg\| \Big| \{\mathbf{w}_n(k)\}, \mathbf{m}(k) \bigg] \Bigg)^2 \nonumber \\
    &\stackrel{(c)}{\leq} \Big( \sum_{n=1}^N p_n^2 \Big)  \sigma^2 + \Bigg\Vert  \sum_{n=1}^N p_n\nabla'{F_n}(\mathbf{w}_n'(k)) \Bigg\Vert^2 \label{eq:thm2_term4_expansion} \,\,,
\end{align}
where (a) uses the definition of variance, i.e., $\Expect [\|\mathbf{x}\|^2] = \Expect \big[\| \mathbf{x} - \Expect [\mathbf{x}] \|^2 \big] + [\Expect\|\mathbf{x}\|]^2$; (b) uses Assumption~\ref{assumption:convergence}(g) (client independence), expands the first term and removes the zero-valued cross-product terms; (c) uses Assumption~\ref{assumption:convergence}(d) (bounded variance). Substituting (\ref{eq:thm2_term3_expansion}) and (\ref{eq:thm2_term4_expansion}) into (\ref{eq:thm2_expansion1}), (\ref{eq:thm2_expansion1}) becomes
\begin{align}
    &\Expect \left[ F({\mathbf{w}}(k+1)) | \{\mathbf{w}_n(k)\}, \mathbf{m}(k) \right] \nonumber \\
    &\leq F ({\mathbf{w}}(k)) + L\left\| {\mathbf{w}}(k) - {\mathbf{w}}'(k) \right\| + \frac{\eta^2 \beta \sum_{n=1}^N p_n^2 }{2}  \sigma^2 - \frac{\eta}{2}  \big\Vert \nabla' F({\mathbf{w}}'(k)) \big\Vert^2 + \frac{\eta}{2}  \bigg \| \nabla' F({\mathbf{w}}'(k)) - \sum_{n=1}^N p_n\nabla'{F}_n(\mathbf{w}_n'(k)) \bigg \|^2 \nonumber \\
    & \quad\quad\quad\quad - \Big(\frac{\eta}{2} - \frac{\eta^2\beta}{2} \Big) \Bigg\Vert \sum_{n=1}^N p_n \nabla'{F_n}(\mathbf{w}_n'(k)) \Bigg\Vert^2. \label{eq:thm2_expansion_4}
\end{align}
Assuming $\eta \leq \frac{1}{\beta}$, we have $\frac{\eta}{2} - \frac{\eta^2\beta}{2}\geq 0$, and the last term in (\ref{eq:thm2_expansion_4}) can be removed. Then we have the following:

\begin{align}
    &\Expect \left[ F({\mathbf{w}}(k+1)) | \{\mathbf{w}_n(k)\}, \mathbf{m}(k) \right] \nonumber \\
    &\leq F ({\mathbf{w}}(k)) + L\left\| {\mathbf{w}}(k) - {\mathbf{w}}'(k) \right\| + \frac{\eta^2 \beta \sum_{n=1}^N p_n^2 }{2}  \sigma^2 - \frac{\eta}{2}  \big\Vert \nabla' F({\mathbf{w}}'(k)) \big\Vert^2 + \frac{\eta}{2}  \bigg \| \nabla' F({\mathbf{w}}'(k)) - \sum_{n=1}^N p_n\nabla'{F}_n(\mathbf{w}_n'(k)) \bigg \|^2 \nonumber \\
    & \stackrel{(a)}{\leq} F ({\mathbf{w}}(k)) + L\left\| {\mathbf{w}}(k) - {\mathbf{w}}'(k) \right\| + \frac{\eta^2 \beta \sum_{n=1}^N p_n^2 }{2}  \sigma^2 - \frac{\eta}{2} \big\Vert \nabla' F({\mathbf{w}}'(k)) \big\Vert^2 + \frac{\eta}{2} \sum_{n=1}^N p_n \bigg \| \nabla' F_n({\mathbf{w}}'(k)) - \nabla'{F}_n(\mathbf{w}_n'(k)) \bigg \|^2 \nonumber \\
    & \stackrel{(b)}{\leq} F ({\mathbf{w}}(k)) + L\left\| {\mathbf{w}}(k) - {\mathbf{w}}'(k) \right\| + \frac{\eta^2 \beta \sum_{n=1}^N p_n^2 }{2}  \sigma^2 - \frac{\eta}{2}  \big\Vert \nabla' F({\mathbf{w}}'(k)) \big\Vert^2 + \frac{\eta \beta^2}{2} \sum_{n=1}^N p_n \big \| {\mathbf{w}}'(k) - \mathbf{w}_n'(k) \big \|^2 \label{eq:thm2_expansion_5} \,\,,
\end{align}
where (a) uses Jensen's inequality, and (b) uses Assumption~\ref{assumption:convergence}(a) (smoothness). Taking expectation on both sides of (\ref{eq:thm2_expansion_5}), we get

\begin{align}
    \Expect \left[ F({\mathbf{w}}(k+1)) \right] &\leq \Expect \left[ F ({\mathbf{w}}(k)) \right] + L \Expect\left\| {\mathbf{w}}(k) - {\mathbf{w}}'(k) \right\| + \frac{\eta^2 \beta \sum_{n=1}^N p_n^2 }{2}  \sigma^2 - \frac{\eta}{2} \Expect \big\Vert \nabla' F({\mathbf{w}}'(k)) \big\Vert^2 \nonumber \\
    & \quad\quad\quad\quad + \frac{\eta \beta^2}{2} \sum_{n=1}^N p_n \Expect \big \| {\mathbf{w}}'(k) - \mathbf{w}_n'(k) \big \|^2 \label{eq:thm2_expansion_6} \,\,.
\end{align}
Taking average over time on (\ref{eq:thm2_expansion_6}) and rearranging, we get
\begin{align}
    &\frac{1}{K} \sum_{k=0}^{K-1} \Expect \big\Vert \nabla' F({\mathbf{w}}'(k)) \big\Vert^2 \nonumber \\
    &\leq \frac{2}{\eta K} [F(\mathbf{w}(0)) - F^*] + \frac{2 L}{\eta K} \sum_{k=0}^{K-1} \Expect \left\| {\mathbf{w}}(k) - {\mathbf{w}}'(k) \right\| + \eta \beta \Big( \sum_{n=1}^N p_n^2 \Big) \sigma^2 + \frac{\beta^2}{K} \sum_{k=0}^{K-1} \sum_{n=1}^N p_n \Expect \big \| {\mathbf{w}}'(k) - \mathbf{w}_n'(k) \big \|^2 \label{eq:thm2_rearranged}\,\,.
\end{align}
Now we bound the last term of (\ref{eq:thm2_rearranged}).

\begin{align}
    &\sum_{n=1}^N p_n \Expect \big \| {\mathbf{w}}'(k) - \mathbf{w}_n'(k) \big \|^2 \nonumber\\
    &= \sum_{n=1}^N p_n \Expect \bigg \| \Big( {\mathbf{w}}'(k-1) -\eta \sum_{i=1}^N p_i \mathbf{g}'_i(\mathbf{w}_i'(k-1)) \Big) - \Big( \mathbf{w}_n'(k-1) - \eta \mathbf{g}'_n(\mathbf{w}_n'(k-1)) \Big) \bigg \|^2 \nonumber \\
    & \stackrel{(a)}{=} \eta^2 \sum_{n=1}^N p_n \Expect \Bigg\| \sum_{\tau=I \cdot \lfloor k/I \rfloor}^{k-1} \bigg( \mathbf{g}'_n(\mathbf{w}_n'(\tau)) - \sum_{i=1}^N p_i\mathbf{g}'_i(\mathbf{w}_i'(\tau)) \bigg) \Bigg\|^2 \nonumber\\
    & = \eta^2 \sum_{n=1}^N p_n \Expect \Bigg\| \sum_{\tau=I \cdot \lfloor k/I \rfloor}^{k-1} \bigg( \Big( \mathbf{g}'_n(\mathbf{w}_n'(\tau)) - \nabla' F_n({\mathbf{w}}'_n(\tau)) + \sum_{i=1}^N p_i \nabla' F_i({\mathbf{w}}'_i(\tau)) - \sum_{i=1}^N p_i \mathbf{g}'_i(\mathbf{w}_i'(\tau)) \Big) \nonumber \\
    & \quad\quad\quad\quad + \Big( \nabla' F_n({\mathbf{w}}'_n(\tau)) - \sum_{i=1}^N p_i \nabla' F_i({\mathbf{w}}'_i(\tau)) \Big) \bigg) \Bigg\|^2 \nonumber \\
    & \leq 2\eta^2 \sum_{n=1}^N p_n \Expect \Bigg\| \sum_{\tau=I \cdot \lfloor k/I \rfloor}^{k-1} \Big( \mathbf{g}'_n(\mathbf{w}_n'(\tau)) - \nabla' F_n({\mathbf{w}}'_n(\tau)) + \sum_{i=1}^N p_i \nabla' F_i({\mathbf{w}}'_i(\tau)) - \sum_{i=1}^N p_i \mathbf{g}'_i(\mathbf{w}_i'(\tau)) \Big) \Bigg\|^2 \label{eq:thm2_rearranged_last_term1} \\
    & \quad\quad\quad\quad + 2\eta^2 \sum_{n=1}^N p_n \Expect \Bigg\| \sum_{\tau=I \cdot \lfloor k/I \rfloor}^{k-1} \Big( \nabla' F_n({\mathbf{w}}'_n(\tau)) - \sum_{i=1}^N p_i \nabla' F_n({\mathbf{w}}'_i(\tau)) \Big) \Bigg\|^2 \label{eq:thm2_rearranged_last_term2} \,\,,
\end{align}
where in (a), we trace back to the nearest iteration where all local parameters are synchronized. For (\ref{eq:thm2_rearranged_last_term1}),

\begin{align}
    &2\eta^2 \sum_{n=1}^N p_n \Expect \Bigg\| \sum_{\tau=I \cdot \lfloor k/I \rfloor}^{k-1} \Big( \mathbf{g}'_n(\mathbf{w}_n'(\tau)) - \nabla' F_n({\mathbf{w}}'_n(\tau)) + \sum_{i=1}^N p_i \nabla' F_i({\mathbf{w}}'_i(\tau)) - \sum_{i=1}^N p_i \mathbf{g}'_i(\mathbf{w}_i'(\tau)) \Big) \Bigg\|^2 \nonumber \\
    & \stackrel{(a)}{=} 2\eta^2 \sum_{n=1}^N p_n \Expect \Bigg\| \sum_{\tau=I \cdot \lfloor k/I \rfloor}^{k-1} \Big( \mathbf{g}'_n(\mathbf{w}_n'(\tau)) - \nabla' F_n({\mathbf{w}}'_n(\tau)) \Big) \Bigg\|^2 - 2 \eta^2 \sum_{n=1}^N p_n \Expect \Bigg\| \sum_{\tau=I \cdot \lfloor k/I \rfloor}^{k-1} \sum_{i=1}^N p_i \Big( \mathbf{g}'_i(\mathbf{w}_i'(\tau)) - \nabla' F_i({\mathbf{w}}'_i(\tau)) \Big) \Bigg\|^2 \nonumber\\
    & \stackrel{(b)}{=} 2\eta^2 \sum_{n=1}^N p_n \sum_{\tau=I \cdot \lfloor k/I \rfloor}^{k-1} \Expect \Big\| \mathbf{g}'_n(\mathbf{w}_n'(\tau)) - \nabla' F_n({\mathbf{w}}'_n(\tau)) \Big\|^2 - 2\eta^2 \sum_{\tau=I \cdot \lfloor k/I \rfloor}^{k-1} \Expect \Bigg\| \sum_{i=1}^N p_i \Big( \mathbf{g}'_i(\mathbf{w}_i'(\tau)) - \nabla' F_i({\mathbf{w}}'_i(\tau)) \Big) \Bigg\|^2 \nonumber\\
    & \stackrel{(c)}{=} 2\eta^2 \sum_{\tau=I \cdot \lfloor k/I \rfloor}^{k-1} \sum_{n=1}^N p_n \Expect \Big\| \mathbf{g}'_n(\mathbf{w}_n'(\tau)) - \nabla' F_n({\mathbf{w}}'_n(\tau)) \Big\|^2 - 2\eta^2  \sum_{\tau=I \cdot \lfloor k/I \rfloor}^{k-1} \sum_{n=1}^N \Expect \bigg\| p_n  \Big( \mathbf{g}'_n(\mathbf{w}_n'(\tau)) - \nabla' F_n({\mathbf{w}}'_n(\tau)) \Big) \bigg\|^2 \nonumber\\
    & \leq 2\eta^2 \sum_{\tau=I \cdot \lfloor k/I \rfloor}^{k-1} \sum_{n=1}^N (p_n - p_n^2) \Expect \Big\| \mathbf{g}'_n(\mathbf{w}_n'(\tau)) - \nabla' F_n({\mathbf{w}}'_n(\tau)) \Big\|^2 \nonumber\\
    & \leq 2 \Big( 1- \sum_{n=1}^N p_n^2 \Big) I \eta^2 \sigma^2 \label{eq:thm2_rearranged_last_term1_exp} \,\,, 
\end{align}
where (a) is due to the definition of variance; (b) is due to Assumption~\ref{assumption:convergence}(f) (time independence) and $\sum_n p_n = 1$; and (c) is due to Assumption~\ref{assumption:convergence}(g) (client independence). For (\ref{eq:thm2_rearranged_last_term2}),

\begin{align}
    &2\eta^2 \sum_{n=1}^N p_n \Expect \Bigg\| \sum_{\tau=I \cdot \lfloor k/I \rfloor}^{k-1} \Big( \nabla' F_n({\mathbf{w}}'_n(\tau)) - \sum_{i=1}^N p_i \nabla' F_i({\mathbf{w}}'_i(\tau)) \Big) \Bigg\|^2 \nonumber \\
    & = 2\eta^2 \sum_{n=1}^N p_n \Expect \Bigg\| \sum_{\tau=I \cdot \lfloor k/I \rfloor}^{k-1} \bigg( \Big( \nabla' F_n({\mathbf{w}}'_n(\tau)) - \nabla' F_n({\mathbf{w}}'(\tau)) \Big) + \Big( \nabla' F_n({\mathbf{w}}'(\tau)) - \sum_{i=1}^N p_i \nabla' F_i({\mathbf{w}'}(\tau)) \Big) \nonumber \\
    &\quad\quad\quad\quad + \Big(\sum_{i=1}^N p_i \nabla' F_i({\mathbf{w}'}(\tau)) - \sum_{i=1}^N p_i \nabla' F_i({\mathbf{w}}'_i(\tau)) \Big) \bigg) \Bigg\|^2 \nonumber \\
    &\leq 6\eta^2 \sum_{n=1}^N \Bigg( p_n \Expect \bigg\| \sum_{\tau=I \cdot \lfloor k/I \rfloor}^{k-1} \Big( \nabla' F_n({\mathbf{w}}'_n(\tau)) - \nabla' F_n({\mathbf{w}}'(\tau)) \Big) \bigg\|^2 + p_n \Expect \bigg\| \sum_{\tau=I \cdot \lfloor k/I \rfloor}^{k-1} \sum_{i=1}^N p_i \Big( \nabla' F_i({\mathbf{w}'}(\tau)) -  \nabla' F_i({\mathbf{w}}'_i(\tau)) \Big) \bigg\|^2 \nonumber \\
    &\quad\quad\quad\quad + p_n \Expect \bigg\| \sum_{\tau=I \cdot \lfloor k/I \rfloor}^{k-1} \Big( \nabla' F_n({\mathbf{w}}'(\tau)) - \sum_{i=1}^N p_i \nabla' F_i({\mathbf{w}'}(\tau)) \Big) \bigg\|^2 \Bigg) \nonumber \\
    &\stackrel{(a)}{\leq} 6\eta^2 I \sum_{\tau=I \cdot \lfloor k/I \rfloor}^{k-1} \sum_{n=1}^N p_n \Bigg(\Expect \bigg\| \nabla' F_n({\mathbf{w}}'_n(\tau)) - \nabla' F_n({\mathbf{w}}'(\tau)) \bigg\|^2 + \Expect \bigg\| \sum_{i=1}^N p_i \Big( \nabla' F_i({\mathbf{w}'}(\tau)) -  \nabla' F_i({\mathbf{w}}'_i(\tau)) \Big) \bigg\|^2 \Bigg) \nonumber \\
    &\quad\quad\quad\quad + 6\eta^2 I \Expect \bigg[ \sum_{n=1}^N p_n \sum_{\tau=I \cdot \lfloor k/I \rfloor}^{k-1} \bigg\| \nabla' F_n({\mathbf{w}}'(\tau)) - \sum_{i=1}^N p_i \nabla' F_i({\mathbf{w}'}(\tau)) \bigg\|^2 \bigg] \nonumber \\
    &\stackrel{(b)}{\leq} 6\eta^2 I \sum_{\tau=I \cdot \lfloor k/I \rfloor}^{k-1} \sum_{n=1}^N p_n \bigg( \Expect \bigg\| \nabla' F_n({\mathbf{w}}'_n(\tau)) - \nabla' F_n({\mathbf{w}}'(\tau)) \bigg\|^2 + \Expect \bigg\| \sum_{i=1}^N p_i \Big( \nabla' F_i({\mathbf{w}'}(\tau)) -  \nabla' F_i({\mathbf{w}}'_i(\tau)) \Big) \bigg\|^2 \bigg) + 6 I^2 \eta^2 \epsilon^2 \nonumber \\
    &\stackrel{(c)}{\leq} 12\eta^2 I \sum_{\tau=I \cdot \lfloor k/I \rfloor}^{k-1} \Expect  \bigg\|  \nabla' F_n({\mathbf{w}}'_n(\tau)) - \nabla' F_n({\mathbf{w}}'(\tau)) \bigg\|^2 + 6 I^2 \eta^2 \epsilon^2 \nonumber \\
    & \leq 12 I \eta^2 \beta^2 \sum_{n=1}^N \sum_{\tau=I \cdot \lfloor k/I \rfloor}^{k-1} \Expect \Big\| \mathbf{w}'_n(\tau) - \mathbf{w}'(\tau) \Big\|^2 + 6 I^2 \eta^2 \epsilon^2 \label{eq:thm2_rearranged_last_term2_exp} \,\,,
\end{align} 
where in (a), we use the fact that, $\forall \mathbf{x}_{\tau}$,
\begin{align*}
\bigg\| \sum_{\tau = I \cdot \lfloor k/I \rfloor}^{k-1} \mathbf{x}_{\tau} \bigg\|^2 & \leq \Big((k-1) - I \cdot\lfloor k/I \rfloor \Big) \cdot \sum_{\tau = I\cdot\lfloor k/I \rfloor}^{k-1} \| \mathbf{x}_{\tau} \|^2 \leq I \cdot \sum_{\tau = I\cdot\lfloor k/I \rfloor}^{k-1} \| \mathbf{x}_{\tau} \|^2 \,\,.
\end{align*}
In (b), we use Assumption~\ref{assumption:convergence}(e) (bounded divergence), and in (c), we use Jensen's inequality. Substituting (\ref{eq:thm2_rearranged_last_term1_exp}) and (\ref{eq:thm2_rearranged_last_term2_exp}) into (\ref{eq:thm2_rearranged_last_term1}) and (\ref{eq:thm2_rearranged_last_term2}), respectively, we get
\begin{align}
    &\frac{1}{K}\sum_{k=0}^{K-1}\sum_{n=1}^N p_n \Expect \big \| {\mathbf{w}}'(k) - \mathbf{w}_n'(k) \big \|^2 \nonumber\\
    & \leq 2 \Big(1- \sum_{n=1}^N p_n^2 \Big) I \eta^2 \sigma^2 + 6\eta^2 I^2\epsilon^2 + \frac{12 I \eta^2 \beta^2}{K} \sum_{n=1}^N \sum_{k=0}^{K-1} \sum_{\tau=I \cdot \lfloor k/I \rfloor}^{k-1} \Expect \Big\| \mathbf{w}'_n(\tau) - \mathbf{w}'(\tau) \Big\|^2  \nonumber\\
    & \stackrel{(a)}{\leq} 2 \Big(1- \sum_{n=1}^N p_n^2 \Big) \eta^2 I \sigma^2 + 6\eta^2 I^2\epsilon^2 + \frac{12 I \eta^2 \beta^2}{K} \sum_{n=1}^N \sum_{k=0}^{K-1} \sum_{\tau=I \cdot \lfloor k/I \rfloor}^{\min\{I \cdot \lceil k/I \rceil, K-1\}} \Expect \Big\| \mathbf{w}'_n(\tau) - \mathbf{w}'(\tau) \Big\|^2  \nonumber\\
    & \stackrel{(b)}{\leq} 2 \Big( 1- \sum_{n=1}^N p_n^2 \Big) \eta^2 I \sigma^2 + 6\eta^2 I^2\epsilon^2 + \frac{12 I^2 \eta^2 \beta^2}{K} \sum_{k=0}^{K-1} \sum_{n=1}^N \Expect \Big\| \mathbf{w}'_n(\tau) - \mathbf{w}'(\tau) \Big\|^2\,\,,
\end{align}
where (a) holds because $k-1\leq \min\{I\cdot \lceil k/I \rceil, K-1 \}$ is always true, and (b) holds because in the innermost summation in its last term, we always have
$$
\min\{I \cdot \lceil k/I \rceil, K-1\} -  I \cdot \lfloor k/I \rfloor \leq I\,\,.
$$
Rearranging yields
\begin{align}
    &\frac{1}{K}\sum_{k=0}^{K-1}\sum_{n=1}^N p_n \Expect \big \| {\mathbf{w}}'(k) - \mathbf{w}_n'(k) \big \|^2 \leq \frac{2 \Big( 1- \sum_{n=1}^N p_n^2 \Big) \eta^2 I \sigma^2 + 6\eta^2 I^2\epsilon^2}{1- 12 I^2 \eta^2 \beta^2} \label{eq:thm2_rearranged_last_exp} \,\,.
\end{align}
Applying (\ref{eq:thm2_rearranged_last_exp}) into (\ref{eq:thm2_rearranged})'s last term, (\ref{eq:thm2_rearranged}) becomes
\begin{align}
    \frac{1}{K} \sum_{k=0}^{K-1} \Expect \big\Vert \nabla' F({\mathbf{w}}'(k)) \big\Vert^2 & \leq \frac{2}{\eta K} [F(\mathbf{w}(0)) - F^*] + \eta \beta \Big( \sum_{n=1}^N p_n^2 \Big) \sigma^2 + \frac{2 \Big( 1- \sum_{n=1}^N p_n^2 \Big) I \sigma^2 + 6 I^2\epsilon^2}{1- 12 I^2 \eta^2 \beta^2} \eta^2\beta^2 \nonumber \\
    & \quad\quad\quad\quad + \frac{2 L}{\eta K} \sum_{k=0}^{K-1} \Expect \left\| {\mathbf{w}}(k) - {\mathbf{w}}'(k) \right\|\,\,.
\end{align}
Assume $\eta \leq \frac{1}{2\sqrt{6} I \beta}$, we have $1- 12 \eta^2 I^2 \eta^2 \beta^2 \geq \frac{1}{2}$, and
\begin{equation}\label{eq:convergence_final}
    \frac{1}{K} \sum_{k=0}^{K-1} \Expect \big\Vert \nabla' F({\mathbf{w}}'(k)) \big\Vert^2
    \leq \frac{2(F_0 - F^*)}{\eta K} + \alpha \eta \beta \sigma^2 + 4\beta^2 \big( (1- \alpha) I\sigma^2 + 3I^2\epsilon^2 \big) \eta^2 + \frac{2 L}{\eta K} \sum_{k=0}^{K-1} \Expect \left\| {\mathbf{w}}(k) - {\mathbf{w}}'(k) \right\|\,,
\end{equation}
where $\alpha:= \sum_{n=1}^N p_n^2$, $F_0 := F(\mathbf{w}(0))$, $F^* := \min_\mathbf{w}F(\mathbf{w})$. Note that under the assumption that $\eta \leq \frac{1}{2\sqrt{6} I \beta}$, the previous assumption that $\eta \leq \frac{1}{\beta}$ used for (\ref{eq:thm2_expansion_4}) is automatically satisfied.

\textbf{Discussion.} Consider the situation where all clients have equal weight, i.e. $p_n = \frac{1}{N}$, $\forall n$, we have $\alpha = \frac{1}{N}$. Letting $\eta = \frac{1}{\sqrt{\alpha K}} = \sqrt{\frac{N}{K}}$, (\ref{eq:convergence_final}) becomes
\begin{equation}\label{eq:convergence_dis1}
    \frac{1}{K} \sum_{k=0}^{K-1} \Expect \big\Vert \nabla' F({\mathbf{w}}'(k)) \big\Vert^2
    \leq \frac{2(F_0 - F^*) + \beta \sigma^2}{\sqrt{N K}} + \frac{4\beta^2 \big( (N- 1) I\sigma^2 + 3NI^2\epsilon^2 \big)}{K} + \frac{2L}{\sqrt{N K}} \sum_{k=0}^{K-1} \Expect \left\| {\mathbf{w}}(k) - {\mathbf{w}}'(k) \right\|\,\,.
\end{equation}
For the last term in (\ref{eq:convergence_dis1}), when $k$ is not a reconfiguration iteration, $\mathbf{w}(k) = \mathbf{w}'(k)$, and when $k$ is a reconfiguration iteration, the difference between $\mathbf{w}(k)$ and $\mathbf{w}'(k)$ is the subset of parameters that get pruned from $\mathbf{w}(k)$. Assuming the norm of $\mathbf{w}(k)$ is bounded by $B$, i.e., $ \Vert \mathbf{w}(k) \Vert \leq B$, $\forall k$, the initial fraction of non-zero prunable parameters is $r_0$ and this fraction halves every $h$ iterations (as we do in our experiments, see Table~\ref{tab:hyperparam}), then the last term in (\ref{eq:convergence_dis1}) is bounded by
\begin{equation}\label{eq:convergence_dis2}
    \frac{2L}{\sqrt{N K}} \sum_{k=0}^{K-1} \Expect \left\| {\mathbf{w}}(k) - {\mathbf{w}}'(k) \right\|
    \stackrel{(a)}{\leq} \frac{2L}{\sqrt{N K}} \sum_{k=0}^{K-1} B\cdot r_0 \cdot 2^{-k/h}
    \leq \frac{2L B r_0}{\sqrt{N K}} \sum_{k=0}^{\infty} 2^{-k/h}
    \leq \frac{2^{\frac{1}{h}+1}}{2^{\frac{1}{h}} - 1} \cdot \frac{LB r_0} {\sqrt{N K}}\,\,.
\end{equation}
In (\ref{eq:convergence_dis2}), (a) holds for the following reason: the reconfiguration on the parameter vector $\mathbf{w}(k)$ includes both adding back parameters and removing parameters, resulting in a new parameter vector $\mathbf{w}'(k)$. However, parameters that are added back in $\mathbf{w}'(k)$ in reconfigurations are assigned zero values\footnote{In practice, we use a small perturbation instead of zero values for parameters that are added back, but we use zero values in our analysis for ease of exposition.}, which are equal to their corresponding values in $\mathbf{w}(k)$. Thus, the difference between $\mathbf{w}(k)$ and $\mathbf{w}'(k)$ is the part that is \textit{removed} from $\mathbf{w}(k)$, whose maximum fraction is bounded by $r_0 \cdot 2^{-k/h}$. In consequence, $\left\| {\mathbf{w}}(k) - {\mathbf{w}}'(k) \right\|$ is bounded by $B\cdot r_0 \cdot 2^{-k/h}$. Plugging (\ref{eq:convergence_dis2}) into (\ref{eq:convergence_dis1}), we get
\begin{align}
    \frac{1}{K} \sum_{k=0}^{K-1} \Expect \big\Vert \nabla' F({\mathbf{w}}'(k)) \big\Vert^2
    &\leq \left(2(F_0 - F^*) + \beta \sigma^2 + \frac{2^{\frac{1}{h}+1}}{2^{\frac{1}{h}} - 1} LB r_0 \right)\frac{1}{\sqrt{N K}} + \frac{4\beta^2 \big( (N - 1) I\sigma^2 + 3N I^2\epsilon^2 \big)}{K} \nonumber \\
    & = \mathcal{O} \left(\frac{1}{\sqrt{N K}} \right) + \mathcal{O} \left( \frac{1}{K} \right) \label{eq:convergence_dis3} \,\,.
\end{align}
Thus, when the additional assumptions (i) $K \geq 24 N I^2 \beta^2$, (ii) $\eta = \sqrt{\frac{N}{K}}$, (iii) $p_n = \frac{1}{N}$, $\forall n$, (iv) $ \Vert \mathbf{w}(k) \Vert \leq B$, $\forall k$, and (v) the fraction of non-zero prunable parameters decreases exponentially all hold, we have a convergence bound provided in (\ref{eq:convergence_dis3}) that is dominated by $\mathcal{O}\left(\frac{1}{\sqrt{N K}}\right)$. This means using more clients can accelerate the convergence (by a factor of $\frac{1}{\sqrt{N}}$).

\subsection{PruneFL details}\label{appendix:implementation_details}
\subsubsection{Model architecture details}
The details of the model architectures are listed in Table~\ref{tab:model_arch}.
{\renewcommand{\arraystretch}{1.1}
\begin{table*}[t]
\caption{Model architectures. \vspace{-0.05in}}
\label{tab:model_arch}
\centering
\footnotesize
\vskip 1mm
\begin{tabular}{ccccc}
\hline
Architecture &Conv-2& VGG-11 & ResNet-18 & MobileNetV3-Small\\
\hline \hline
Convolutional & \begin{tabular}[c]{@{}c@{}} $32$, pool,\\$64$, pool\end{tabular} & \begin{tabular}[c]{@{}c@{}}$64$, pool,\\$128$, pool,\\$2\times256$, pool,\\$2\times512$, pool,\\$2\times512$, pool\end{tabular} & \begin{tabular}[c]{@{}c@{}}$64$, pool,\\$2\times[64, 64],$\\$2\times[128, 128]$,\\$2\times[256, 256]$,\\$2\times[512, 512]$\end{tabular} & \begin{tabular}[c]{@{}c@{}}$16$, $16$, $8$, $16$, $16$, $72$, $72$, $24$, $88$, $88$, $24$, $96$, $96$, $24$,\\ $96$, $40$, $240$, $240$, $64$, $240$, $40$, $240$, $240$, $64$, $240$, $40$,\\ $120$, $120$, $32$, $120$, $48$, $144$, $144$, $40$, $144$, $48$, $288$,\\ $288$, $72$, $288$, $96$, $576$, $576$, $144$, $576$, $96$, $576$, $576$, \\$144$, $576$, $96$, $576$ \end{tabular} \\
\hline
Fully-connected & \begin{tabular}[c]{@{}c@{}}$2048$, $62$\\(input: $3136$)\end{tabular} & \begin{tabular}[c]{@{}c@{}}$512$, $512$, $10$\\(input: $512$) \end{tabular} & \begin{tabular}[c]{@{}c@{}}avgpool, $100$\\(input: $512$)\end{tabular} & \begin{tabular}[c]{@{}c@{}} avgpool, $1280$, dropout ($0.2$), $2$  \\(input: $960$)\end{tabular} \\
\hline
Conv/FC/all params & 52.1K/6.6M/6.6M & 9.2M/530.4K/9.8M & 11.2M/102.6K/11.3M & 927.0K/592.9K/1.5M\\

\hline
\end{tabular}
\end{table*}
}
\subsubsection{Gradient Computation}\label{appendix:gradient_comp}
The forward pass in neural networks with sparse matrices is straightforward. Taking an FC layer as an example (convolutional layers are more complex but similar in principle), the input data is multiplied by a sparse weight, and produces a dense output to be passed to the next layer. Let $\mathbf{u}\in\mathbb{R}^{n_\text{in}\times n_\text{out}}$ be its (sparse) weight, $\mathbf{x} \in \mathbb{R}^{N\times n_\text{in}}$ be the (dense) input and $\mathbf{y}\in\mathbb{R}^{N\times n_\text{out}}$ be the (dense) output, where $n_\text{in}$, $n_\text{out}$, and $N$ are the number of input neurons, output neurons of the FC layer, and the mini-batch size for SGD, respectively. Assuming there is no bias, the forward pass is given by $\mathbf{y} = \mathbf{x}\cdot \mathbf{u}\,$. The backward pass is slightly more complex. By simple calculations, the gradient of $\mathbf{u}$ is given by $\mathbf{g}_\mathbf{u} = \mathbf{x}^\mathrm{T} \mathbf{g}_\mathbf{y}$, and the gradient of $\mathbf{x}$ (when required) is given by $\mathbf{g}_\mathbf{x} = \mathbf{g}_\mathbf{y}\mathbf{u}^T$. Here, $\mathbf{g}_\mathbf{y}$ is the (dense) gradient in backpropagation fed by the next layer. 

For the computation of $\mathbf{g}_\mathbf{x}$, since the weight $\mathbf{u}$ is sparse, we can accelerate the computation using our sparse matrix implementation.

The computation of $\mathbf{g}_\mathbf{u}$ is however different. Note that both $\mathbf{x}$ and $\mathbf{g}_\mathbf{y}$ are dense, and thus current implementations (e.g., PyTorch) first compute the dense gradient with $\mathbf{u}$'s dense form that has all zero values included, and then select values from the dense gradient according to $\mathbf{u}$'s sparse pattern. There is currently no better way to accelerate this process as far as we know. Therefore, this implementation does not improve the backward pass's speed of weights' gradient computations. For the above reason, in our implementation we collect the gradients of zero-valued components of $\mathbf{u}$ at the same time with no extra overhead (although those zero-valued components themselves are not updated). This characteristic is useful in our adaptive pruning procedure.

\subsubsection{FLOPs Computation}\label{appendix:flops_comp}
Following the discussion in Section~\ref{appendix:gradient_comp}, we now explain the computation of FLOPs in both forward and backward passes in our models. Using convention in the literature, we consider that one addition and one multiplication each counts as a FLOP~\cite{tang2018flops}. Taking the same notations and assumptions from Section~\ref{appendix:gradient_comp}, the FLOPs for the forward pass is $2 N n_\text{in} n_\text{out} \times d$, where $d$ is the density of this FC layer. In the backward pass, the FLOPs for the gradient computation of weight $\mathbf{u}$ is $2 N n_\text{in} n_\text{out}$ since the computation does not involve sparse matrices, while for the gradient of input $\mathbf{x}$, the FLOPs is $2 N n_\text{in} n_\text{out} \times d$. Therefore, the total FLOPs for the backward pass is $2 N n_\text{in} n_\text{out} \times (1+d)$, and the FLOPs for both forward and backward passes is $2 N n_\text{in} n_\text{out} \times (1+2d)$. FLOPs computation for convolutional layers is similar, so we skip this discussion.

\subsubsection{Starting Reconfiguration Round}
We start the reconfiguration in the initial pruning stage when the \textit{training accuracy} on \textit{local sample data} of the selected client exceeds $1.5$ times the random guess accuracy. There are two advantages: (i) if the task is easy, reconfiguration starts early, which results in early model size decrease that saves training time; and (ii) it also avoids pruning the model too early when the prediction is still close to random guess, meaning the parameter values are still in the random initialization stage. In the further pruning stage, reconfiguration happens periodically with a fixed interval.

\subsubsection{Further Details of Adaptive Pruning}
In the following, we provide detailed information for the adaptive pruning procedure described in Section~\ref{sec:adaptive_pruning}. In both reconfiguration and non-reconfiguration rounds, the importance measure is summed locally after every local update, until the sum is sent to the server in the next reconfiguration round. Note that in non-reconfiguration rounds, only the remaining model parameters are used in computation and exchanged between server and clients. Since the parameter set is fixed in non-reconfiguration rounds, only the \textit{values} of the parameters need to be exchanged between the server and clients, which incurs no extra communication cost. An illustration of the two types of rounds is shown in Fig.~\ref{fig:illustration}.

\begin{figure}[t]
\centering
\subfigure[Non-reconfiguration round]{
    \includegraphics[width=0.35\linewidth]{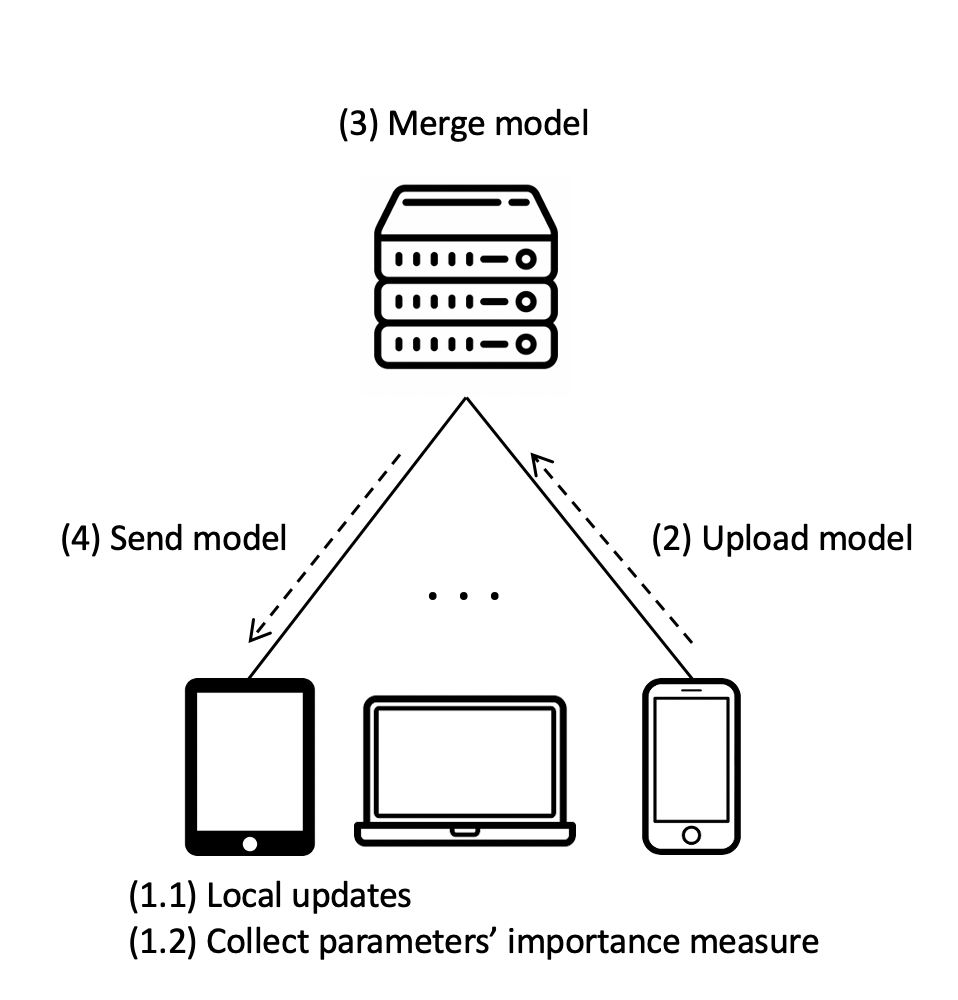}
}
\subfigure[Reconfiguration round]{
    \includegraphics[width=0.35\linewidth]{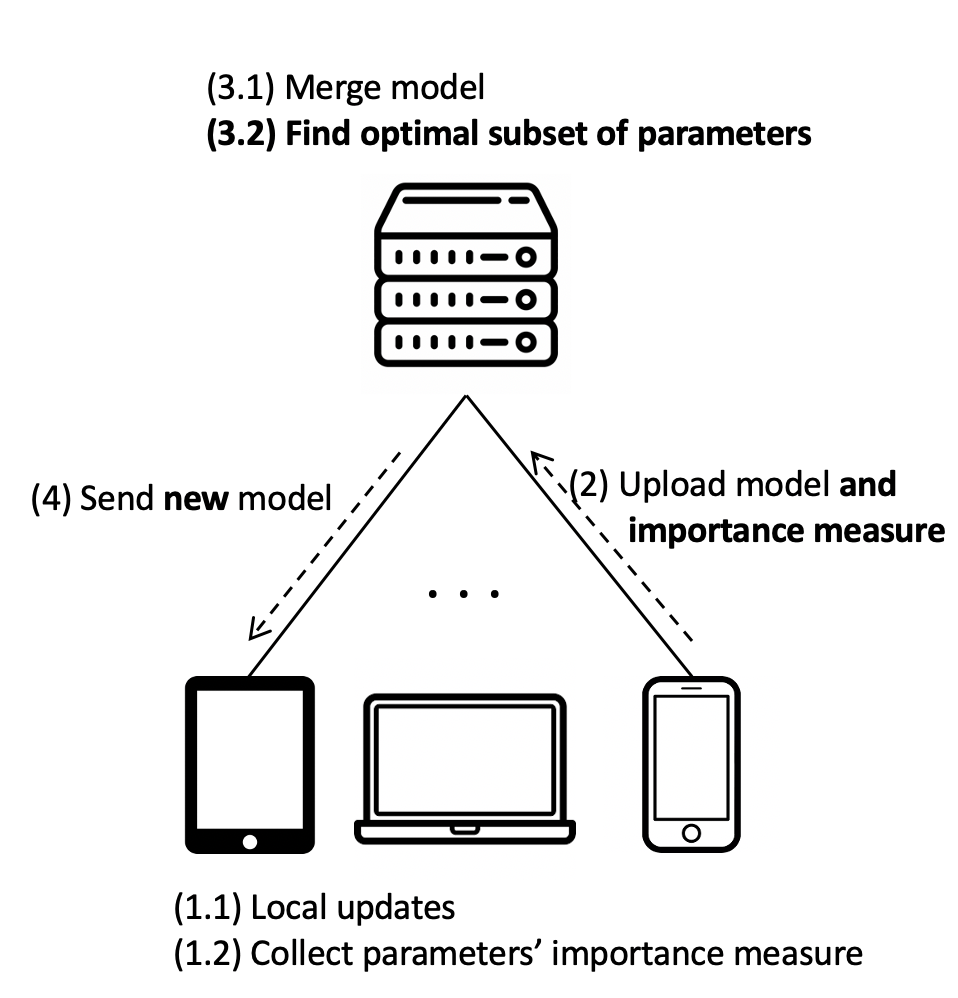}
}
\caption{Illustration of adaptive pruning as part of further pruning during FL.}
\vspace{-3mm}
\label{fig:illustration}
\end{figure}

\subsubsection{PyTorch on Raspberry Pi devices}
To install PyTorch on Raspberry Pi devices, we follow the instructions described at \url{https://bit.ly/3e6I7tG}, where acceleration packages such as MLKDNN and NNPACK are disabled due to possible compatibility issues and their lack of support of sparse computation. We compare our implementation with the \textit{plain} implementation by PyTorch without accelerations. We expect that similar results can be obtained if acceleration packages could support sparse computation. This is an active area of research on its own where methods for efficient sparse computation on both CPU and GPU have been developed in recent years. Integrating such methods into our experiments is left for future work.

\subsection{Additional Experimental Results}
\begin{figure}[t]
\centering
\subfigure[Conv-2's first fully-connected layer]{
\includegraphics[width=0.35\linewidth]{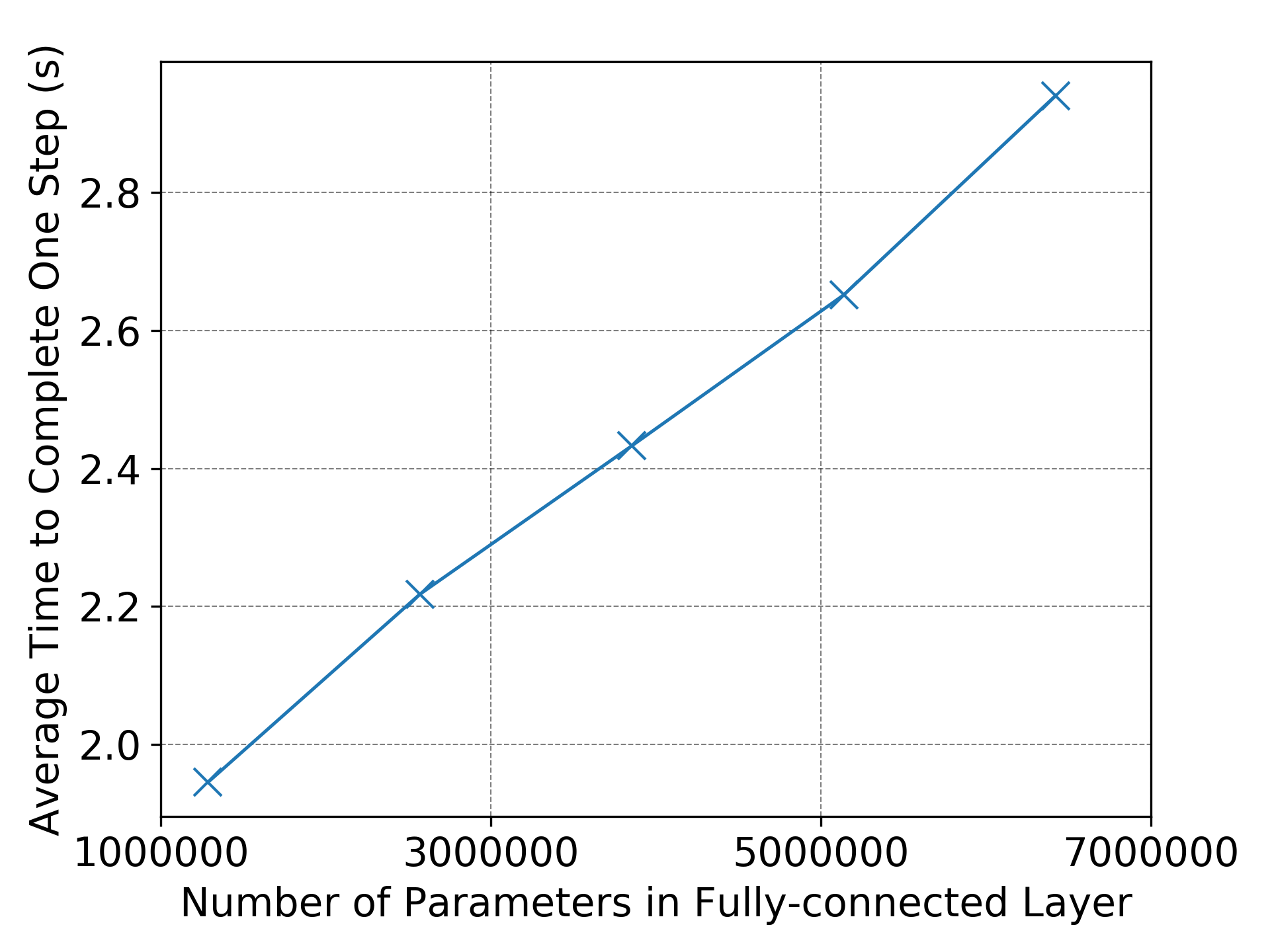}
\label{fig:conv2_linearity}
}
\quad
\subfigure[VGG-11's last convolutional layer]{
\includegraphics[width=0.35\linewidth]{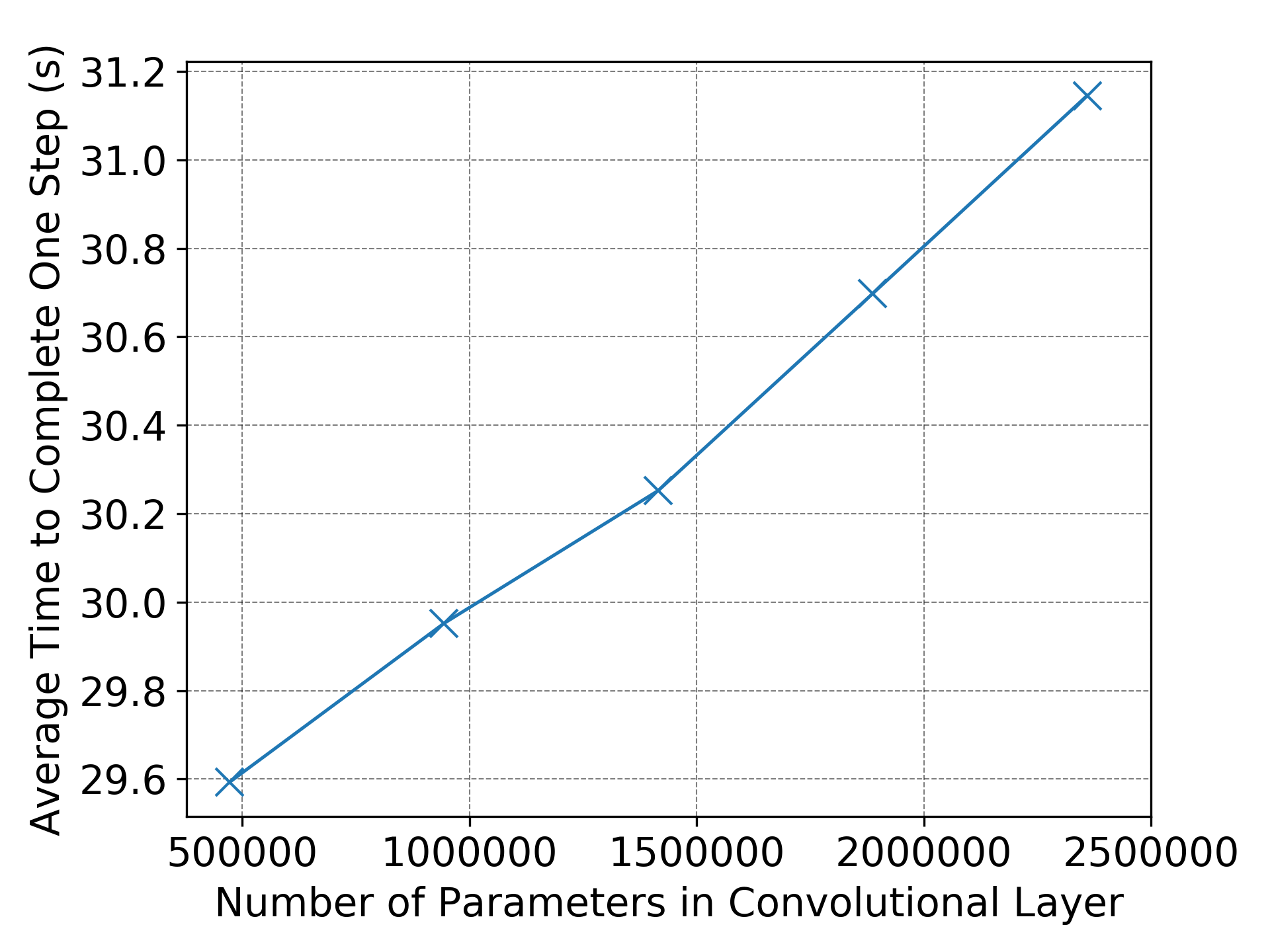}
\label{fig:vgg11_linearity}
}
\caption{Linearity of average computation time vs. number of parameters.}
\label{fig:exp_sim_linearity}
\end{figure}

\subsubsection{Validation of Assumptions}\label{appendix:validate}
Agreeing with our assumptions and analysis in Sections~\ref{sec:adaptive_pruning} and \ref{sec:complexity_analysis}, we observe that the training time in each layer is generally independent of the other layers. Within each layer, the time is approximately linear with the number of parameters in the layer with sparse implementation. In Fig.~\ref{fig:exp_sim_linearity}, we fix the parameters in other layers and increase the number of parameters in Conv-2's largest (first) FC layer and VGG-11's largest convolutional layer (last convolutional layer with $512$ channels), respectively, and measure for 50 times. The $R^2$ values of linear regression for Conv-2 and VGG-11 are $0.997$ and $0.994$, respectively.

\subsubsection{Client Selection Results for Section~\ref{sec:experiments}}\label{appendix:additional_results}
We present results under same settings as in Section~\ref{sec:experiments}, but with random client selection.
We partition the IID CIFAR-10 and ImageNet-100 datasets uniformly into $100$ equal-sized, non-overlapping clients. FEMNIST and CelebA are intrinsically non-IID datasets. For the FEMNIST dataset, we use its original $193$-user partition; and for the CelebA dataset, since some of its users in the original partition have too few images (e.g. $4$ images), we merge the $9343$ persons' images into $934$ clients (first $933$ clients have $10$ persons' images and the last client has $13$ persons' images). In each round, we sample $10$ clients randomly from the aforementioned partitions when using client selection.
Fig.~\ref{fig:training_cs} (corresponding to Fig.~\ref{fig:training}) shows the training time reduction; Fig.~\ref{fig:lottery_cs} (corresponding to Fig.~\ref{fig:lottery}) shows the lottery ticket result; and Fig.~\ref{fig:model_size_cs} (corresponding to Fig.~\ref{fig:model_size}) shows the model size adaptation. We observe similar behaviors as with full client participation that is described in the main paper, thus we omit further discussion in this section.
\begin{figure*}[t]
\centering
\subfigure{
\hspace*{2.5mm}\includegraphics[height=4.9mm]{figs/legend1.png}
}
\addtocounter{subfigure}{-1}
\vspace{-4.5mm}
\\
\subfigure[Conv-2 on FEMNIST]{
    \includegraphics[height=3.075cm]{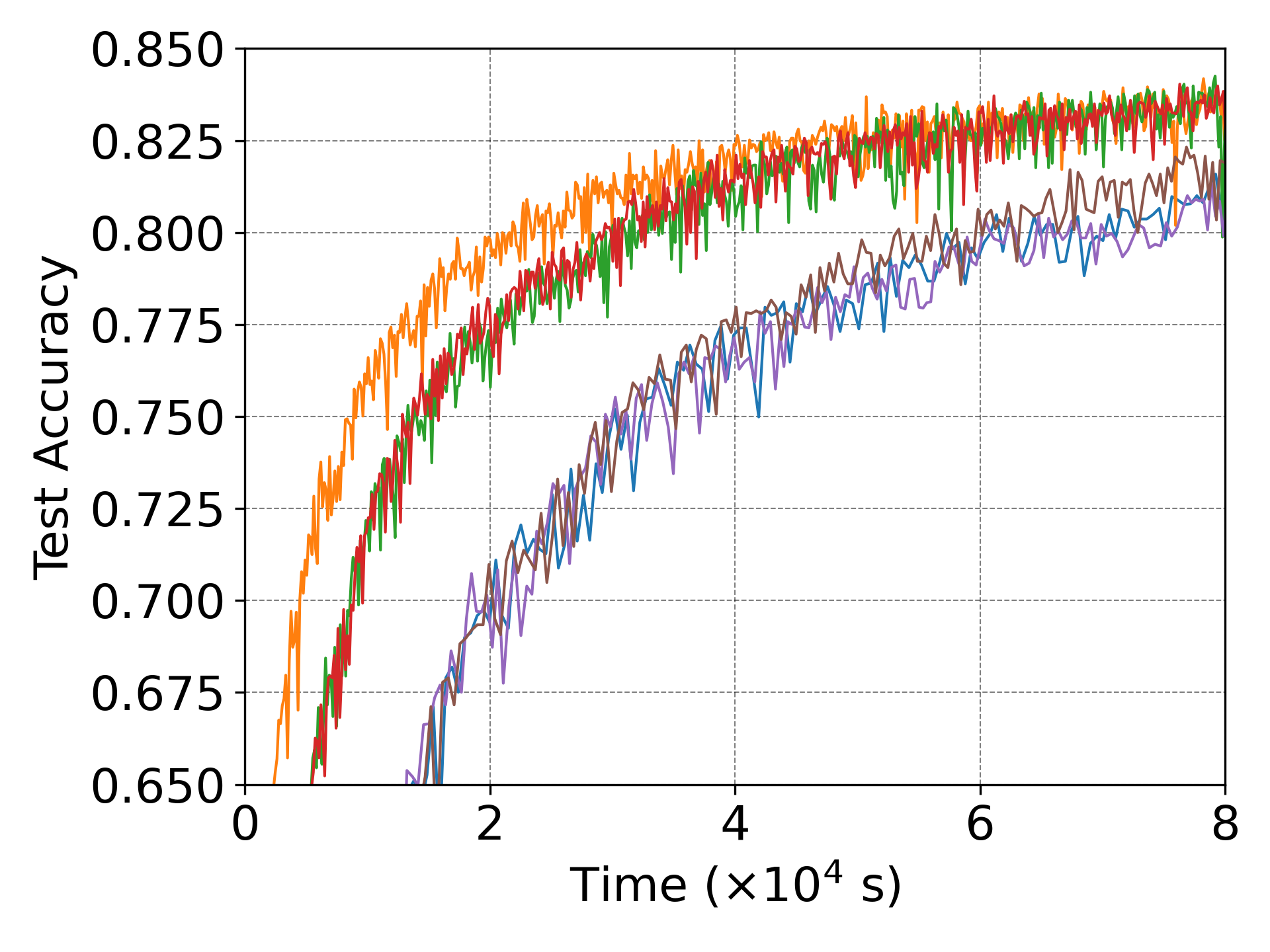}
    \label{fig:femnist_training_cs}
}
\subfigure[VGG-11 on CIFAR-10]{
    \includegraphics[height=3.075cm]{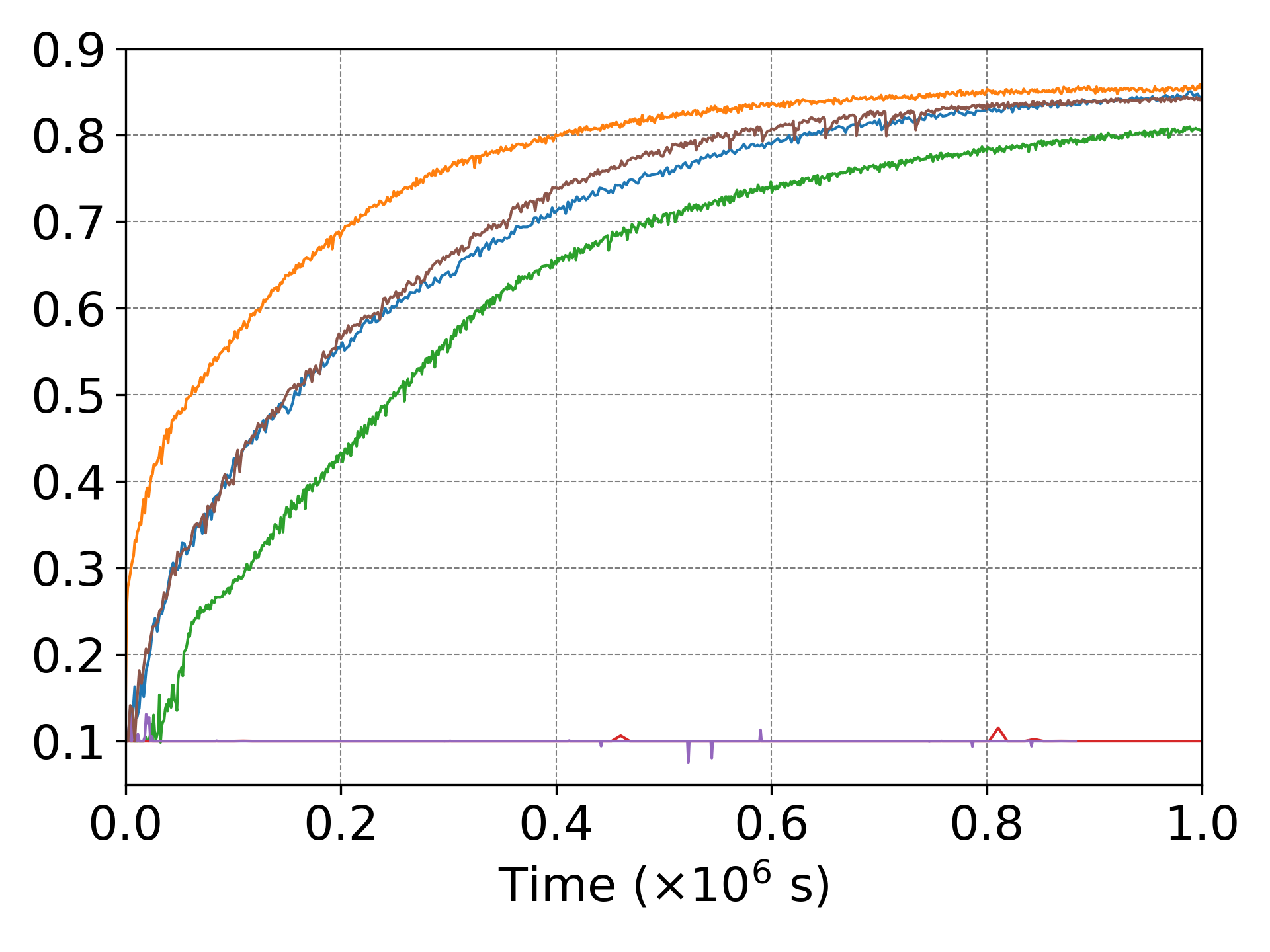}
    \label{fig:cifar10_training_cs}
}
\subfigure[ResNet-18 on ImageNet-100]{
    \includegraphics[height=3.075cm]{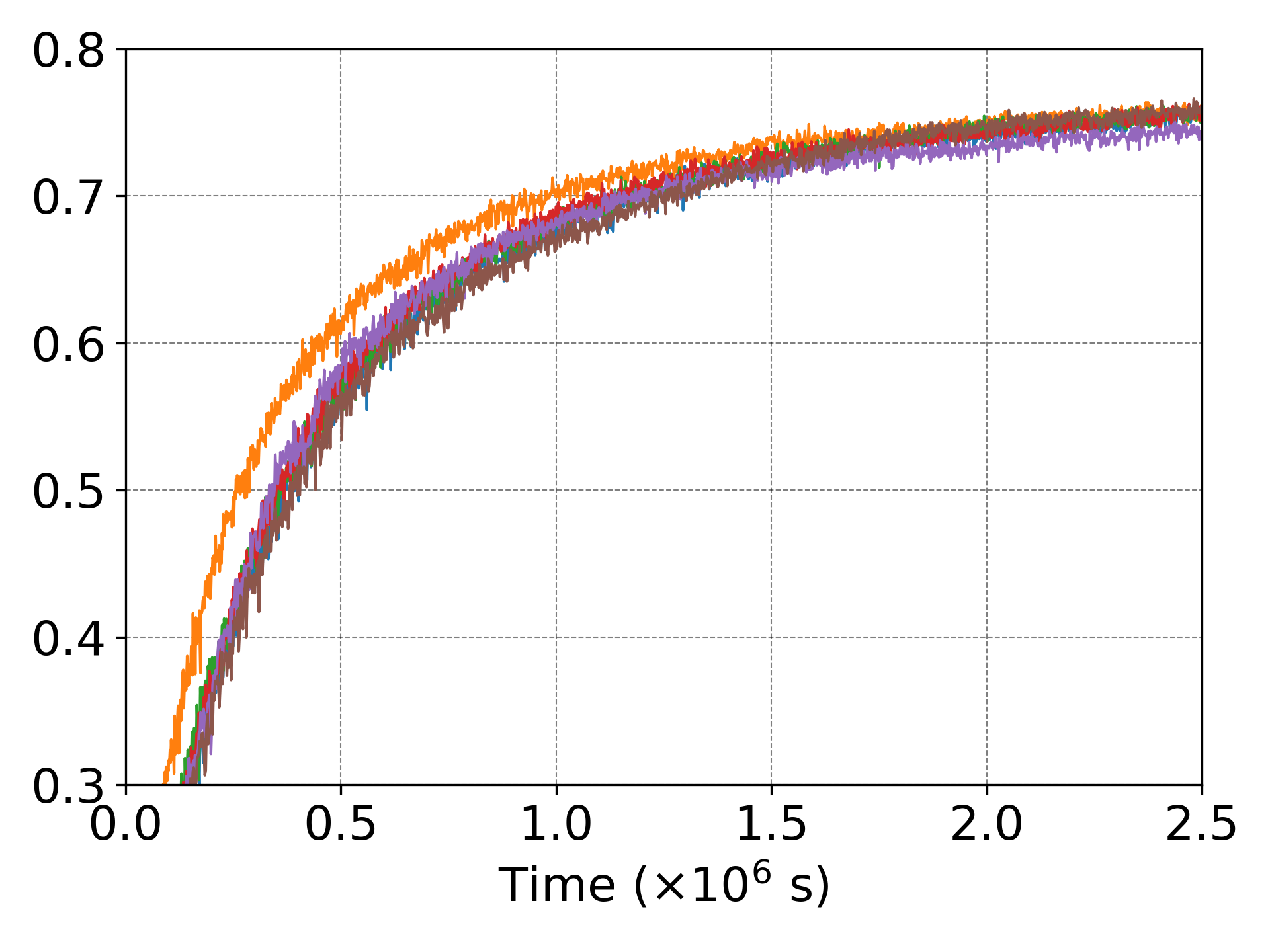}%
    \label{fig:imagenet_training_cs}
}
\subfigure[MobileNetV3-Small on CelebA]{
    \includegraphics[height=3.075cm]{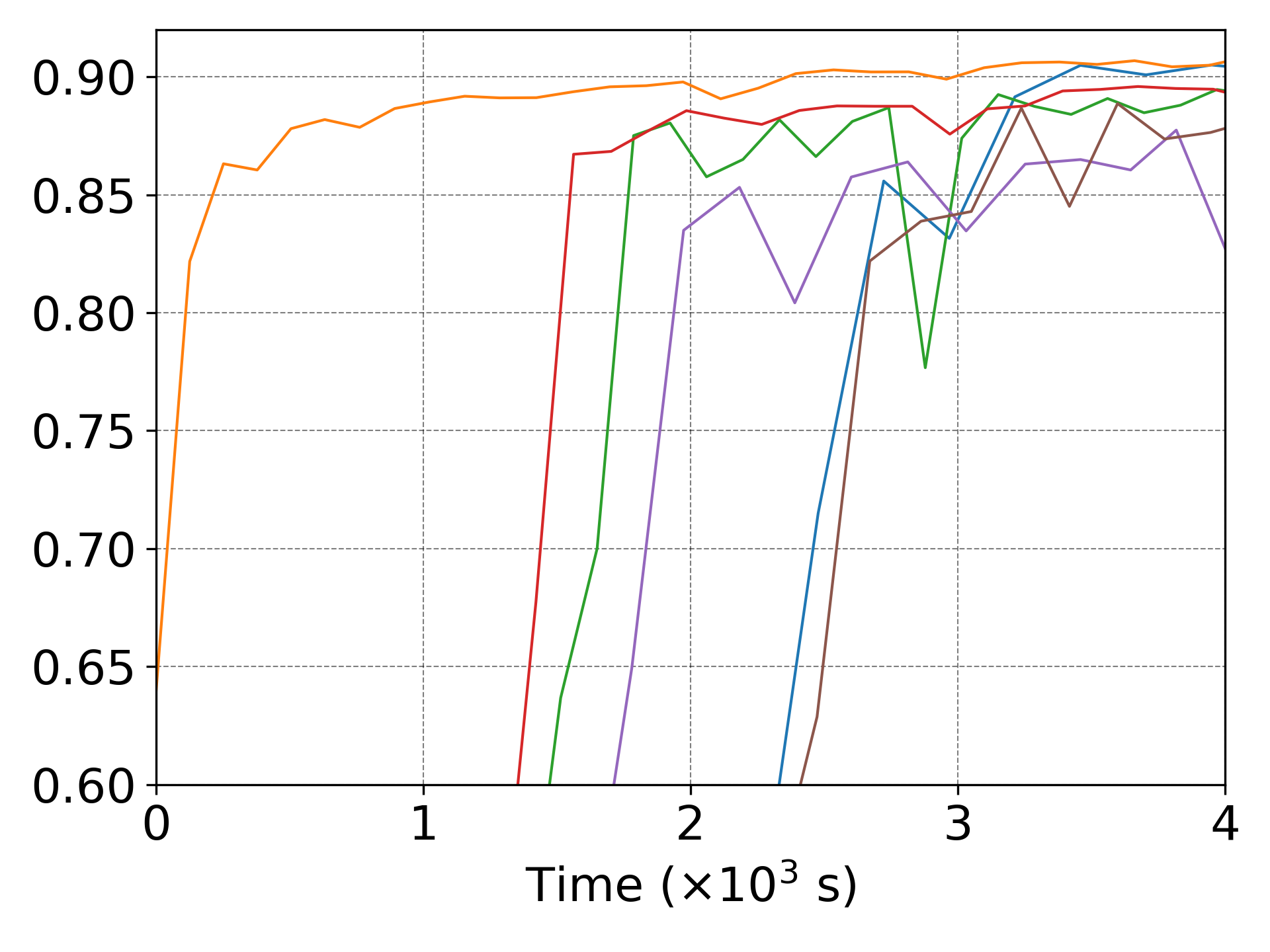}%
    \label{fig:celeba_training_cs}
}
\caption{Test accuracy vs. time results of four datasets (client selection).}
\label{fig:training_cs}
\end{figure*}

\begin{figure*}[t]
\centering
\subfigure{
\hspace*{2.5mm}\includegraphics[height=4.9mm]{figs/legend2.png}
}
\addtocounter{subfigure}{-1}
\vspace{-4.5mm}
\\
\subfigure[Conv-2 on FEMNIST]{
    \includegraphics[height=3.075cm]{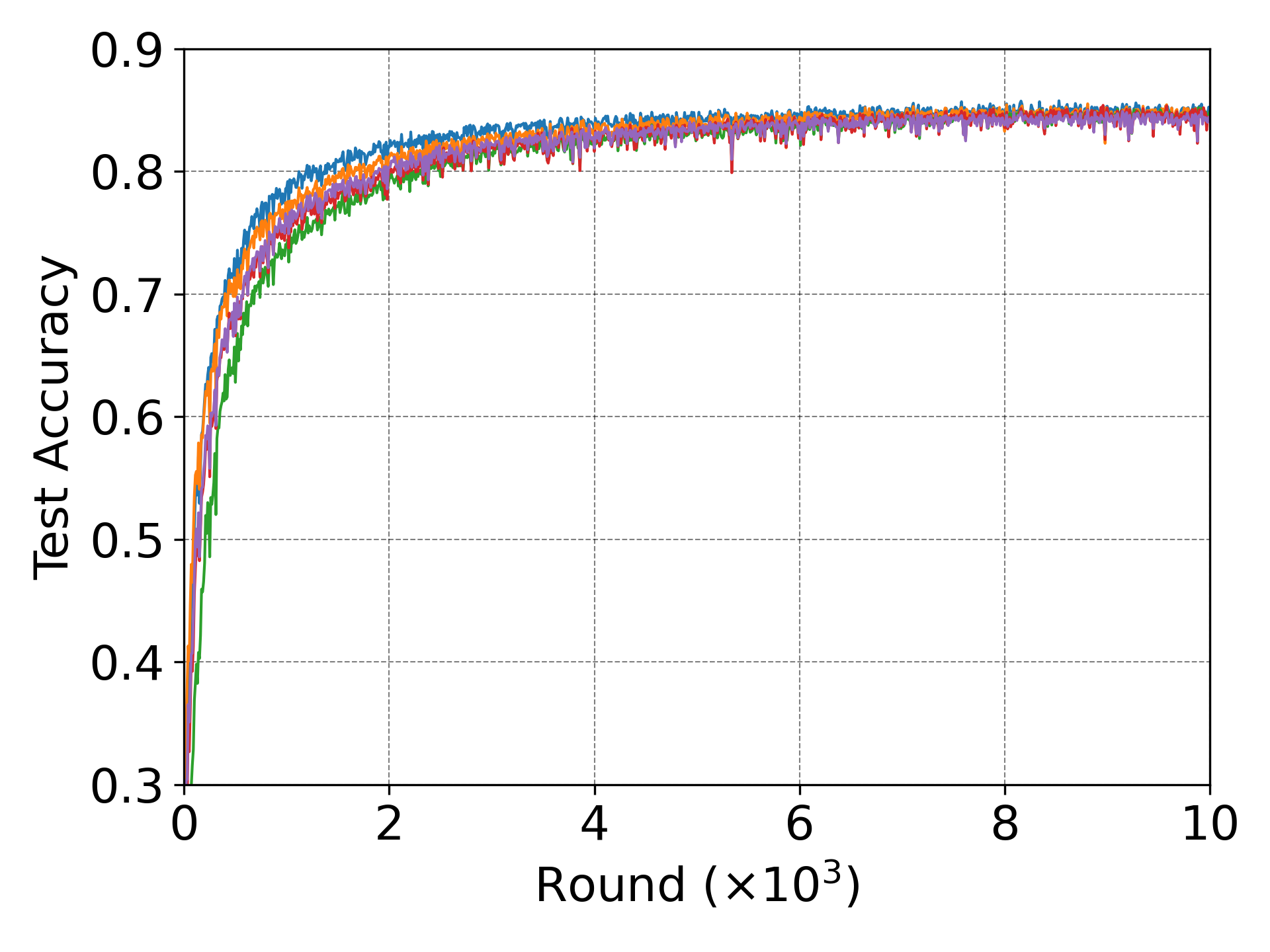}
    \label{fig:femnist_lt_cs}
}
\subfigure[VGG-11 on CIFAR-10]{
    \includegraphics[height=3.075cm]{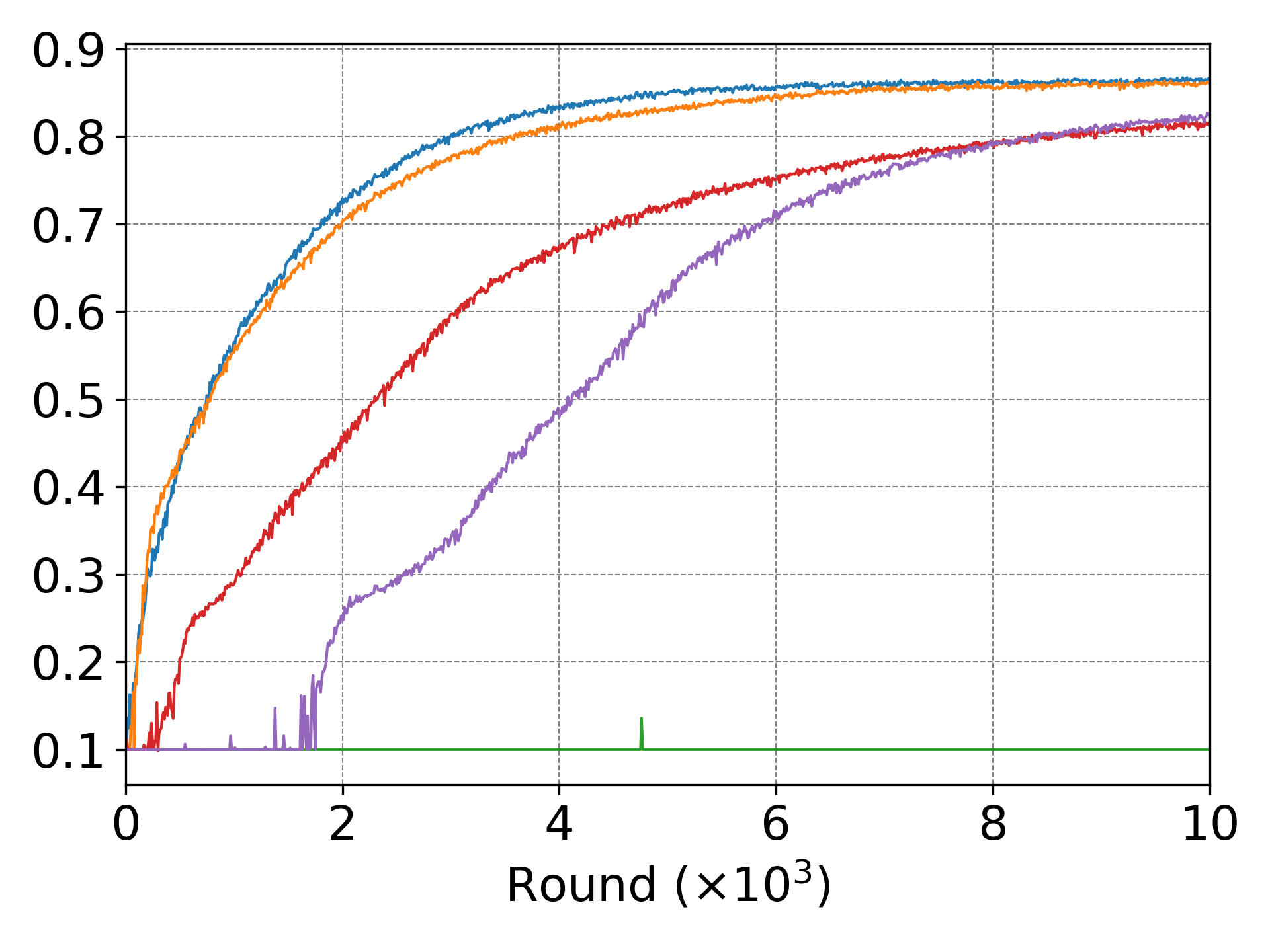}
    \label{fig:cifar10_lt_cs}
}
\subfigure[ResNet-18 on ImageNet-100]{
    \includegraphics[height=3.075cm]{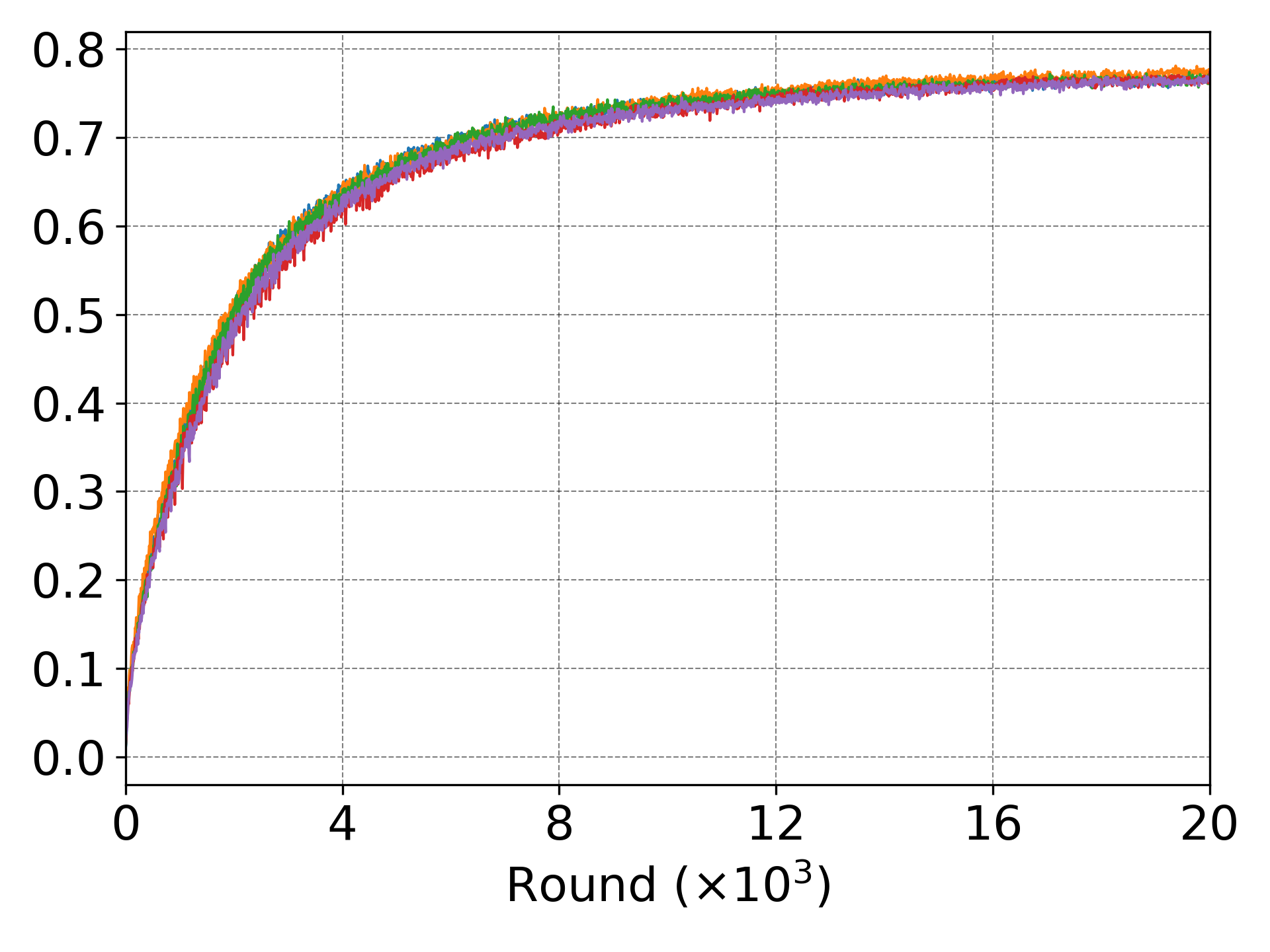}%
    \label{fig:imagenet_lt_cs}
}
\subfigure[MobileNetV3-Small on CelebA]{
    \includegraphics[height=3.075cm]{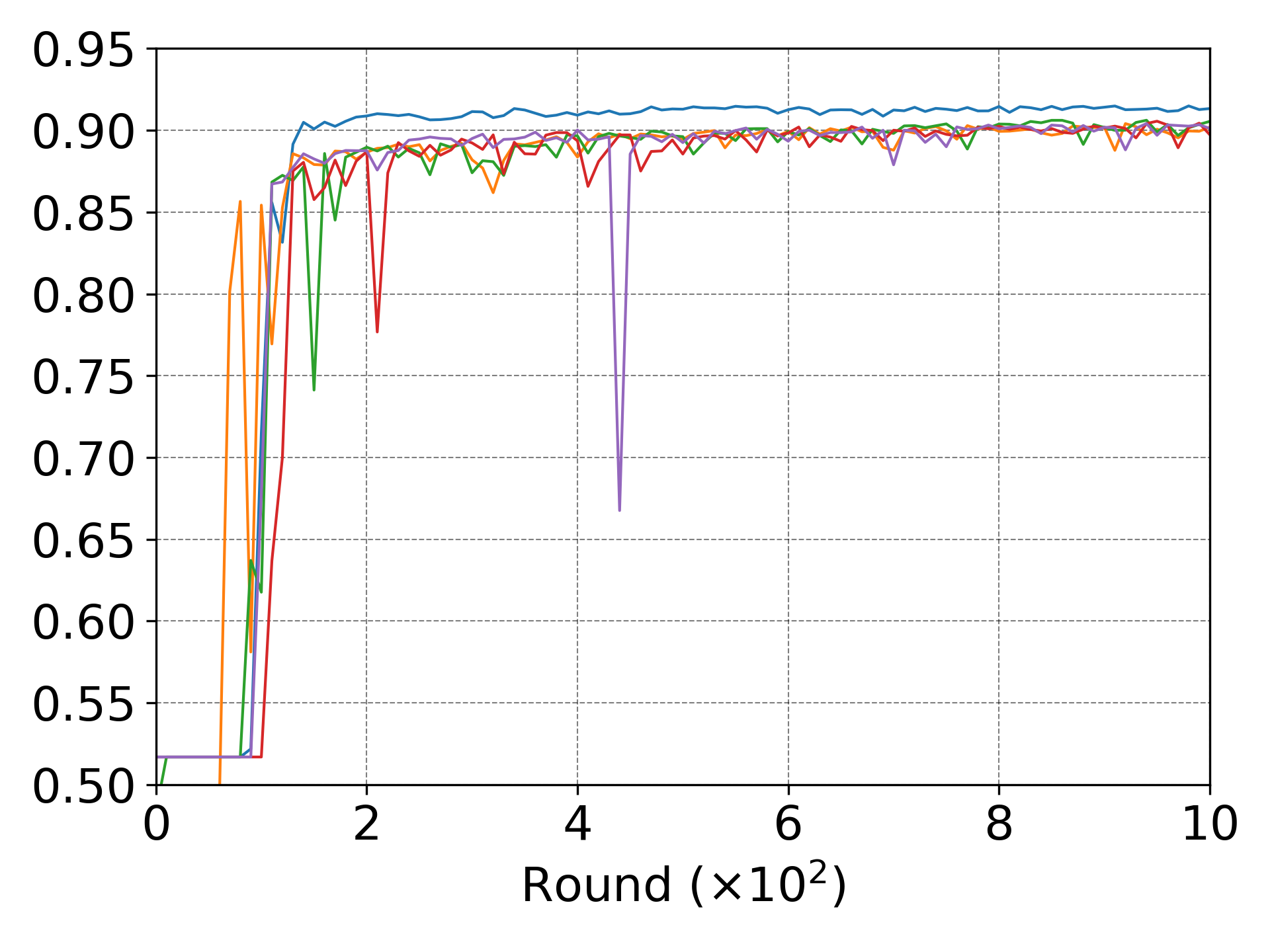}%
    \label{fig:celeba_lt_cs}
}
\caption{Lottery ticket results of four datasets (client selection).}
\label{fig:lottery_cs}
\end{figure*}

\begin{figure*}[t]
\centering
\subfigure{
\hspace*{2.5mm}\includegraphics[height=4.9mm]{figs/legend1.png}
}
\addtocounter{subfigure}{-1}
\vspace{-4.5mm}
\\
\subfigure[Conv-2 on FEMNIST]{
    \includegraphics[height=3.075cm]{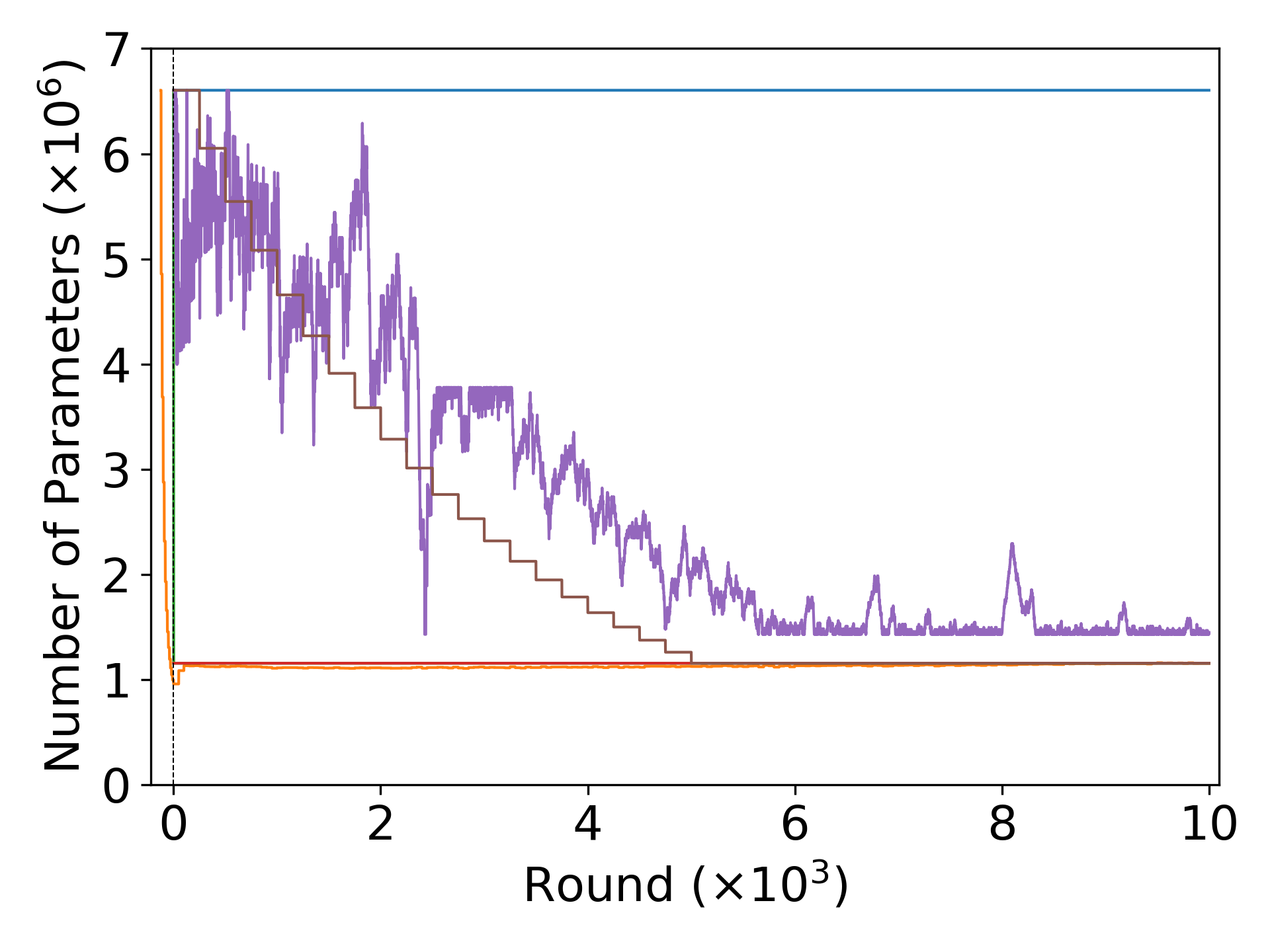}
    \label{fig:femnist_ms_cs}
}
\subfigure[VGG-11 on CIFAR-10]{
    \includegraphics[height=3.075cm]{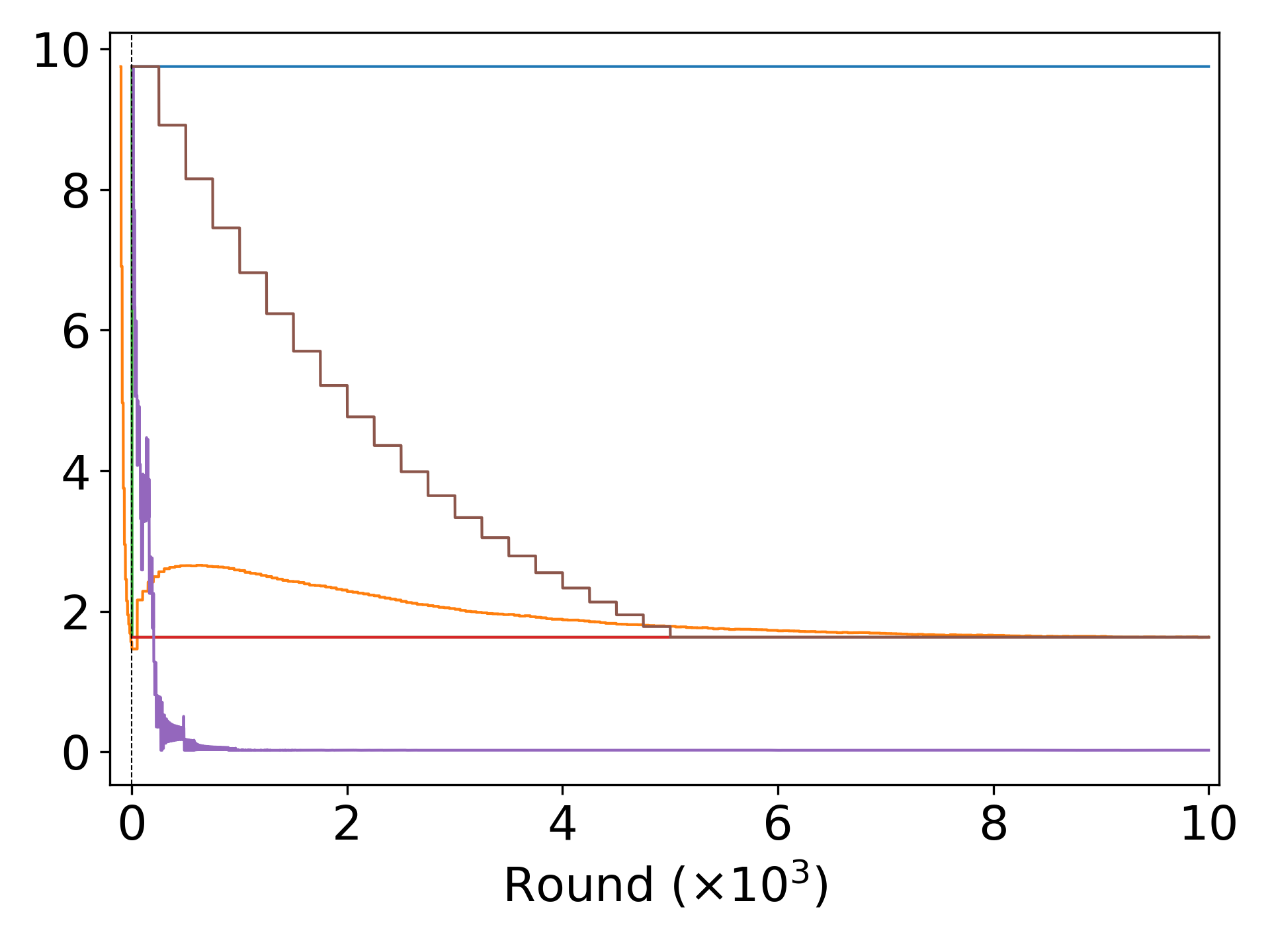}
    \label{fig:cifar10_ms_cs}
}
\subfigure[ResNet-18 on ImageNet-100]{
    \includegraphics[height=3.075cm]{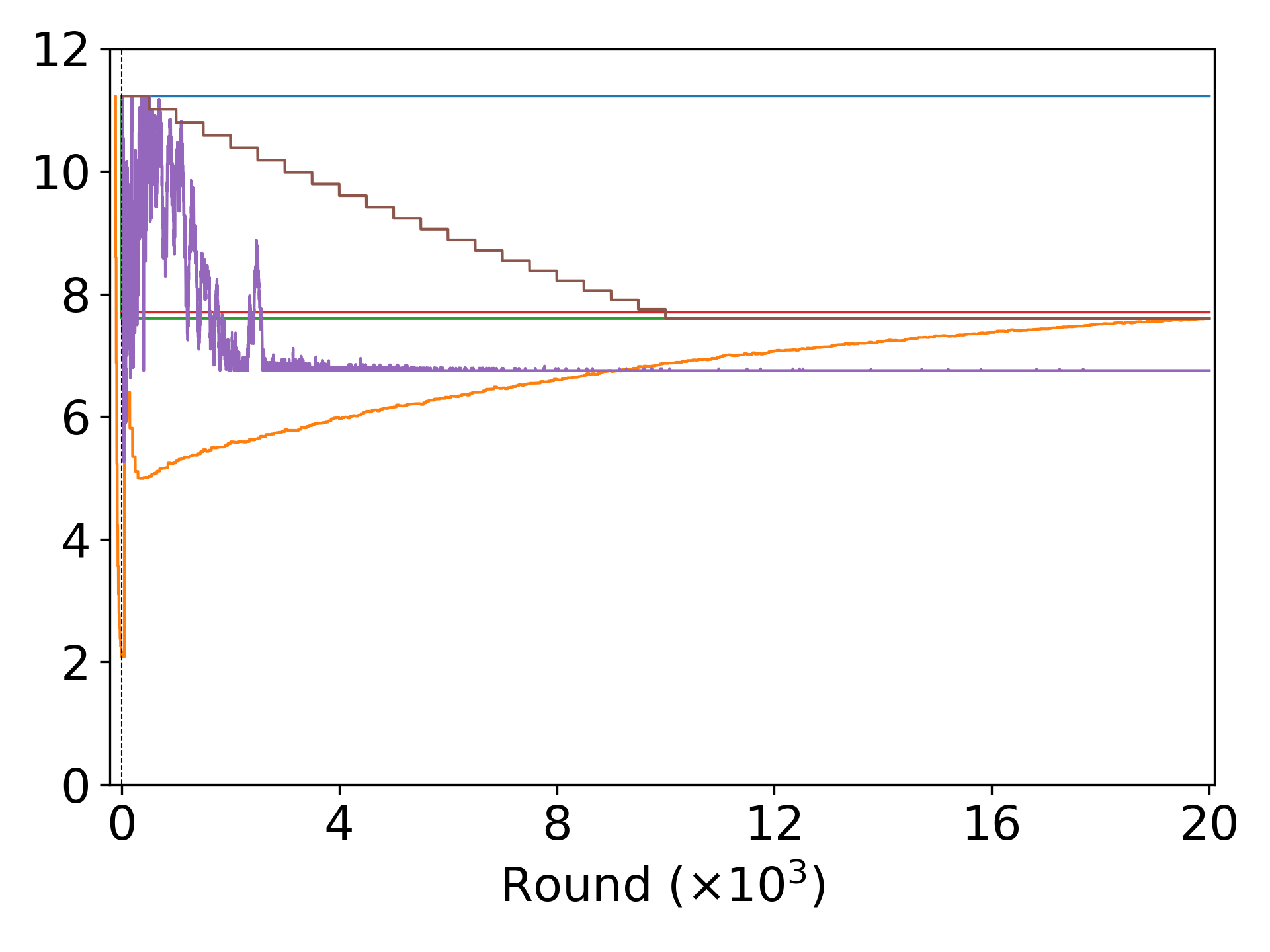}%
    \label{fig:imagenet_ms_cs}
}
\subfigure[MobileNetV3-Small on CelebA]{
    \includegraphics[height=3.075cm]{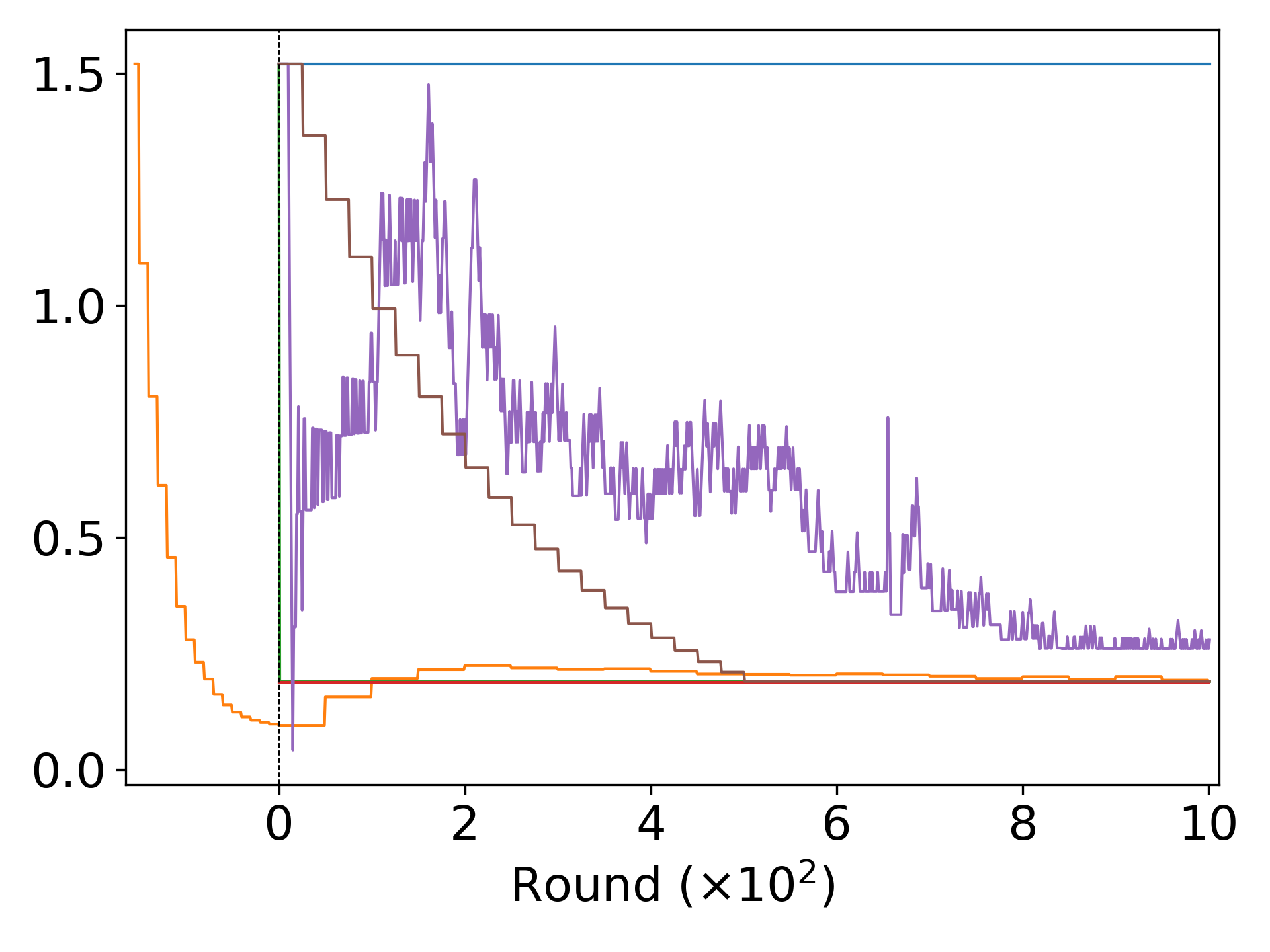}%
    \label{fig:celeba_ms_cs}
}
\caption{Number of parameters vs. round for four datasets (client selection).}
\label{fig:model_size_cs}
\end{figure*}

\subsubsection{Convergence Accuracy Results for All Experiments}\label{appendix:convergence_acc}
The convergence accuracies with/without client selection are shown in Table~\ref{tab:accuracy}. The results are taken from the test accuracies in the last five evaluations of each simulation. We see that the convergence accuracy of PruneFL is similar to that of conventional FL. Conventional FL  sometimes shows  a slight advantage because all methods run for the same number of \textit{rounds}, and full-sized models in conventional FL learn faster when the accuracy is measured in \textit{rounds} instead of time.

{\renewcommand{\arraystretch}{1.1}
\begin{table*}[t]
\caption{Average of the last $5$ measured accuracies (\%). C.S. stands for client selection.}
\label{tab:accuracy}
\centering
\footnotesize
\begin{tabular}{ccccccccc}
\hline
&\multicolumn{2}{c}{FEMNIST}& \multicolumn{2}{c}{CIFAR-10} & \multicolumn{2}{c}{ImageNet-100} & \multicolumn{2}{c}{CelebA} \\
\cline{2-3}
\cline{4-5}
\cline{6-7}
\cline{8-9}
&No C.S. & C.S. & No C.S. & C.S. & No C.S. & C.S. & No C.S. & C.S. \\
\hline \hline
\!\!\!\! \begin{tabular}[c]{@{}c@{}} Conventional \\FL\end{tabular} & $85.33 \pm 0.17$ & $84.61 \pm 0.62$ & $86.30 \pm 0.20$ & $86.51 \pm 0.12$ & $76.36 \pm 0.56$ & $76.62 \pm 0.34$ & $91.41 \pm 0.15$ & $91.33 \pm 0.17$\\ \hline

\begin{tabular}[c]{@{}c@{}} PruneFL\\(ours)\end{tabular} & $85.07 \pm 0.31$ & $83.90 \pm 0.48$ & $85.47 \pm 0.19$ & $85.50 \pm 0.18$ & $77.23 \pm 0.25$ & $76.07 \pm 0.31$ & $91.38 \pm 0.14$ & $91.2 \pm 0.17$\\ \hline

SNIP & $85.16 \pm 0.22$ & $84.05 \pm 0.70$ & $81.16 \pm 0.22$ & $81.44 \pm 0.13$ & $76.89 \pm 0.27$ & $76.76 \pm 0.60$ & $91.48 \pm 0.07$ & $89.69 \pm 0.75$\\ \hline

SynFlow &$84.77 \pm 0.17$ & $84.19 \pm 0.21$ &$82.41 \pm 0.42$ &$82.36 \pm 0.28$ &$76.66 \pm 0.28$ &$76.60 \pm 0.24$ &$91.24 \pm 0.23$ & $90.23 \pm 0.11$ \\ \hline

\begin{tabular}[c]{@{}c@{}} Online\\learning\end{tabular} & $85.31 \pm 0.37$ & $84.30 \pm 0.27$ & $10.00 \pm 0.00$ & $10.00 \pm 0.00$ & $75.32 \pm 0.18$ & $74.97 \pm 0.15$ & $90.39 \pm 0.13$ & $88.61 \pm 0.53$\\ \hline
\!\!\!\! 
\begin{tabular}[c]{@{}c@{}} Iterative\\pruning\end{tabular} & $84.87 \pm 0.22$ & $84.08 \pm 0.39$ & $84.45 \pm 0.14$ & $84.64 \pm 0.14$ & $76.18 \pm 0.30$ & $76.96 \pm 0.26$ & $90.14 \pm 0.37$ & $88.65 \pm 0.33$\\
\hline
\end{tabular}
\end{table*}
}

\end{document}